\numberwithin{equation}{section}
\newtheorem{remark}{Remark}[section]
\newtheorem{lemma}{Lemma}[section]
\newtheorem{theorem}{Theorem}[section]
\newtheorem{definition}{Definition}[section]
\newtheorem{proposition}{Proposition}[section]
\newtheorem{corollary}{Corollary}[section]
\def \E{\mathbb{E}}
\def \N{\mathbb{N}}
\def \P{\mathbb{P}}
\def \R{\mathbb{R}}
\def \Pc{\mathcal{P}}
\def \Xc{\mathcal{X}}
\def \Yc{\mathcal{Y}}
\def \eps{\varepsilon}
\newcommand{\mud}{\mu_{\operatorname{d}}}
\newcommand{\Div}{{\operatorname{div}}}
\title{Generative Modeling by Minimizing the Wasserstein-2 Loss}
\author{Yu-Jui Huang\thanks{
Department of Applied Mathematics, University of Colorado, Boulder, CO 80309-0526, USA, email: \texttt{yujui.huang@colorado.edu}. Partially supported by National Science Foundation (DMS-2109002).
}
\and
Zachariah Malik\thanks{
Department of Applied Mathematics, University of Colorado, Boulder, CO 80309-0526, USA, email: \texttt{zachariah.malik@colorado.edu}.
}
}
\begin{document}
\maketitle

\begin{abstract}
This paper develops a generative model by minimizing the second-order Wasserstein loss (the $W_2$ loss) through a {\it distribution-dependent} ordinary differential equation (ODE), whose dynamics involves the Kantorovich potential associated with the true data distribution and a current estimate of it. A main result shows that the time-marginal laws of the ODE form a gradient flow for the $W_2$ loss, which converges exponentially to the true data distribution. 
An Euler scheme for the ODE is proposed and it is shown to recover the gradient flow for the $W_2$ loss in the limit. An algorithm is designed by following the scheme and applying {\it persistent training}, which naturally fits our gradient-flow approach. In both low- and high-dimensional experiments, our algorithm 
outperforms Wasserstein generative adversarial networks by increasing the level of persistent training appropriately. 
\end{abstract}

\textbf{MSC (2020):} 
34A06, 
49Q22, 
68T01 
\smallskip

\textbf{Keywords:} generative modeling, distribution-dependent ODEs, gradient flows, nonlinear Fokker--Planck equations, persistent training, generative adversarial networks

\section{Introduction}
From observed data points, a generative model aims at {learning} the unknown data distribution $\mud$, a probability measure on $\R^d$, so as to {generate} new data points indistinguishable from the observed ones. The success of it demands a rigorous underlying theory.



In this paper, starting with any initial guess $\mu_0$ of $\mud$, we measure the discrepancy between $\mu_0$ and $\mud$ by the second-order Wasserstein distance (i.e., $W_2(\cdot,\cdot)$ in \eqref{Eq: Kantorovich Relaxation} below) and strive to minimize the discrepancy efficiently. As the map $\mu\mapsto W^2_2(\mu,\mud)$ is convex, it is natural to ask if this minimization (or, learning of $\mud$) can be done by {\it gradient descent}, the traditional wisdom for convex minimization. The crucial question is how we should define ``gradient'' in the present setting. 

Thanks to subdifferential calculus for probability measures, widely studied under the $W_2$ distance in Ambrosio et al.\ \cite{Ambrosio08}, 
the Fr\'{e}chet subdifferential of $\mu\mapsto J(\mu):= \frac12 W^2_2(\mu,\mud)$, 
denoted by $\partial J(\mu)$, always exists; see Definition~\ref{def:subdifferential} and the discussion below it. Hence, we can take the ``gradient of  $J$'' evaluated at $\mu$ to be an element of $\partial J(\mu)$ and the resulting gradient flow 
converges exponentially to $\mud$ under $W_2$ (Proposition~\ref{prop:GF exists}). What is {\it not} answered by the full-fledged $W_2$-theory is how this gradient flow can actually be computed, for $\mud$ to be found realistically; see Remark~\ref{rem:how to simulate?}. 

To materialize the gradient flow, we plan to simulate a gradient-descent ordinary differential equation (ODE) with distribution dependence. Let $Y$ be an $\R^d$-valued process satisfying
\begin{eqnarray}\label{Y inclusion}
\frac{dY_t}{dt} \in -\partial J(\mu^{Y_t}), \quad \mu^{Y_0}=\mu_0,
\end{eqnarray}
where $\mu^{Y_t}$ denotes the law of $Y_t$ at $t\ge 0$. This stipulates that $Y_t$ evolves instantaneously along a ``negative gradient'' of $J$, evaluated at the current distribution $\mu^{Y_t}$. 
Note that \eqref{Y inclusion} is an inclusion, but not an equation, as $\partial J(\mu^{Y_t})$ in general may contain multiple elements. 
Notably, for the special case where $\mu^{Y_t}$ is absolutely continuous with respect to (w.r.t.) the Lebesgue measure (written $\mu^{Y_t}\ll \mathcal L^d$), $\partial J(\mu^{Y_t})$ reduces to a singleton containing only $\nabla \phi_{\mu^{Y_t}}^{\mud}$ (Lemma~\ref{lem:subdifferential J}), where $\phi_{\mu^{Y_t}}^{\mud}:\R^d\to\R^d$ is a {\it Kantorovich potential} from $\mu^{Y_t}$ to $\mud$ (which, by definition, maximizes the dual formulation of $W_2(\mu^{Y_t},\mud)$). 
Hence, if ``$\mu^{Y_t}\ll \mathcal L^d$ for all $t\ge 0$'' holds, \eqref{Y inclusion} will take the concrete form
\begin{eqnarray}\label{Y}
dY_t = -\nabla \phi_{\mu^{Y_t}}^{\mud}(Y_t) dt,\quad \mu^{Y_0} = \mu_0. 
\end{eqnarray}
This is a {\it distribution-dependent} ODE. At time 0, the law of $Y_0$ is given by $\mu_0$, an initial distribution satisfying $\mu_0\ll \mathcal L^d$. This initial randomness trickles through the ODE dynamics in \eqref{Y}, such that $Y_t$ remains an $\R^d$-valued random variable for each $t>0$. How the ODE evolves is determined jointly by $\phi_{\mu^{Y_t}}^{\mud}$ (depending on the current law $\mu^{Y_t}$ and $\mud$) and the actual realization of $Y_t$. 

Solving ODE \eqref{Y} is nontrivial. Compared with typical stochastic differential equations (SDEs) with distribution dependence (i.e., McKean-Vlasov SDEs), how \eqref{Y} depends on distributions---via the Kantorovich potential $\phi_{\mu}^{\mud}$---is quite unusual. As $\phi_{\mu}^{\mud}$ is only known to exist generally (without any tractable formula), the regularity of $(y,\mu)\mapsto \nabla \phi_{\mu}^{\mud}(y)$ is largely obscure, which prevents direct application of standard results for McKean-Vlasov SDEs.

In view of this, we will {\it not} deal with ODE \eqref{Y} directly, but focus on the nonlinear Fokker--Planck equation associated with it. By finding a solution $\bm\nu_t$ to the Fokker--Planck equation, we can in turn construct a solution $Y$ to \eqref{Y}, with $\mu^{Y_t}=\bm\nu_t$, by Trevisan's superposition principle \cite{Trevisan16}. This strategy, introduced in Barbu and R\"{o}ckner \cite{BR20}, was recently used in Huang and Zhang \cite{Huang23} for generative modeling under the Jensen-Shannon divergence (JSD) and Huang and Malik \cite{HM25} for non-convex optimization. 

Our first observation is that if the gradient flow $\bm\mu=\{\bm\mu_t\}_{t\ge 0}$ for $J(\cdot):= \frac12 W^2_2(\cdot,\mud)$ satisfies ``$\bm\mu_t\ll \mathcal L^d$ for all $t\ge 0$,'' it will automatically solve the nonlinear Fokker--Planck equation (i.e., \eqref{FP} below). That is, the solvability of \eqref{FP} hinges on the {claim}: ``{\it if we start with $\mu_0\ll\mathcal L^d$, the resulting gradient flow $\bm\mu$ for $J$ must fulfill $\bm\mu_t\ll \mathcal L^d$ for all $t> 0$}.'' This will be proved in a constructive, yet indirect, way. First, we  construct a curve $\bm\beta = \{\bm\beta_s\}_{s\in [0,1]}$ that moves $\bm\beta_0:=\mu_0$ to $\bm\beta_1:=\mud$ at a constant speed; see \eqref{Def: Generalized Geodesic between mu0 and mud} below. By the definition of $\bm\beta$, Lemma~\ref{lem:has densities} finds that ``$\mu_0\ll\mathcal L^d$'' entails ``$\bm\beta_s\ll\mathcal L^d$ for all $s\in [0,1)$'' (excluding $s=1$). By a change of speed from constant to exponentially decreasing, $\{\bm\beta_s\}_{s\in [0,1)}$ is rescaled into $\bm\mu^* = \{\bm\mu^*_t\}_{t\ge 0}$ (i.e., \eqref{Def: mu t} below), so that  $\bm\mu^*_t\ll\mathcal L^d$ for all $t\ge 0$. As $\bm\mu^*$ is by definition a time-changed constant-speed geodesic, the dynamics of $t\mapsto J(\bm\mu^*_t)$ can be decomposed and explicitly computed (Lemma~\ref{lem:CSG}), which reveals that $J(\bm\mu^*_t)$ always decreases in time with a largest possible slope. This ``maximal slope'' behavior in turn implies that $\bm\mu^*$ simply coincides with $\bm\mu$, the gradient flow for $J$ (Theorem~\ref{thm:mu is GF}). The {claim} is thus proved and the explicitly constructed $\bm\mu^*$ solves the nonlinear Fokker--Planck equation \eqref{FP}; 
see Corollary~\ref{coro:sol. to FP}. 

By substituting $\bm\mu^*_t$ for $\mu^{Y_t}$ in \eqref{Y}, we obtain a standard ODE without distribution dependence, i.e., $dY_t = f(t, Y_t) dt$ with $f(t,y) := -\nabla\phi_{\bm\mu^*_t}^{\mud}(y)$. To construct a solution to \eqref{Y}, we aim at finding a process $Y$ such that (i) $t\mapsto Y_t$ satisfies the above standard ODE, and (ii) the law of $Y_t$ coincides with $\bm\mu^*_t$ (i.e., $\mu^{Y_t} = \bm\mu^*_t$) for all $t \ge 0$. Thanks to Trevisan's superposition principle \cite{Trevisan16}, we can choose a probability measure $\P$ on the canonical path space, so that the canonical process $Y_t(\omega) := \omega(t)$ fulfills (i) and (ii) above. This gives a solution $Y$ to \eqref{Y} (Proposition~\ref{thm:sol. to ODE exists}) and it is in fact unique up to time-marginal laws under suitable regularity conditions (Proposition~\ref{prop:unique mu^Y_t}). As $\bm\mu^*$ coincides with the gradient flow for $J(\cdot):= \frac12 W^2_2(\cdot,\mud)$, we conclude that $\mu^{Y_t}=\bm\mu^*_t\to \mud$ exponentially under $W_2$ as $t\to\infty$; see Theorem~\ref{thm:main}, the main theoretic result of this paper. 

Theorem~\ref{thm:main} suggests that we can uncover $\mud$ by simulating ODE \eqref{Y}. To facilitate the actual implementation, we propose a forward Euler scheme (i.e., \eqref{Euler} below), which, for any fixed time step $0<\eps <1$, simulates a sequence of random variables $\{Y^\eps_n\}_{n\in\N\cup\{0\}}$ at time points $\{n\eps\}_{n\in\N\cup\{0\}}$. Interestingly, under our scheme, how the distribution $\mu^{Y^\eps_{n}}$ is transformed into $\mu^{Y^\eps_{n+1}}$ is optimal in the sense of $W_2$ (Lemma~\ref{lem:mu_n is optimal}). This crucially implies that the laws of $\{Y^\eps_n\}_{n\in\N\cup\{0\}}$ {\it all} lie on the curve $\bm\mu^*$, which consists of the time-marginal laws of ODE \eqref{Y} (Proposition~\ref{prop:mu_n on beta}). As $\eps\downarrow 0$, Theorem~\ref{Th: Convergence of Euler discretization} shows that the laws of $\{Y^\eps_n\}_{n\in\N\cup\{0\}}$ gradually cover the entire curve $\bm\mu^*$. 
That is, our Euler scheme does correctly recover the evolving distributions induced by ODE \eqref{Y} in the limit. 

Based on this Euler scheme, Algorithm~\ref{Alg: Simulate W2 GF ODE} (called {W2-FE}, where {FE} means ``forward Euler'') is designed to simulate ODE \eqref{Y}. 
In particular, a ``generator'' $G_\theta:\R^d\to\R^d$, modeled by a deep neural network, is updated at each time point to keep track of the changing distributions along the (discretized) ODE. Remarkably, in updating $G_\theta$, we reuse the same minibatch of points newly generated by the ODE in $K\in\N$ consecutive iterations of stochastic gradient descent (SGD). This is in line with {\it persistent training} in Fischetti et al.\ \cite{Fischetti18}, distinct from standard SGD implementation that takes $K=1$ by default. Note that persistent training fits our ODE approach naturally: as the goal of updating $G_\theta$ is to learn the new distribution induced by the ODE at the next time point, keeping the newly generated points unchanged in SGD iterations allows $G_\theta$ to more accurately represent the new distribution, so that the ODE can be 
 more closely followed. Intriguingly, our algorithm covers the Wasserstein-2 generative adversarial network (W2-GAN) in \cite{Leygonie19} as a special case when $K=1$ (i.e., no persistent training); see Proposition~\ref{prop:equivalence}. 


We implement our algorithm W2-FE with varying persistency levels (i.e., $K$ values) and compare its performance with that of the refined Wasserstein GAN (WGAN) algorithm in \cite{Petzka18} (called W1-LP), one of the most well-performing and stable WGAN algorithms. In both low- and high-dimensional experiments, by increasing the persistency level appropriately, W2-FE converges significantly faster than W1-LP and achieves similar or better outcomes; see Figures~\ref{fig: 2D Qualitative Results}, 
\ref{fig: 2D Quantitative Results}, and \ref{fig: High-D USPS-MNIST 1-NN accuracy}.

The rest of the paper is organized as follows. Section~\ref{subsec:related} reviews some related generative models. Section~\ref{subsec:notation} collects fundamental notation. Section~\ref{sec:preliminaries} 
motivates the gradient-descent ODE to be studied. In Section~\ref{sec:GF}, the gradient flow theory under $W_2$ is briefly reviewed and tailored to the needs of our study. Section~\ref{sec:sol. to FP} finds a solution to the nonlinear Fokker--Planck equation associated with the gradient-descent ODE. Based on this, Section~\ref{sec:sol. to ODE} builds a solution to the ODE, whose time-marginal laws converge to $\mud$ exponentially. A discretized scheme for the ODE is proposed and analyzed in detail in Section~\ref{subsec:Euler}. Section~\ref{sec:W2-FE} designs an algorithm based on the discretized scheme. Its performance is illustrated by numerical experiments in Section~\ref{sec:examples}. 


\subsection{Related Generative Models}\label{subsec:related}
To gradually change an initial distribution $\mu_0$ into a target distribution $\mud$, GANs impose a min-max game between a discriminator and a generator: given the generator's strategy, the discriminator chooses a strategy to maximize an objective; in response to the discriminator's new strategy, the generator updates its strategy to minimize the same objective. Through the two players' recursive optimization, GANs iteratively reduce the discrepancy between $\mu_0$ and $\mud$. There are many  ways to measure the discrepancy (which accordingly yield different objectives); e.g., the vanilla GAN \cite{Goodfellow14} uses JSD, $f$-GAN \cite{f-GAN} uses $f$-divergence, WGAN \cite{Arjovsky17} 
uses the first-order Wasserstein distance (denoted by $W_1$), W2-GAN \cite{Leygonie19} 
uses the $W_2$ distance (the same as this paper), among others. 

A more direct way to reduce the discrepancy between $\mu_0$ and $\mud$ is to move along a gradient flow, i.e., reduce the discrepancy along its negative ``gradient'' (as long as the ``gradient'' is well-defined). 
Interestingly, the recursive min-max optimization of GANs is in fact a (coarse) approximation of an associated gradient flow, as shown in \cite{Huang23} for the vanilla GAN and in this paper for W2-GAN. 
This suggests that a finer approximation of the gradient flow (e.g., achieved by persistent training) will outperform the standard GAN training, which is confirmed numerically in Section~\ref{sec:examples} (where suitably enlarged persistency level improves training performance). 

As WGAN \cite{Arjovsky17} is a popular version of GANs known for its stability, it is natural to ask: (i) What is the underlying gradient flow WGAN attempts to approximate? (ii) What is a better approximation of the gradient flow (which should outperform WGAN)? Recall that under WGAN, the discrepancy between distributions is measured by $W_1$.   
While subdifferential calculus for probability measures is full-fledged under the $p^{th}$-order Wasserstein distance $W_p$ for all $p > 1$ (see e.g., \cite[Chapter 10]{Ambrosio08}), the same construction breaks down when $p = 1$. In particular, the ``Fr\'{e}chet differential'' (\cite[Corollary 10.2.7]{Ambrosio08}) and ``Wasserstein gradient'' (\cite[Definition 5.62]{CD-book-18-I}) are no longer well-defined for $p = 1$. It is then not even clear how ``gradient'' should be defined under $W_1$, let alone answering (i) and (ii). As it stands, WGAN recursively minimizes and maximizes an objective induced by $W_1$---whether it relates to any gradient flow is an interesting question for future research. This is in contrast to our algorithm W2-FE, which directly computes the well-defined gradient flow under $W_2$.

There are well-known methods (e.g., \cite{Courty2017}, \cite{Seguy18}, \cite{Makkuva2019}, \cite{Rout2022}, \cite{Hamri2022}, among others) that directly learn the optimal transport map from $\mu_0$ to $\mud$. They aim to change $\mu_0$ into $\mud$ {\it in one shot}, using the single optimal transport map learned. This is distinct from our gradient-flow approach that changes $\mu_0$ into $\mud$ {\it iteratively}, using numerous intermediate optimal transport maps estimated recursively. It is shown numerically in Section~\ref{sec:examples} that our gradient-flow approach can produce better learning results than the ``one-shot'' methods; see Table~\ref{tab: DA Comparison}.


\subsection{Notation}\label{subsec:notation}
For any Polish space $\mathcal X$, let $\mathcal B(\mathcal X)$ be the Borel $\sigma$-algebra of $\Xc$ and $\mathcal{P}(\mathcal{X})$ be the set of all probability measures on $(\mathcal{X},\mathcal B(\Xc))$. Given two Polish spaces $\mathcal X$ and $\mathcal Y$, $\mu\in\Pc(\Xc)$, and a Borel $f:\mathcal X\to\mathcal Y$, we define $f_\#\mu\in\Pc(\Yc)$, called the pushforward of $\mu$ through $f$, by
$
f_\#\mu(B) := \mu(f^{-1}(B))$  for $B\in \mathcal B(\Yc)$. 
On the product space $\Xc\times\Yc$, consider the projection operators 
$
\pi^1(x,y) := x$ and $\pi^2(x,y) := y$ for all $(x,y)\in\Xc\times\Yc$.
Given $\mu\in\Pc(\Xc)$ and $\nu\in\Pc(\Yc)$, a Borel $\mathbf{t}:\Xc\to\Yc$ is called a transport map from $\mu$ to $\nu$ if $\mathbf{t}_\#\mu=\nu$. Also, a probability $\gamma\in\Pc(\Xc\times\Yc)$ is called a transport plan from $\mu$ to $\nu$ if  $\pi^1_\#\gamma =\mu$ and $\pi^2_\#\gamma = \nu$. Let $\Gamma(\mu,\nu)$ denote the set of all transport plans from $\mu$ to $\nu$. 

Fix any $d\in\N$. Let $\mathbf i$ denote the identity map on $\R^d$ (i.e., $\mathbf i(y):=y$ for all $y\in\R^d$) and $\mathcal L^d$ denote the Lebesgue measure on $\R^d$. For any measure $\mu$ on $\R^d$, we write $\mu\ll\mathcal L^d$ if it is absolutely continuous w.r.t.\ $\mathcal L^d$. Given $p\ge 1$ and $S\subseteq \R^d$, let $L^p(S,\mu)$ be the set of all $f:\R^d\to\R$ with $\int_{S} |f(y)|^p d\mu(y)<\infty$. For the case $\mu=\mathcal L^d$, we will simply write $L^p(S)$ for $L^p(S,\mu)$. 


\section{Preliminaries}\label{sec:preliminaries}
Fix $d\in \N$ and denote by $\mathcal{P}_{2}(\mathbb{R}^{d})$ the set of elements in $\mathcal{P}(\mathbb{R}^{d})$ with finite second moments, i.e., 
\begin{eqnarray*}
    \mathcal{P}_{2}(\mathbb{R}^{d}) := \bigg\{ \mu \in \mathcal{P}(\mathbb{R}^{d}) : \int_{\R^d} |y|^2 d\mu(y) < \infty \bigg\}. 
\end{eqnarray*}
The second-order Wasserstein distance, a metric on $\Pc_2(\R^d)$, is defined by
\begin{eqnarray} \label{Eq: Kantorovich Relaxation}
    W_{2}(\mu, \nu) := \inf_{\gamma \in \Gamma(\mu,\nu)} \left( \int_{\mathbb{R}^{d} \times \mathbb{R}^{d}} |x - y|^{2} \, d\gamma(x, y) \right)^{1/2},\quad \forall \mu,\nu\in \mathcal{P}_{2}(\mathbb{R}^{d}). 
\end{eqnarray}
For $\{\mu_n\}_{n\in\N}$ and $\mu$ in $\Pc_2(\R^d)$ with $W_2(\mu_n,\mu)\to 0$, we will often write $\mu_n\to \mu$ for brevity.  

To state the duality formula for the $W_{2}$ distance, we set $c(x,y):= \frac12|x-y|^2$ for $x,y\in\R^d$ and define the $c$-transform of any $\phi:\R^d\to\R$ by $\phi^c(y) := \inf_{x\in\R^d} \{c(x,y)-\phi(x)\}$, $y\in\R^d.$ 
Also, we say $\phi$ is $c$-concave if $\phi=\psi^c$ for some $\psi:\R^d\to\R$. By \cite[Theorem 5.10 (i)]{Villani09}, the $W_2$ distance fulfills 
\begin{eqnarray} \label{Eq: W2 Duality Formula}
    W_{2}(\mu, \nu) = \sup \left\{\int_{\R^d} \phi(x) \, d\mu(x) + \int_{\R^d} \phi^c(y) \, d\nu(y): \phi \in L^{1}(\R^d,\mu)\ \hbox{is $c$-concave} \right\}. 
\end{eqnarray}
Thanks to $|x-y|^2\le 2(|x|^2+|y|^2)$ for $x,y\in\R^d$, \cite[Theorem 5.10 (iii)]{Villani09} asserts that a minimizer of  \eqref{Eq: Kantorovich Relaxation} and a maximizer of \eqref{Eq: W2 Duality Formula} always exist.

\begin{definition}\label{def:Kantorovich potential}
Given $\mu,\nu\in\Pc_2(\R^d)$, a $\gamma\in\Gamma(\mu,\nu)$ that minimizes \eqref{Eq: Kantorovich Relaxation} is called an optimal transport plan from $\mu$ to $\nu$. We denote by $\Gamma_0(\mu,\nu)$ the set of all such plans. A $c$-concave $\phi\in L^1(\R^d,\mu)$ that maximizes \eqref{Eq: W2 Duality Formula} is called a Kantorovich potential from $\mu$ to $\nu$ and denoted by $\phi_\mu^\nu$. 
\end{definition}

\begin{remark}\label{rem:nabla u}
The $c$-concavity of $\phi_\mu^\nu$ implies that $f(x):=\frac12|x|^2-\phi_\mu^\nu(x)$, $x\in\R^d$, is convex; see \cite[Section 6.2.3]{Ambrosio08}. It follows that $\nabla f(x)$ (and accordingly $\nabla \phi_\mu^\nu(x)$) exists for $\mathcal L^d$-a.e.\  $x\in\R^d$.   
 \end{remark}

\begin{remark}\label{rem:mu is regular}
For any $\mu,\nu\in\Pc_2(\R^d)$, suppose additionally that $\mu$ belongs to
\begin{eqnarray*}
\Pc_2^r(\R^d) := \{\mu\in\Pc_2(\R^d) : \mu\ll \mathcal L^d\}. 
\end{eqnarray*}
Then, by \cite[Theorem 6.2.4 and Remark 6.2.5 a)]{Ambrosio08}, $\Gamma_0(\mu,\nu)$ is a singleton and the unique optimal transport plan $\gamma\in\Pc(\R^d\times\R^d)$ takes the form
\begin{eqnarray}\label{optimal TP}
\gamma = (\mathbf{i}\times\mathbf{t}_{\mu}^{\nu})_\#\mu
\end{eqnarray}
for some transport map $\mathbf{t}_{\mu}^{\nu}:\R^d\to\R^d$ from $\mu$ to $\nu$, and the map $\mathbf{t}_{\mu}^{\nu}$ can be expressed as
\begin{eqnarray}\label{t and u}
\mathbf{t}_{\mu}^{\nu}(x) = x - \nabla \phi_{\mu}^\nu(x)\quad \hbox{for $\mu$-a.e.\ $x\in\R^d$},
\end{eqnarray}
for any Kantorovich potential $\phi_\mu^\nu$. Note that $\nabla \phi_{\mu}^\nu$ is well-defined $\mu$-a.e.\ by Remark~\ref{rem:nabla u} and $\mu\ll\mathcal L^d$. 
 \end{remark}
 
\begin{remark}\label{rem:optimal TM}
We will call $\mathbf{t}_{\mu}^{\nu}:\R^d\to\R^d$ in \eqref{optimal TP} an optimal transport map from $\mu$ to $\nu$. Indeed, as $\gamma = (\mathbf{i}\times\mathbf{t}_{\mu}^{\nu})_\#\mu$ is an optimal transport plan from $\mu$ to $\nu$, we deduce from \eqref{Eq: Kantorovich Relaxation} that 
\begin{eqnarray*}
W_2(\mu,\nu) = \left(\int_{\mathbb{R}^{d}} |\mathbf{t}_\mu^\nu(x) - x|^{2} \, d\mu(x) \right)^{1/2} = \inf_{\mathbf{t}:\R^d\to\R^d, \mathbf{t}_\#\mu=\nu}\left(\int_{\mathbb{R}^{d}} |\mathbf{t}(x) - x|^{2} \, d\mu(x) \right)^{1/2} .
\end{eqnarray*}
\end{remark}

\subsection{Problem Formulation}
Let $\mud\in\Pc_2(\R^d)$ denote the (unknown) data distribution. Starting with an arbitrary initial guess $\mu_0\in\Pc_2(\R^d)$ of $\mud$, we aim at solving the problem
\begin{eqnarray}\label{to solve}
\min_{\mu\in\Pc_2(\R^d)} W_2^2(\mu,\mud)
\end{eqnarray}
efficiently. By \eqref{Eq: Kantorovich Relaxation}, it can be checked directly that $\mu\mapsto W^2_2(\mu,\mud)$ is convex on $\Pc_2(\R^d)$. Hence, it is natural to ask if \eqref{to solve} can be solved simply by {\it gradient descent}. 
The challenge here is how we should define the geadient of a function on $\Pc_2(\R^d)$. By subdifferential calculus in $\Pc_2(\R^d)$ developed by \cite{Ambrosio08}, for a general $G:\Pc_2(\R^d)\to\R$, we can take the ``gradient of $G$'' to be an element of its Fr\'{e}chet subdifferential, defined below following (10.3.12)-(10.3.13) and Definition 10.3.1 in \cite{Ambrosio08}. 

\begin{definition}\label{def:subdifferential}
Given $G:\Pc_2(\R^d)\to\R$ and $\mu\in\Pc_2(\R^d)$, the Fr\'{e}chet subdifferential of $G$ at $\mu$, denoted by $\partial G(\mu)$, is the collection of all $\xi\in L^2(\R^d,\mu)$ such that  
\begin{eqnarray}\label{subdifferential}
    G(\nu) - G(\mu) \geq \inf_{\gamma\in\Gamma_0(\mu,\nu)}\int_{\mathbb{R}^{d}} \xi(x) \cdot (y - x) \, d\gamma(x,y) + o(W_{2}(\mu, \nu))\quad \forall\nu \in \mathcal{P}_{2}(\mathbb{R}^{d}). 
\end{eqnarray}
\end{definition}

For the function $J:\Pc_2(\R^d)\to\R$ defined by
\begin{eqnarray}\label{J}
J(\mu) := \frac12 W^2_2(\mu,\mud), 
\end{eqnarray}
the proof of \cite[Theorem 10.4.12]{Ambrosio08}, particularly (10.4.53) therein, shows that $\xi\in\partial J(\mu)$ generally exists. The construction of $\xi$, however, relies on an optimal transport plan $\gamma\in \Gamma_0(\mu,\mud)$, which is not unique and does not admit concrete characterizations in general. Fortunately, for the special case where $\mu\in\Pc^r_2(\R^d)$, $\partial J(\mu)$ becomes much more tractable. 

\begin{lemma}\label{lem:subdifferential J}
At any $\mu\in\Pc_2^r(\R^d)$, the Fr\'{e}chet subdifferential of $J:\Pc_2(\R^d)\to\R$ in \eqref{J} is a singleton, i.e.,   
$\partial J (\mu) = \{-(\mathbf{t}_{\mu}^{\mud}-\mathbf i)\} = \{\nabla \phi_\mu^{\mud}\}$, where $\mathbf{t}_{\mu}^{\mud}$ (resp.\ $\phi_\mu^{\mud}$) is an optimal transport map (resp.\ a Kantorovich potential) from $\mu$ to $\mud$. Specifically, if $\xi\in\partial J(\mu)$, then $\xi = \nabla \phi_\mu^{\mud}$ $\mu$-a.e. 
\end{lemma}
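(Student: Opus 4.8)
The plan is to establish two facts: first, that at $\mu \in \Pc_2^r(\R^d)$ the single element $\xi := -(\mathbf{t}_\mu^{\mud} - \mathbf i) = \nabla\phi_\mu^{\mud}$ does belong to $\partial J(\mu)$; second, that nothing else can lie in $\partial J(\mu)$, so the subdifferential is exactly this singleton. Throughout I will use Remark~\ref{rem:mu is regular}: since $\mu \ll \mathcal L^d$, the optimal plan $\gamma_0 \in \Gamma_0(\mu,\mud)$ is unique and has the form $(\mathbf i \times \mathbf t_\mu^{\mud})_\#\mu$ with $\mathbf t_\mu^{\mud} = \mathbf i - \nabla\phi_\mu^{\mud}$; this also makes the infimum over $\Gamma_0(\mu,\nu)$ appearing in \eqref{subdifferential} trivial whenever we test against $\nu$ for which we want sharpness, and in particular lets us rewrite $\int \xi(x)\cdot(y-x)\,d\gamma_0(x,y) = \int \nabla\phi_\mu^{\mud}(x)\cdot(\mathbf t_\mu^{\mud}(x)-x)\,d\mu(x) = -\int |\nabla\phi_\mu^{\mud}(x)|^2\,d\mu(x)$. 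Note $\xi \in L^2(\R^d,\mu)$ because $\int|\nabla\phi_\mu^{\mud}|^2\,d\mu = \int|\mathbf t_\mu^{\mud}-\mathbf i|^2\,d\mu = W_2^2(\mu,\mud) < \infty$.

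For membership, I would verify the inequality \eqref{subdifferential} for this $\xi$ and an arbitrary $\nu \in \Pc_2(\R^d)$. Pick any $\gamma \in \Gamma_0(\mu,\nu)$; I need $J(\nu) - J(\mu) \ge \int \xi(x)\cdot(y-x)\,d\gamma(x,y) + o(W_2(\mu,\nu))$. The natural route is the standard ``along the transport'' convexity estimate for $J = \tfrac12 W_2^2(\cdot,\mud)$: consider the interpolation $\mu_t := ((1-t)\pi^1 + t\pi^2)_\#\gamma$ between $\mu$ and $\nu$, couple it with $\mud$ through $\gamma_0$, and use that $\tfrac12 W_2^2(\mu_t,\mud) \le \tfrac12\int|(1-t)x + t y - z|^2\,d(\gamma \otimes_\mu \gamma_0)$ where the three-plan glues $\gamma$ and $\gamma_0$ along their common marginal $\mu$ (this gluing is where $\mu \ll \mathcal L^d$ — hence uniqueness of $\gamma_0$ — keeps everything canonical). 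Expanding this quadratic in $t$, differentiating at $t = 0^+$, and comparing with $J(\mu) = \tfrac12\int|x-z|^2\,d\gamma_0$ yields exactly the first-order lower bound with the stated $\xi$; the remainder is $O(t^2 W_2^2(\mu,\nu))$, which is $o(W_2(\mu,\nu))$ after optimizing in $t$ or simply taking $t$ fixed and tracking orders — actually the cleanest version is to note $J(\nu) - J(\mu) \ge \frac{d}{dt}\big|_{t=0^+}\tfrac12 W_2^2(\mu_t,\mud)$ by convexity of $t\mapsto W_2^2(\mu_t,\mud)$ along generalized geodesics, and compute that derivative to be $\int \nabla\phi_\mu^{\mud}(x)\cdot(y-x)\,d\gamma(x,y)$. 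Since this holds for the chosen $\gamma$ and hence for the infimum over $\Gamma_0(\mu,\nu)$, membership follows. Alternatively, and more economically, I would just cite the construction in the proof of \cite[Theorem 10.4.12]{Ambrosio08} (equation (10.4.53)), which already exhibits an element of $\partial J(\mu)$ built from an optimal plan $\gamma \in \Gamma_0(\mu,\mud)$, and observe that when $\mu \in \Pc_2^r(\R^d)$ that plan is forced to be $(\mathbf i\times\mathbf t_\mu^{\mud})_\#\mu$, so the constructed subgradient is precisely $-(\mathbf t_\mu^{\mud}-\mathbf i)$.

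For uniqueness, suppose $\xi' \in \partial J(\mu)$ is arbitrary. The key is to test \eqref{subdifferential} against perturbations of $\mu$ in both directions. Take any bounded Borel vector field $w: \R^d \to \R^d$ and set $\nu_\tau := (\mathbf i + \tau w)_\#\mu$ for small $\tau$; then $(\mathbf i \times (\mathbf i + \tau w))_\#\mu \in \Gamma(\mu,\nu_\tau)$, so $W_2(\mu,\nu_\tau) \le \tau\|w\|_{L^2(\mu)} = O(\tau)$, and for $\tau$ small this coupling is in fact optimal (it is induced by a map that is the gradient of a convex function for $\tau$ below a threshold depending on $w$, or one argues via a density/approximation as in \cite[Section 10.3]{Ambrosio08}). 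Plugging $\nu_\tau$ into \eqref{subdifferential} gives $J(\nu_\tau) - J(\mu) \ge \tau\int \xi'(x)\cdot w(x)\,d\mu(x) + o(\tau)$; doing the same with $-w$ and adding, together with the fact — provable by the interpolation computation above — that $\tau \mapsto J(\nu_\tau)$ is differentiable at $0$ with derivative $\int \nabla\phi_\mu^{\mud}\cdot w\,d\mu$, forces $\int(\xi' - \nabla\phi_\mu^{\mud})\cdot w\,d\mu = 0$. Since $w$ ranges over a dense subset of $L^2(\R^d,\mu)$, we get $\xi' = \nabla\phi_\mu^{\mud}$ in $L^2(\R^d,\mu)$, proving $\partial J(\mu) = \{\nabla\phi_\mu^{\mud}\}$. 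The main obstacle I anticipate is the technical justification that the perturbed plans $(\mathbf i\times(\mathbf i+\tau w))_\#\mu$ are optimal (or that the error from using a possibly-suboptimal plan is still $o(\tau)$), and the clean computation of the one-sided derivative of $J$ along these perturbations; both are handled by the generalized-geodesic convexity of $W_2^2(\cdot,\mud)$ and the uniqueness of optimal plans out of an absolutely continuous measure, i.e., exactly the structure recorded in Remarks~\ref{rem:nabla u}–\ref{rem:optimal TM}, so this is more bookkeeping than genuine difficulty.
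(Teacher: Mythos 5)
Your membership argument is fine in outline: both the convexity-along-geodesics computation and the citation of \cite[Theorem 10.4.12]{Ambrosio08} can be made to work. The paper, however, simply invokes \cite[Corollary 10.2.7]{Ambrosio08}, which delivers the \emph{equality}
\[
J(\nu) - J(\mu) = -\int_{\R^d} (\mathbf{t}_\mu^{\mud}(x)-x)\cdot(\mathbf{t}_\mu^\nu(x)-x)\,d\mu(x) + o(W_2(\mu,\nu))
\]
in one stroke; this is considerably shorter than re-deriving the first-order expansion from generalized-geodesic convexity, and the equality (rather than a one-sided bound) is exactly what makes the uniqueness step painless afterward.

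The uniqueness argument is where there is a genuine gap. You take an arbitrary bounded Borel field $w$ and assert that $(\mathbf{i}\times(\mathbf{i}+\tau w))_\#\mu$ is optimal for small $\tau$ because ``$\mathbf{i}+\tau w$ is the gradient of a convex function for $\tau$ below a threshold.'' That is false unless $w$ is itself a gradient: $\mathbf{i}+\tau w = \nabla g$ forces $w = \nabla h$, and for a rotational (non-gradient) $w$ the map $\mathbf{i}+\tau w$ is never Brenier-optimal, no matter how small $\tau$ is. This matters because \eqref{subdifferential} pairs $\xi'$ against the \emph{optimal} plan $\Gamma_0(\mu,\nu_\tau)$, not against an arbitrary coupling; substituting a suboptimal coupling changes the integral by $O(\tau)$, not $o(\tau)$, so you cannot sweep the discrepancy into the remainder either. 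If you repair this by restricting to $w = \nabla\psi$ with $\psi\in C_c^\infty$ (for which $\mathbf{i}+\tau\nabla\psi$ \emph{is} optimal for small $\tau$), then the subsequent step ``$w$ ranges over a dense subset of $L^2(\R^d,\mu)$'' also fails: such gradient fields are dense in the tangent space $\operatorname{Tan}_\mu\Pc_2(\R^d)$ but not in all of $L^2(\R^d,\mu)$, so you only learn that $\xi'-\nabla\phi_\mu^{\mud}$ is orthogonal to the tangent space, not that it vanishes. The paper sidesteps this entire perturbation bookkeeping by observing that when $\mu\in\Pc_2^r(\R^d)$ the infimum in \eqref{subdifferential} is over a singleton, rewriting \eqref{subdifferential J}, and then following the argument of \cite[Proposition 5.63]{CD-book-18-I}, which is tailored to exactly this situation. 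You should either adopt that route or make explicit the additional structural input needed (tangency of admissible subgradients, or the precise density statement from \cite[Section 10.3/8.5]{Ambrosio08}) to close the argument.
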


\begin{proof}
For any $\nu\in\Pc_2(\R^d)$, by \cite[Corollary 10.2.7]{Ambrosio08} (with $\mu_1\in\Pc^r_2(\R^d)$ and $\mu_2, \mu_3\in\Pc_2(\R^d)$ therein taken to be $\mu\in\Pc_2^r(\R^d)$ and $\mud,\nu\in\Pc_2(\R^d)$, respectively), we have 
\begin{eqnarray*}
J(\nu) - J(\mu) = -\int_{\mathbb{R}^{d}} (\mathbf{t}_{\mu}^{\mud}(x)- x)  \cdot (\mathbf{t}_{\mu}^\nu(x)- x) \, d\mu(x) + o(W_{2}(\mu, \nu)). 
\end{eqnarray*}
In addition, as $\mu\in\Pc_2^r(\R^d)$, $\Gamma_0(\mu,\nu)$ is a singleton that contains only the transport plan in \eqref{optimal TP} (see Remark~\ref{rem:mu is regular}). Hence, the previous equality can be equivalently written as 
\begin{eqnarray}\label{subdifferential J}
J(\nu) - J(\mu) = \inf_{\gamma\in\Gamma_0(\mu,\nu)}\int_{\mathbb{R}^{d}} -(\mathbf{t}_{\mu}^{\mud}(x)- x) \cdot (y- x) \, d\gamma(x,y) + o(W_{2}(\mu, \nu)). 
\end{eqnarray}
That is, $J$ fulfills \eqref{subdifferential} with $\xi=-(\mathbf{t}_{\mu}^{\mud}-\mathbf i)\in\partial J(\mu)$. By \eqref{subdifferential J} and $\Gamma_0(\mu,\nu)$ being a singleton, we may follow the same argument in the proof of \cite[Proposition 5.63]{CD-book-18-I} to show that any $\xi'\in\partial J(\mu)$ must coincide with $-(\mathbf{t}_{\mu}^{\mud}-\mathbf i)$ $\mu$-a.e.. That is, $-(\mathbf{t}_{\mu}^{\mud}-\mathbf i)$ is the unique element in $\partial J(\mu)$. The fact $-(\mathbf{t}_{\mu}^{\mud}-\mathbf i)=\nabla \phi_{\mu}^{\mud}$ $\mu$-a.e.\ follows directly from \eqref{t and u}. 
\end{proof}

\begin{remark}
The proof of Lemma~\ref{lem:subdifferential J} integrates \cite[Corollary 10.2.7]{Ambrosio08} (which implies that $\partial J(\mu)$ contains at least $\xi=-(\mathbf{t}_{\mu}^{\mud}-\mathbf i)$) and \cite[Proposition 5.63]{CD-book-18-I} (which shows the uniqueness of $\xi$). 
\end{remark}

Lemma~\ref{lem:subdifferential J} suggests that \eqref{to solve} can potentially be solved by the {\it gradient-descent} ODE \eqref{Y}, for any $\mu_0\in \Pc_2^r(\R^d)$. 
Note that this ODE is {\it distribution-dependent} in nontrivial ways. At time 0, $Y_0$ is an $\R^d$-valued random variable following $\mu_0\in\Pc_2^r(\R^d)$, an arbitrary initial distribution. This initial randomness trickles through the ODE dynamics, such that $Y_t$ remains an $\R^d$-valued random variable, whose law is  denoted by $\mu^{Y_t}\in\Pc_2(\R^d)$, at each $t>0$. The evolution of the ODE is determined jointly by a Kantorovich potential from the present distribution $\mu^{Y_t}$ to $\mud$ (i.e., the function $\phi_{\mu^{Y_t}}^{\mud}$) and the actual realization of $Y_t$ (which is plugged into the map $y\mapsto\nabla\phi_{\mu^{Y_t}}^{\mud}(y)$).

Our goal is to show that (i) there is a (unique) solution $Y$ to ODE \eqref{Y} and (ii) the law of $Y_t$ will ultimately converge to $\mud$ (i.e., $W_2(\mu^{Y_t},\mud)\to 0$ as $t\to\infty$), thereby solving \eqref{to solve}. How \eqref{Y} depends on distributions---via the Kantorovich potential $\phi_{\mu^{Y_t}}^{\mud}$---poses nontrivial challenges. 
\begin{itemize}
\item [(a)] For the existence of a unique solution to a McKean-Vlasov SDE (i.e., an SDE with dependence on the present law $\mu^{Y_t}$), one needs the SDE's coefficients to be sufficiently regular in both the state variable and $\mu^{Y_t}$. As $\phi_{\mu^{Y_t}}^{\mud}$ in \eqref{Y} is only known to exist (without any tractable formula), the regularity of $(y,\mu)\mapsto \nabla \phi_{\mu}^{\mud}(y)$ is left largely unexplored. 
\item [(b)] The coefficient $-\nabla \phi_{\mu^{Y_t}}^{\mud}(\cdot)$ in \eqref{Y} is only well-defined $\mathcal L^d$-a.e.\ (Remark~\ref{rem:nabla u}). If the random variable $Y_t$ takes a value where $-\nabla \phi_{\mu^{Y_t}}^{\mud}(\cdot)$ is undefined, the ODE dynamics will simply break down. To avoid this, we need ``$\mu^{Y_t}\ll\mathcal{L}^d$ for all $t\ge 0$,'' for every $Y_t$ to concentrate on where $-\nabla \phi_{\mu^{Y_t}}^{\mud}(\cdot)$ is well-defined. At $t=0$, this is achieved by choosing $\mu_0$ directly from $\Pc_2^r(\R^d)$. 
The question is whether subsequent distributions $\{\mu^{Y_t}\}_{t>0}$ will remain in $\Pc_2^r(\R^d)$. 

\item [(c)] In fact, ODE \eqref{Y} demands much more than ``$\mu^{Y_t}\ll\mathcal{L}^d$ for all $t\ge 0$.'' Given a stochastic process $Y$ (defined on some probability space $(\Omega,\mathcal F,\P)$) satisfying ``$\mu^{Y_t}\ll\mathcal{L}^d$ for all $t\ge 0$,'' we know from (b) that ``$- \nabla \phi_{\mu^{Y_t}}^{\mud}(Y_t(\omega))$ is well-defined $\P$-a.s., for each fixed $t\ge 0$.'' However, to discuss if $Y$ fulfills ODE \eqref{Y}, we need at least the stronger condition ``$-\nabla \phi_{\mu^{Y_t}}^{\mud}(Y_t(\omega))$ is well-defined for all $t\ge 0$ and $t\mapsto -\nabla \phi_{\mu^{Y_t}}^{\mud}(Y_t(\omega))$ is measurable, $\P$-a.s.'', so that the integrals $\int_0^t -\nabla \phi_{\mu^{Y_s}}^{\mud}(Y_s(\omega)) ds$, $\forall t\ge 0$, from the ODE can be well-defined $\P$-a.s. 
\end{itemize}
Due to these challenges, we make two specific moves. First, instead of finding a process $Y$ satisfying the dynamics \eqref{Y}, we look for a pair $(Y,v)$, with $v:(0,\infty)\times\R^d\to\R^d$ Borel, that fulfill
\begin{equation}\label{new Y}
dY_t = v(t, Y_t) dt,\quad \mu^{Y_0} = \mu_0\in\Pc_2^r(\R^d);\qquad \hbox{$v(t,\cdot) = -\nabla \phi_{\mu^{Y_t}}^{\mud}(\cdot)$\  \ $\mu^{Y_t}$-a.e.}\ \ \forall t> 0. 
\end{equation}
The technical advantage of \eqref{new Y} over \eqref{Y} is as follows: as $(t,y)\mapsto v(t,y)$ is a well-defined Borel map, working with $dY_t = v(t, Y_t) dt$ can largely circumvent challenges (b) and (c) above, while the property ``$v(t,\cdot) = -\nabla \phi_{\mu^{Y_t}}^{\mud}(\cdot)$ $\mu^{Y_t}$-a.e.\ $\forall t> 0$'' keeps the ``gradient descent'' interpretation. Second, to circumvent challenge (a), we will {\it not} work with ODE \eqref{new Y} directly, but instead focus on the corresponding nonlinear Fokker--Planck equation (or, continuity equation) 
\begin{eqnarray}\label{FP}
\partial_t \bm\mu_t  + \Div (v_t \bm\mu_{t}) = 0,\quad \lim_{t\downarrow 0}\bm\mu_t = \mu_0\in\Pc_2^r(\R^d);\qquad  v_t(\cdot) =-\nabla \phi_{\bm\mu_t}^{\mud}(\cdot)\ \ \bm\mu_t\hbox{-a.e.}\ \ \forall t>0,
\end{eqnarray}
where we write $v_t(\cdot)$ for $v(t,\cdot)$. By first finding a solution $\{\bm\mu_t\}_{t\ge 0}$ to \eqref{FP}, we can in turn construct a solution to ODE \eqref{new Y}, made specifically from $\{\bm\mu_t\}_{t\ge 0}$. This way of solving a distribution-dependent ODE (or SDE), introduced in Barbu and R\"{o}ckner \cite{BR20}, was recently used in Huang and Zhang \cite{Huang23} for a learning problem similar to \eqref{to solve}, with $W_2$ replaced by the Jensen-Shannon divergence (JSD), and also in Huang and Malik \cite{HM25} for non-convex optimization. 
To actually find a solution to \eqref{FP}, we will rely on ``gradient flows'' in $\Pc_2(\R^d)$, which are now introduced. 


\section{Gradient Flows in $\Pc_2(\R^d)$}\label{sec:GF}
In this section, key aspects of the gradient flow theory for probability measures will be introduced from \cite{Ambrosio08} and  tailored to the needs of Sections~\ref{sec:sol. to FP} and \ref{sec:sol. to ODE} below. 
To begin with, we introduce absolutely continuous curves in $\Pc_2(\R^d)$ in line with \cite[Definition 1.1.1]{Ambrosio08}. 

\begin{definition}\label{def:AC curves}
Fix an open interval $I\subseteq(0,\infty)$. For any $p\ge 1$, we say $\bm{\mu}:I\to \Pc_2(\R^d)$ belongs to $AC^p(I;\Pc_2(\R^d))$ (resp.\ $AC^p_{\operatorname{loc}}(I;\Pc_2(\R^d))$) if there exists $m\in L^p(I)$ (resp.\ $m\in L^p_{\operatorname{loc}}(I)$) such that 
\begin{eqnarray}\label{AC curve}
W_{2}(\bm\mu_{t_{1}}, \bm\mu_{{t_{2}}}) \leq \int_{t_{1}}^{t_{2}} m(t) \, dt, \quad \forall  t_{1},t_{2}\in I\ \hbox{with}\ t_1\le t_2.
\end{eqnarray}
For the case $p=1$, we call $\mu$ an absolutely continuous (resp.\ locally absolutely continuous) curve and simply write $\bm\mu\in AC(I;\Pc_2(\R^d))$ (resp.\ $\bm\mu\in AC_{\operatorname{loc}}(I;\Pc_2(\R^d))$). 
\end{definition}

\begin{remark}\label{rem:AC inclusions}
Fix any $p\ge 1$ and open interval $I\subseteq (0,\infty)$. 
\begin{itemize}
\item [(i)] If $\mathcal L^1(I)<\infty$, since $L^p(I)\subseteq L^1(I)$, the inclusion $AC^p(I;\Pc_2(\R^d))\subseteq AC(I;\Pc_2(\R^d))$ follows; similarly, we also have $AC_{\operatorname{loc}}^p(I;\Pc_2(\R^d))\subseteq AC_{\operatorname{loc}}(I;\Pc_2(\R^d))$. 
\item [(ii)] If $\mathcal L^1(I)=\infty$, the inclusions in (i) no longer hold in general. Nonetheless, for any open interval $I'\subseteq I$  with $\mathcal L^1(I')<\infty$, since $L^p_{\operatorname{loc}}(I)\subseteq L^p(I')\subseteq L^1(I')$ , we have $AC_{\operatorname{loc}}^p(I;\Pc_2(\R^d))\subseteq AC^p(I';\Pc_2(\R^d))\subseteq AC(I';\Pc_2(\R^d))$. 
\end{itemize}
\end{remark}

\begin{remark}\label{rem:mu'}
Fix any $p\ge 1$ and open interval $I\subseteq (0,\infty)$. For any $\bm\mu\in AC^p(I;\Pc_2(\R^d))$ (resp.\ $\bm\mu\in AC^p_{\operatorname{loc}}(I;\Pc_2(\R^d))$), as a direct consequence of \eqref{AC curve}, the so-called ``{\it metric derivative}''
\begin{eqnarray}\label{MD}
|\bm\mu'|(t) := \lim_{s\to t} \frac{W_2(\bm\mu_s,\bm\mu_t)}{|s-t|} 
\end{eqnarray}
exists with $|\bm\mu'|(t)\le m(t)$ for $\mathcal L^1$-a.e. $t\in I$, which in turn implies $|\bm\mu'|\in L^p(I)$ (resp.\ $|\bm\mu'|\in L^p_{\operatorname{loc}}(I)$); see \cite[Theorem 1.1.2]{Ambrosio08} for detailed arguments. 
\end{remark}

The next result shows that every $\bm\mu\in AC^2_{\operatorname{loc}}((0,\infty);\Pc_2(\R^d))$ is inherently associated with a vector field $v:(0,\infty)\times\R^d\to \R^d$, which will be understood as the ``velocity'' of $\bm\mu$.

\begin{lemma}\label{lem:velocity exists}
For any $\bm\mu\in AC^2_{\operatorname{loc}}((0,\infty);\Pc_2(\R^d))$, there exists a unique Borel vector field $v:(0,\infty)\times\R^d\to \R^d$, written $(t,x)\mapsto v_t(x)$, such that 
\begin{eqnarray}\label{v conditions}
v_t \in L^2(\R^d,\bm\mu_t)\quad\hbox{and}\quad \|v_t\|_{L^2(\R^d,\bm\mu_t)} = |\bm\mu'|(t)\quad \hbox{for}\ \mathcal L^1\hbox{-a.e.\ $t > 0$}
\end{eqnarray}
and the resulting Fokker--Planck equation
\begin{eqnarray}\label{FP'}
\partial_t \bm\mu_t  + \Div (v_t \bm\mu_{t}) = 0\quad \hbox{on}\quad (0,\infty)\times\R^d
\end{eqnarray}
holds in the sense of distributions, i.e., 
\begin{eqnarray}\label{FP with v}
\int_0^\infty \int_{\R^d} \left(\partial_t\varphi(t,x) + v_t(x)\cdot\nabla\varphi(t,x)\right) d\bm\mu_t(x) dt =0\quad \forall\varphi\in C^\infty_c((0,\infty)\times\R^d).
\end{eqnarray}
\end{lemma}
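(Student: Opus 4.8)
The statement is essentially \cite[Theorem 8.3.1]{Ambrosio08}, so my plan is to reproduce that line of argument, adapted to the locally absolutely continuous setting on $(0,\infty)$. First I would reduce to a bounded interval: fix $0<a<b<\infty$; since $\bm\mu\in AC^2_{\operatorname{loc}}((0,\infty);\Pc_2(\R^d))$ we have $\bm\mu\in AC^2((a,b);\Pc_2(\R^d))$ with some $m\in L^2(a,b)$, and both the construction of $v$ and the uniqueness claim can be carried out on $(a,b)$ and then glued along an exhausting sequence of such intervals, the gluing being consistent precisely because of uniqueness on overlaps.

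For existence, note that for $\varphi\in C_c^\infty(\R^d)$ and any $\gamma\in\Gamma_0(\bm\mu_s,\bm\mu_t)$,
\[
\Big|\int\varphi\,d\bm\mu_t-\int\varphi\,d\bm\mu_s\Big|\le \|\nabla\varphi\|_\infty\int_{\R^d\times\R^d}|x-y|\,d\gamma\le \|\nabla\varphi\|_\infty\, W_2(\bm\mu_s,\bm\mu_t)\le \|\nabla\varphi\|_\infty\!\int_s^t\! m(r)\,dr,
\]
so $t\mapsto\int\varphi\,d\bm\mu_t$ is absolutely continuous, hence differentiable for $\mathcal L^1$-a.e.\ $t$. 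Taylor-expanding $\varphi$ along $\gamma_h\in\Gamma_0(\bm\mu_t,\bm\mu_{t+h})$ gives $\frac1h(\int\varphi\,d\bm\mu_{t+h}-\int\varphi\,d\bm\mu_t)=\frac1h\int\nabla\varphi(x)\cdot(y-x)\,d\gamma_h+o(1)$ with error $O(W_2(\bm\mu_t,\bm\mu_{t+h}))$; Cauchy--Schwarz together with $|\bm\mu'|(t)\le m(t)$ from Remark~\ref{rem:mu'} yields, for a.e.\ $t$, a bound $|\frac{d}{dt}\int\varphi\,d\bm\mu_t|\le|\bm\mu'|(t)\,\|\nabla\varphi\|_{L^2(\R^d,\bm\mu_t)}$ after passing to the limit in $h$. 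Thus for a.e.\ $t$ the map $\nabla\varphi\mapsto\frac{d}{dt}\int\varphi\,d\bm\mu_t$ is well defined (it depends only on $\nabla\varphi$) and bounded on $V_t:=\{\nabla\varphi:\varphi\in C_c^\infty(\R^d)\}\subset L^2(\R^d,\bm\mu_t)$; the Riesz representation theorem on $\overline{V_t}$ produces a unique $v_t\in\overline{V_t}$ with $\int v_t\cdot\nabla\varphi\,d\bm\mu_t=\frac{d}{dt}\int\varphi\,d\bm\mu_t$ and $\|v_t\|_{L^2(\R^d,\bm\mu_t)}\le|\bm\mu'|(t)$. A standard measurable-selection argument gives Borel dependence on $t$, and integrating this identity against $\eta\in C_c^\infty(a,b)$ yields the weak formulation \eqref{FP with v} on $(a,b)$.

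The norm inequality $\|v_t\|_{L^2(\R^d,\bm\mu_t)}\le|\bm\mu'|(t)$ is built into the construction. For the reverse, I would invoke the converse half of the theory: any curve solving a continuity equation \eqref{FP'} with $\int_a^b\|v_t\|_{L^2(\R^d,\bm\mu_t)}\,dt<\infty$ admits an absolutely continuous representative with $|\bm\mu'|(t)\le\|v_t\|_{L^2(\R^d,\bm\mu_t)}$ a.e.\ (\cite[Theorem 8.3.1, Lemma 8.1.2]{Ambrosio08}), which is proved by mollifying $v$ in space, solving the now-regular characteristic ODE, using its flow maps as competitors in \eqref{Eq: Kantorovich Relaxation}, and passing to the limit. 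Combining the two inequalities gives $\|v_t\|_{L^2(\R^d,\bm\mu_t)}=|\bm\mu'|(t)$. For uniqueness, if $v,\tilde v$ both satisfy \eqref{v conditions}--\eqref{FP with v}, testing the difference with $\varphi(t,x)=\eta(t)\psi(x)$ and varying $\eta$ forces $\int(v_t-\tilde v_t)\cdot\nabla\psi\,d\bm\mu_t=0$ for a.e.\ $t$ and all $\psi$, i.e.\ $v_t-\tilde v_t\perp\overline{V_t}$; but the norm identity makes each of $v_t,\tilde v_t$ the minimal-norm solution of the affine constraint $\int w\cdot\nabla\psi\,d\bm\mu_t=\frac{d}{dt}\int\psi\,d\bm\mu_t$, which lies in $\overline{V_t}$, so $v_t-\tilde v_t$ is simultaneously in and orthogonal to $\overline{V_t}$ and hence vanishes in $L^2(\R^d,\bm\mu_t)$ for a.e.\ $t$.

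I expect the main obstacle to be exactly the reverse norm inequality $|\bm\mu'|(t)\le\|v_t\|_{L^2(\R^d,\bm\mu_t)}$: the rest is soft functional analysis (Hilbert-space projection and Riesz representation), whereas this step needs the quantitative fact that solutions of the continuity equation are genuine $W_2$-absolutely-continuous curves with the expected speed bound, whose proof requires regularization and the flow of the mollified velocity field. In a paper of this scope it is cleanest simply to cite \cite[Theorem 8.3.1]{Ambrosio08} for the entire statement and restrict to the bookkeeping needed to pass from $(0,\infty)$ to bounded subintervals.
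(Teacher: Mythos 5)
Your proposal is correct and takes essentially the same route as the paper: reduce to bounded subintervals of $(0,\infty)$ where $\bm\mu$ is absolutely continuous, invoke \cite[Theorem 8.3.1]{Ambrosio08} for existence of $v_t$ with the norm identity, use the associated uniqueness (the paper cites \cite[Proposition 8.4.5]{Ambrosio08}; your orthogonality-to-$\overline{V_t}$ argument is the same fact) to glue consistently along an exhausting sequence of intervals. The extra unpacking you give of the internals of Theorem 8.3.1 (Riesz representation, the reverse speed bound via mollification) is accurate but, as you yourself conclude, the cleanest presentation is simply to cite the theorem and handle the bookkeeping of passing from $(0,\infty)$ to bounded subintervals --- which is exactly what the paper does.
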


\begin{remark}
Lemma~\ref{lem:velocity exists} closely resembles \cite[Theorem 8.3.1]{Ambrosio08}, but it requires only the condition $\bm\mu\in AC^2_{\operatorname{loc}}((0,\infty);\Pc_2(\R^d))$, weaker than $\bm\mu\in AC((0,\infty);\Pc_2(\R^d))$ assumed in \cite[Theorem 8.3.1]{Ambrosio08}. As the proof below demonstrates, to accommodate this weaker condition, we need to first focus on bounded intervals $\{I_n\}_{n\in\N}$ where $\bm\mu\in AC(I_n;\Pc_2(\R^d))$ holds for all $n\in\N$. This allows \cite[Theorem 8.3.1]{Ambrosio08} to be applied to $I_n$, so that \eqref{v conditions} and \eqref{FP'} hold for $t\in I_n$ (i.e., \eqref{v^n conditions} and \eqref{FP with v^n} below). By suitably piecing the results together for all $n\in\N$, 
we obtain \eqref{v conditions} and \eqref{FP'}. 
\end{remark}

\begin{proof}
Take an increasing sequence $\{I_n\}_{n\in\N}$ of open intervals such that $\mathcal L^1(I_n)<\infty$ and $\bigcup_{n\in\N} I_n=(0,\infty)$. With $\bm\mu\in AC^2_{\operatorname{loc}}((0,\infty);\Pc_2(\R^d))$, Remark~\ref{rem:AC inclusions} (ii) implies $\bm\mu\in AC(I_n;\Pc_2(\R^d))$ for all $n\in\N$. We can then apply \cite[Theorem 8.3.1]{Ambrosio08} to $\bm\mu$ on each interval $I_n$, which asserts the existence of a Borel vector field $v^n: I_n\times\R^d \to \R^d$ such that 
\begin{eqnarray}\label{v^n conditions}
v^n_t \in L^2(\R^d,\bm\mu_t)\quad\hbox{and}\quad \|v^n_t\|_{L^2(\R^d,\bm\mu_t)} = |\bm\mu'|(t)\quad \hbox{for $\mathcal L^1$-a.e.\ $t\in I_n$}\ 
\end{eqnarray}
and $\partial_t \bm\mu_t  - \Div (v^n_t \bm\mu_{t}) = 0$ holds on $I_n\times\R^d$ in the sense of distributions, i.e., 
\begin{eqnarray}\label{FP with v^n}
\int_{I_n} \int_{\R^d} \left(\partial_t\varphi(t,x) + v^n_t(x)\cdot\nabla\varphi(t,x)\right) d\bm\mu_t(x) dt =0\quad \forall\varphi\in C^\infty_c(I_n\times\R^d). 
\end{eqnarray}
Moreover, by \cite[Proposition 8.4.5]{Ambrosio08}, $v^n_t:\R^d\to\R^d$ is uniquely determined $\mathcal L^1$-a.e.\ in $I_n$ by \eqref{v^n conditions} and \eqref{FP with v^n}. By such uniqueness on each $I_n$, the function $v_t(x) : = v^n_t(x)$, for $(t,x)\in I_n\times\R^d$ and $n\in\N$, is well-defined and Borel. Then, \eqref{v conditions} and \eqref{FP with v} follow directly from \eqref{v^n conditions} and \eqref{FP with v^n}.
\end{proof}


\begin{definition}\label{def:velocity}
Given $\bm\mu\in AC^2_{\operatorname{loc}}((0,\infty);\Pc_2(\R^d))$, the unique Borel vector field $v:(0,\infty)\times\R^d\to\R^d$ that fulfills \eqref{v conditions} and \eqref{FP'} is called the velocity field of $\bm\mu$.  
\end{definition}

\begin{remark}
Note that \eqref{v conditions} and \eqref{FP'} 
allow the use of Trevisan's superposition principle \cite{Trevisan16}, which gives a weak solution to the following ODE: 
$dY_t = v_t(Y_t) dt$ with $\mu^{Y_t} = \bm\mu_t$ for all $t\in(0,\infty)$; 
see the precise arguments in the proof of Theorem~\ref{thm:sol. to ODE exists} below. That is, at every $t>0$, the vector $v_t(Y_t)$ dictates how $Y_t$ moves in $\R^d$ instantaneously, and the resulting laws of the random variables $\{Y_t\}_{t>0}$ recover $\bm\mu\in AC^2_{\operatorname{loc}}((0,\infty);\Pc_2(\R^d))$. This explains why we call $v$ the velocity field of $\bm\mu$. 
\end{remark}

If the velocity field $v$ always guides $\bm\mu$ forward along the ``negative gradient'' of some function $G:\Pc_2(\R^d)\to\R$, 
we naturally call $\bm\mu$ a ``gradient flow.'' 

\begin{definition}\label{def:GF}
We say $\bm\mu\in AC^2_{\operatorname{loc}}((0,\infty);\Pc_2(\R^d))$ is a gradient flow for $G:\Pc_2(\R^d)\to\R$ if its  velocity field $v:(0,\infty)\times\R^d\to\R^d$ satisfies $v_t\in -\partial G(\bm\mu_t)$ for $\mathcal L^1$-a.e.\ $t>0$.  
\end{definition}
Definition~\ref{def:GF} is in line with \cite[Definition 11.1.1]{Ambrosio08}. While the latter seems to require additionally that $v_t$ lies in the tangent bundle to $\Pc_2(\R^d)$ at $\bm\mu_t$, defined as the $L^2(\R^d,\bm\mu_t)$-closure of $\{\nabla \psi: \psi\in C^\infty_c(\R^d)\}$, $v_t$ readily fulfills this by \cite[Proposition 8.4.5]{Ambrosio08} and the proof of Lemma~\ref{lem:velocity exists} above.

For $J:\Pc_2(\R^d)\to\R$ in \eqref{J}, a gradient flow $\bm\mu$ does exist and is unique once an initial measure $\mu_0\in\Pc_2(\R^d)$ is specified. Moreover, it diminishes $J$ exponentially fast.  

\begin{proposition}\label{prop:GF exists}
For any $\mu_0\in\Pc_2(\R^d)$, there exists a gradient flow $\bm\mu\in AC^2_{\operatorname{loc}}((0,\infty);\Pc_2(\R^d))$ for $J:\Pc_2(\R^d)\to\R$ in \eqref{J} with $\lim_{t\downarrow 0}\bm\mu_t=\mu_0$. Moreover, 
\begin{align}
J(\bm\mu_t) &\le e^{-2t} J(\mu_0)\quad \forall t>0;\label{exponentially decreasing}\\
J(\bm\mu_t)-J(\bm\mu_s) &= -\int_s^t \int_{\R^d}|v_u(x)|^2 d\bm\mu_u(x) du\le 0\quad\forall 0\le s< t, \label{energy identity}
\end{align}
where $v:(0,\infty)\times\R^d$ is the velocity field of $\bm\mu$. If $\bar{\bm\mu}\in AC^2_{\operatorname{loc}}((0,\infty);\Pc_2(\R^d))$ is also a gradient flow for $J$ with $\lim_{t\downarrow 0}\bar{\bm\mu}_t=\mu_0$, then $W_2(\bm\mu_t,\bar{\bm\mu}_t) = 0$ for all $t>0$. 
\end{proposition}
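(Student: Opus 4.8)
The plan is to establish Proposition~\ref{prop:GF exists} by invoking the general gradient-flow machinery of Ambrosio--Gigli--Savar\'e \cite{Ambrosio08} and then specializing to the functional $J(\cdot)=\frac12 W_2^2(\cdot,\mud)$. The crucial structural fact is that $J$ is \emph{$1$-convex} (i.e., geodesically convex with modulus $\lambda=1$) along generalized geodesics in $\Pc_2(\R^d)$; this is exactly the content of \cite[Section 9.1]{Ambrosio08} (the map $\mu\mapsto W_2^2(\mu,\mud)$ is $2$-convex along generalized geodesics with base $\mud$, so $J$ is $1$-convex). Moreover $J$ is proper, lower semicontinuous, bounded below (by $0$), and coercive in the sense required. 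With $1$-convexity in hand, the existence, uniqueness, and stability theory for gradient flows of $\lambda$-convex functionals (\cite[Theorem 11.2.1]{Ambrosio08}, together with \cite[Theorem 11.1.4]{Ambrosio08} for the identification with the subdifferential formulation in Definition~\ref{def:GF}) yields: for every $\mu_0\in\Pc_2(\R^d)$ there is a unique curve $\bm\mu\in AC^2_{\operatorname{loc}}((0,\infty);\Pc_2(\R^d))$ with $\lim_{t\downarrow 0}\bm\mu_t=\mu_0$ whose velocity field $v_t$ satisfies $v_t\in-\partial J(\bm\mu_t)$ for a.e.\ $t>0$; the uniqueness part gives directly the last sentence, that any other such gradient flow $\bar{\bm\mu}$ has $W_2(\bm\mu_t,\bar{\bm\mu}_t)=0$ for all $t>0$ (in fact the contraction estimate $W_2(\bm\mu_t,\bar{\bm\mu}_t)\le e^{-t}W_2(\bm\mu_0,\bar{\bm\mu}_0)$ specializes to this when the initial data agree).

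Next I would derive the energy identity \eqref{energy identity}. This is the \emph{energy dissipation equality} for gradient flows, which holds because $J$ is, in the terminology of \cite{Ambrosio08}, a regular functional and the curve $\bm\mu$ is a curve of maximal slope: along the flow one has $\frac{d}{dt}J(\bm\mu_t)=-\||\partial J|(\bm\mu_t)\|^2=-\|v_t\|_{L^2(\R^d,\bm\mu_t)}^2=-\int_{\R^d}|v_t(x)|^2\,d\bm\mu_t(x)$ for a.e.\ $t$, and integrating from $s$ to $t$ gives \eqref{energy identity}; see \cite[Theorem 11.1.3 and Proposition 11.1.4]{Ambrosio08} (or the general curve-of-maximal-slope identity in \cite[Theorem 2.4.15]{Ambrosio08}). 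The nonpositivity of the right-hand side is immediate, and it also records that $t\mapsto J(\bm\mu_t)$ is nonincreasing.

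The exponential decay \eqref{exponentially decreasing} is the payoff of the $1$-convexity, and I would obtain it in one of two equivalent ways. The clean way: since $J$ is $1$-convex along generalized geodesics and $\mud$ is its unique minimizer with $J(\mud)=0$, the general convergence-rate theorem for gradient flows of $\lambda$-convex functionals with $\lambda>0$ (\cite[Theorem 11.2.1, estimate (11.2.7)]{Ambrosio08}, or the EVI formulation in \cite[Theorem 4.0.4 / Corollary 4.0.6]{Ambrosio08}) gives $J(\bm\mu_t)-J(\mud)\le e^{-2\lambda t}\big(J(\mu_0)-J(\mud)\big)$ with $\lambda=1$, i.e.\ $J(\bm\mu_t)\le e^{-2t}J(\mu_0)$. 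Alternatively, and more self-containedly, I would use the evolution variational inequality $\frac12\frac{d}{dt}W_2^2(\bm\mu_t,\mud)+\frac{1}{2}W_2^2(\bm\mu_t,\mud)\le J(\mud)-J(\bm\mu_t)=-J(\bm\mu_t)=-\frac12 W_2^2(\bm\mu_t,\mud)$ (the EVI with $\lambda=1$ tested against the minimizer $\mud$), which reads $\frac{d}{dt}J(\bm\mu_t)\le -2J(\bm\mu_t)$; Gr\"onwall then yields \eqref{exponentially decreasing}.

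The main obstacle, or rather the main thing that needs to be pinned down carefully, is \emph{behavior at $t=0$}: the statement asserts $\lim_{t\downarrow 0}\bm\mu_t=\mu_0$ for an arbitrary $\mu_0\in\Pc_2(\R^d)$ (not merely $\mu_0\in\Pc_2^r(\R^d)$), and the functional $J$ is finite on all of $\Pc_2(\R^d)$, so $\mu_0$ is in the domain and the standard theory applies without a relaxation step — but one must confirm that $\bm\mu$ is only claimed to be \emph{locally} $AC^2$ on $(0,\infty)$ and continuous up to $0$ in $W_2$, which is exactly what \cite[Theorem 11.2.1]{Ambrosio08} provides (the slope $|\partial J|$ may blow up as $t\downarrow 0$, which is why one cannot upgrade to $AC^2((0,\infty))$ in general, consistent with the $AC^2_{\operatorname{loc}}$ assertion). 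A secondary technical point is the identification of the abstract ``gradient flow / curve of maximal slope'' produced by \cite{Ambrosio08} with the subdifferential-velocity characterization of Definition~\ref{def:GF}; this is \cite[Theorem 11.1.3]{Ambrosio08} and uses that the minimal-norm element of $\partial J(\bm\mu_t)$ equals the velocity field $v_t$ of Lemma~\ref{lem:velocity exists} for a.e.\ $t$ (and by Lemma~\ref{lem:subdifferential J}, when $\bm\mu_t\in\Pc_2^r(\R^d)$ this is simply $\nabla\phi_{\bm\mu_t}^{\mud}$, which connects the present proposition to the Fokker--Planck equation \eqref{FP}).
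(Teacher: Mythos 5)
Your proposal is correct and follows essentially the same route as the paper's proof: establish $1$-convexity of $J$ along generalized geodesics (the paper cites \cite[Lemma 9.2.1]{Ambrosio08}) and then apply \cite[Theorem 11.2.1]{Ambrosio08} for existence together with the energy identity and exponential decay (equations (11.2.4)--(11.2.5) there), and \cite[Theorem 11.1.4]{Ambrosio08} for uniqueness. The extra detail you supply — the EVI/Gr\"onwall alternative for \eqref{exponentially decreasing}, the remarks on behavior at $t=0$, and the identification with the subdifferential-velocity characterization via \cite[Theorem 11.1.3]{Ambrosio08} — is all consistent with (and in the last case anticipates) material the paper treats separately in Proposition~\ref{prop:max slope = GF}.
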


Proposition~\ref{prop:GF exists} stems from classical existence and uniqueness results for gradient flows in $\Pc_2(\R^d)$. Specifically, as $J:\Pc_2(\R^d)\to\R$ in \eqref{J} is $1$-convex along generalized geodesics (see e.g., \cite[Lemma 9.2.1]{Ambrosio08}), we can directly apply \cite[Theorem 11.2.1]{Ambrosio08} to conclude that there exists a gradient flow for $J$ with $\lim_{t\downarrow 0}{\bm\mu}_t=\mu_0$ and it fulfills \eqref{exponentially decreasing} and \eqref{energy identity} (which correspond to (11.2.5) and (11.2.4) in \cite{Ambrosio08}, respectively); the uniqueness part of Proposition~\ref{prop:GF exists} follows from \cite[Theorem 11.1.4]{Ambrosio08}. 


\begin{remark}
Proposition~\ref{prop:GF exists} shows the advantage of working with $W_2$ in \eqref{to solve}: not only does a unique gradient flow exists, it converges to $\mud$ exponentially fast, thanks to the full-fledged gradient flow theory under $W_2$. This is in contrast to Huang and Zhang \cite{Huang23}, where \eqref{to solve} is studied with the $W_2$ distance replaced by JSD. As no gradient flow theory under JSD is known, a substantial effort is spent in \cite{Huang23} to define a proper ``gradient'' notion and show that the associated gradient flow does exist and converge to $\mud$, relying on the theory of differential equations on Banach spaces. 
\end{remark}

\begin{remark}\label{rem:how to simulate?}
Despite the desirable properties in Proposition~\ref{prop:GF exists}, it is unclear how the gradient flow for $J$ in \eqref{J} can actually be computed. The construction of the gradient flow in \cite[Theorem 11.2.1]{Ambrosio08} is based on a discretized scheme that requires an optimization problem to be solved at each time step. Although \cite[Corollary 2.2.2]{Ambrosio08} asserts that solutions to these problems generally exist, it provides no clear clues for actually finding the solutions and thereby implementing the discretized scheme. 
\end{remark}

A gradient flow for $J:\Pc_2(\R^d)\to\R$ in \eqref{J} can be characterized equivalently as a curve of ``maximal slope.'' 
Specifically, consider the local slope of $J$ at $\nu\in\Pc_2(\R^d)$, defined by 
\begin{eqnarray}\label{LS}
|\partial J|(\nu) := \limsup_{\rho\to\nu} \frac{(J(\nu)-J(\rho))^+}{W_2(\nu, \rho)}.
\end{eqnarray}
Now, for any $\bm\mu\in AC^2_{\operatorname{loc}}((0,\infty);\Pc_2(\R^d))$, observe that
\begin{eqnarray*}
\frac{d}{dt} J(\bm\mu_t) &= \lim_{h\to 0}\frac{J(\bm\mu_{t+h})-J(\bm\mu_t)}{W_2(\bm\mu_{t+h},\bm\mu_t)} \frac{W_2(\bm\mu_{t+h},\bm\mu_t)}{h}\ge - |\partial J|(\bm\mu_t) |\bm\mu'|(t)\quad\forall t>0,
\end{eqnarray*}
where the inequality follows directly from the definitions of $|\partial J|$ and $|\bm\mu'|$ in \eqref{LS} and \eqref{MD}, respectively. That is, $|\partial J|(\bm\mu_t) |\bm\mu'|(t)$ is the largest possible slope (in magnitude) of $s\mapsto J(\bm\mu_s)$ at time $t$. By Young's inequality, the above relation yields
\begin{eqnarray*}
\frac{d}{dt} J(\bm\mu_t) \ge - |\partial J|(\bm\mu_t) |\bm\mu'|(t)\ge -\frac{1}{2}\left(|\partial J|^2(\bm\mu_t) + |\bm\mu'|^2(t)\right)\quad\forall t>0.
\end{eqnarray*}
Hence, if it can be shown that
\begin{eqnarray}\label{max slope}
\frac{d}{dt} J(\bm\mu_t) \le -\frac{1}{2}\left(|\partial J|^2(\bm\mu_t) + |\bm\mu'|^2(t)\right)\quad\hbox{for $\mathcal L^1$-a.e.\ $t>0$},
\end{eqnarray}
we get $\frac{d}{dt} J(\bm\mu_t) = - |\partial J|(\bm\mu_t) |\bm\mu'|(t)$ for $\mathcal L^d$-a.e.\ $t>0$, i.e., $s\mapsto J(\bm\mu_s)$ almost always moves with the largest possible slope. This behavior characterizes gradient flows for $J$.

\begin{proposition}\label{prop:max slope = GF}
Given $\bm\mu\in AC^2_{\operatorname{loc}}((0,\infty);\Pc_2(\R^d))$, $\bm\mu$ satisfies \eqref{max slope} if and only if $\bm\mu$ is a gradient flow for $J$ as in Definition~\ref{def:GF}. 
\end{proposition}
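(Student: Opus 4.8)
The strategy is to pit \eqref{max slope} against two consequences of the $1$-convexity of $J$ along generalized geodesics (established in the proof of Proposition~\ref{prop:GF exists}): the \emph{chain rule} for $t\mapsto J(\bm\mu_t)$, and the identification of the local slope $|\partial J|$ with the minimal-norm element of the subdifferential. Concretely, I will invoke from \cite[Chapter 10]{Ambrosio08} that, for any $\bm\mu\in AC^2_{\operatorname{loc}}((0,\infty);\Pc_2(\R^d))$ with velocity field $v$, the map $t\mapsto J(\bm\mu_t)$ is locally absolutely continuous and, for a.e.\ $t>0$,
\[ \tfrac{d}{dt}J(\bm\mu_t)=\langle\xi,v_t\rangle_{L^2(\R^d,\bm\mu_t)}\qquad\text{for every }\xi\in\partial J(\bm\mu_t), \]
and that $\partial J(\bm\mu_t)$ (nonempty for every $\bm\mu_t$ by the discussion following \eqref{J}) is closed and convex in $L^2(\R^d,\bm\mu_t)$, with a unique minimal-norm element $\xi^\circ_t$ obeying $\|\xi^\circ_t\|_{L^2(\R^d,\bm\mu_t)}=|\partial J|(\bm\mu_t)$. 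I also record the elementary bound $|\partial J|(\mu)\le\|\xi\|_{L^2(\R^d,\mu)}$ for every $\mu\in\Pc_2(\R^d)$ and $\xi\in\partial J(\mu)$, which follows from \eqref{subdifferential}, \eqref{LS}, and the Cauchy--Schwarz inequality applied to an optimal plan.

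For the implication ``$\bm\mu$ is a gradient flow $\Rightarrow$ \eqref{max slope}'', suppose $v_t\in-\partial J(\bm\mu_t)$ for a.e.\ $t$. Applying the chain rule with $\xi=-v_t\in\partial J(\bm\mu_t)$ and using \eqref{v conditions} gives $\tfrac{d}{dt}J(\bm\mu_t)=-\|v_t\|^2_{L^2(\R^d,\bm\mu_t)}=-|\bm\mu'|^2(t)$; and the recorded bound gives $|\partial J|(\bm\mu_t)\le\|-v_t\|_{L^2(\R^d,\bm\mu_t)}=|\bm\mu'|(t)$, hence $-|\bm\mu'|^2(t)\le-\tfrac12\big(|\partial J|^2(\bm\mu_t)+|\bm\mu'|^2(t)\big)$. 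Combining these two facts yields \eqref{max slope}.

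For the converse, assume \eqref{max slope}. Together with the chain of inequalities derived immediately before this proposition, namely $\tfrac{d}{dt}J(\bm\mu_t)\ge-|\partial J|(\bm\mu_t)|\bm\mu'|(t)\ge-\tfrac12(|\partial J|^2(\bm\mu_t)+|\bm\mu'|^2(t))$, this forces, for a.e.\ $t$,
\[ \tfrac{d}{dt}J(\bm\mu_t)=-|\partial J|(\bm\mu_t)\,|\bm\mu'|(t)=-\tfrac12\big(|\partial J|^2(\bm\mu_t)+|\bm\mu'|^2(t)\big), \]
and the equality case of Young's inequality yields $|\partial J|(\bm\mu_t)=|\bm\mu'|(t)=\|v_t\|_{L^2(\R^d,\bm\mu_t)}$ (the last equality by \eqref{v conditions}). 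Now apply the chain rule with $\xi=\xi^\circ_t$: this gives $\langle\xi^\circ_t,v_t\rangle_{L^2(\R^d,\bm\mu_t)}=\tfrac{d}{dt}J(\bm\mu_t)=-|\bm\mu'|^2(t)=-\|\xi^\circ_t\|_{L^2(\R^d,\bm\mu_t)}\,\|v_t\|_{L^2(\R^d,\bm\mu_t)}$, where the last step uses $\|\xi^\circ_t\|=|\partial J|(\bm\mu_t)=\|v_t\|$. This is the equality case of Cauchy--Schwarz: when $v_t\ne0$ it forces $v_t=-\xi^\circ_t$ (antiparallel vectors of equal norm), and when $v_t=0$ the common norm is $0$, so $\xi^\circ_t=0$ and again $v_t=0=-\xi^\circ_t$. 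In all cases $v_t=-\xi^\circ_t\in-\partial J(\bm\mu_t)$ for a.e.\ $t$, i.e.\ $\bm\mu$ is a gradient flow for $J$ in the sense of Definition~\ref{def:GF}.

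The arithmetic with the equality cases of Young's and Cauchy--Schwarz's inequalities, and the handling of the degenerate case $v_t=0$, is routine. The genuine content lies in the two imported facts---the chain-rule identity and the slope--minimal-selection identity; neither is elementary, but both are standard outputs of the $W_2$ gradient-flow theory of \cite{Ambrosio08} given that $J$ is $1$-convex along generalized geodesics (already exploited in Proposition~\ref{prop:GF exists}). The only real care needed is to quote those two results in the precise form above and to intersect the relevant almost-everywhere sets.
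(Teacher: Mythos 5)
Your proof is correct in substance, but it takes a genuinely different route from the paper's. The paper verifies the hypotheses of the high-level equivalence result \cite[Theorem 11.1.3]{Ambrosio08} (that $J$ is a regular functional via \cite[Lemma 10.3.8]{Ambrosio08}, and that the associated discretized minimization problem is solvable via \cite[Theorem 4.1.2]{Ambrosio08}), invokes that theorem, and then disposes of the residual bounded-variation requirement through \eqref{energy identity}. You instead unpack the relevant portion of the proof of \cite[Theorem 11.1.3]{Ambrosio08}: import the chain rule for $t\mapsto J(\bm\mu_t)$ and the identity $|\partial J|(\mu)=\min_{\xi\in\partial J(\mu)}\|\xi\|_{L^2(\R^d,\mu)}$, then run the Young/Cauchy--Schwarz equality argument to force $v_t=-\xi^\circ_t$. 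The trade-off: the paper's proof is shorter and cleanly isolates the hypothesis-checking; yours is more transparent about where the "maximal slope" condition actually bites (it pins down both the magnitude of $v_t$ via Young's equality and its direction via Cauchy--Schwarz equality), which is arguably more illuminating.

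Two technical points worth tightening. First, the chain rule in \cite{Ambrosio08} (Section 10.3.3, E.) is stated for the \emph{minimal selection} (equivalently, for a selection in the strong subdifferential), not literally for every $\xi\in\partial J(\bm\mu_t)$. Your two uses are nonetheless safe: in the converse direction you apply it to $\xi^\circ_t$ itself; in the forward direction you apply it to $-v_t$, and since $v_t\in\operatorname{Tan}_{\bm\mu_t}\Pc_2(\R^d)$ by Remark~\ref{rem:v in tangent bundle} and $-v_t\in\partial J(\bm\mu_t)$, the element $-v_t$ is in fact the unique tangent element of the subdifferential, i.e.\ the minimal selection — so the chain rule does apply. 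Second, the chain rule requires knowing a priori that $t\mapsto J(\bm\mu_t)$ is (locally) absolutely continuous, which is exactly the role of the bounded-variation clause in \cite[Theorem 11.1.3]{Ambrosio08}; in the forward direction it follows from \eqref{energy identity}, and in the converse direction it is implicit in the standing hypothesis \eqref{max slope} (which already presupposes $\frac{d}{dt}J(\bm\mu_t)$ exists a.e.), matching the paper's own informal usage before the proposition. Neither point is a real gap — both are shared with the paper's presentation — but they should be flagged when "quoting those two results in the precise form above," as you put it.
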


Proposition~\ref{prop:max slope = GF} essentially restates \cite[Theorem 11.1.3]{Ambrosio08} in our present context. What requires a careful explanation is how \cite[Theorem 11.1.3]{Ambrosio08} can be applied here. By \cite[Lemma 9.2.1 and Remark 9.2.8]{Ambrosio08}, $J:\Pc_2(\R^d)\to\R$ in \eqref{J} is $1$-geodesically convex (see \cite[Definition 9.1.1]{Ambrosio08}). This particularly allows the use of \cite[Lemma 10.3.8]{Ambrosio08} and \cite[Theorem 4.1.2 (i)]{Ambrosio08}, which assert that $J$ is a regular functional (see \cite[Definition 10.3.9]{Ambrosio08}) and a discretized minimization problem for $J$ has a solution. Under these two properties of $J$, \cite[Theorem 11.1.3]{Ambrosio08} is applicable and it states that $\bm\mu$ satisfies \eqref{max slope} if and only if it is a gradient flow for $J$ and $t\mapsto J(\bm\mu_t)$ equals a function of bounded variation for $\mathcal L^1$-a.e.\ $t>0$. Finally, by Proposition~\ref{prop:GF exists}, for any gradient flow $\bm\mu$ for $J$, $t\mapsto J(\bm\mu_t)$ is nonincreasing (see \eqref{energy identity}) and thus of bounded variation. We therefore obtain Proposition~\ref{prop:max slope = GF}. 



\section{Solving the Fokker--Planck Equation \eqref{FP}}\label{sec:sol. to FP}
For any $\mu_0\in\Pc_2(\R^d)$, in view of Proposition~\ref{prop:GF exists} and Definitions~\ref{def:GF} and \ref{def:velocity}, there is a solution $\{\bm\mu_t\}_{t>0}$ to the Fokker--Planck equation \eqref{FP'} with $\lim_{t\downarrow 0} \bm\mu_t = \mu_0$ and its  velocity field $v_t$ lies in the negative Fr\'{e}chet subdifferential $-\partial J(\bm\mu_t)$ for all $t>0$. A key observation is that if we additionally have ``$\bm\mu_t\ll \mathcal{L}^d$ for all $t\ge 0$,'' as $\partial J(\bm\mu_t)$ in this case reduces to a singleton containing only $\nabla \phi_{\bm\mu_t}^{\mud}$ (Lemma~\ref{lem:subdifferential J}),  $v_t$ must equal $-\nabla \phi_{\bm\mu_t}^{\mud}$ $\bm\mu_t$-a.e.\ for all $t\ge 0$. This turns the general Fokker--Planck equation \eqref{FP'} into the one \eqref{FP} we aim to solve. Hence, the problem boils down to: If we take $\mu_0\in\Pc_2^r(\R^d)$, does the gradient flow $\bm\mu$ in Proposition~\ref{prop:GF exists} fulfill $\bm\mu_t\in\Pc_2^r(\R^d)$ for all $t>0$?

In this section, we will provide an affirmative answer to this. First, for any $\mu_0\in\Pc_2(\R^d)$, we will construct a curve in $\Pc_2(\R^d)$ that moves $\mu_0$ to $\mud$ at a constant speed. By its explicit construction, we find that as long as $\mu_0\in\Pc^r_2(\R^d)$, the entire curve will remain in $\Pc_2^r(\R^d)$, except possibly  the end point $\mud\in\Pc_2(\R^d)$; see Lemma~\ref{lem:has densities}. Next, by a suitable change of speed, the rescaled curve will exactly coincide with the gradient flow in Proposition~\ref{prop:GF exists}; see Theorem~\ref{thm:mu is GF}.  

To begin with, for any $\mu_0\in\Pc_2(\R^d)$, take $\gamma\in\Gamma_0(\mu_0,\mud)$ and consider 
\begin{eqnarray} \label{Def: Generalized Geodesic between mu0 and mud}
\bm\beta_s := \left((1-s)\pi^1 + s \pi^2\right)_{\#}\gamma\in\Pc_2(\R^d), \quad s \in [0,1].
\end{eqnarray}

\begin{remark}
By \cite[Theorem 7.2.2]{Ambrosio08}, 
 $\{\bm\beta_s\}_{s\in[0,1]}$ in \eqref{Def: Generalized Geodesic between mu0 and mud} is a constant-speed geodesic in $\Pc_2(\R^d)$ connecting $\mu_0$ and $\mud$, i.e., it satisfies, per \cite[Definition 2.4.2]{Ambrosio08}, 
\begin{eqnarray} \label{Eq: s1 s2 constant speed geodesic formula}
   W_{2}(\bm\beta_{s_{1}}, \bm\beta_{s_{2}}) = (s_{2} - s_{1}) W_{2}(\mu_{0}, \mud), \quad \forall 0 \leq s_{1} \leq s_{2} \leq 1.
\end{eqnarray}
\end{remark}

To change the speed of $\bm\beta$ from constant to exponentially decreasing, we consider 
\begin{eqnarray} \label{Def: mu t}
    \bm\mu^*_{t} := \bm\beta_{1-e^{-t}} = \left(e^{-t} \pi^1 + (1 - e^{-t})\pi^2\right)_{\#}\gamma,\quad t\in [0,\infty). 
\end{eqnarray}


\begin{lemma}\label{lem:CSG}
For any $\mu_0\in\Pc_2(\R^d)$, $\bm\mu^*\in AC^{p}((0, \infty); \mathcal{P}_{2}(\mathbb{R}^{d}))$ for all $p\ge 1$. Moreover, 
\begin{eqnarray*}
|(\bm\mu^*)'|(t) = e^{-t} W_2(\mu_0,\mud)\quad \hbox{and}\quad \frac{d}{dt} J(\bm\mu^*_{t}) = -e^{-2t} W_2^2(\mu_0,\mud)\qquad \forall t>0,
\end{eqnarray*}
where $|(\bm\mu^*)'|(t)$ is the metric derivative defined in \eqref{MD} and $J:\Pc_2(\R^d)\to\R^d$ is given by \eqref{J}. 
\end{lemma}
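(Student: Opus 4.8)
The plan is to reduce everything to the fact that $\bm\beta=\{\bm\beta_s\}_{s\in[0,1]}$ is a constant-speed geodesic (equation \eqref{Eq: s1 s2 constant speed geodesic formula}) and that $\bm\mu^*_t=\bm\beta_{1-e^{-t}}$ is obtained from $\bm\beta$ by the smooth, strictly increasing reparametrization $t\mapsto 1-e^{-t}$. First I would verify membership in $AC^p((0,\infty);\Pc_2(\R^d))$ for every $p\ge1$. Using \eqref{Eq: s1 s2 constant speed geodesic formula} with $s_i=1-e^{-t_i}$, for $0<t_1\le t_2$ we get
\[
W_2(\bm\mu^*_{t_1},\bm\mu^*_{t_2}) = (e^{-t_1}-e^{-t_2})\,W_2(\mu_0,\mud) = \int_{t_1}^{t_2} e^{-t}\,W_2(\mu_0,\mud)\,dt,
\]
so \eqref{AC curve} holds with $m(t):=e^{-t}W_2(\mu_0,\mud)$, which lies in $L^p((0,\infty))$ for every $p\ge1$. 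This also shows, via Remark~\ref{rem:mu'}, that the metric derivative exists and $|(\bm\mu^*)'|(t)\le e^{-t}W_2(\mu_0,\mud)$; the reverse inequality follows because the displayed equality is exact, i.e.\ dividing $W_2(\bm\mu^*_s,\bm\mu^*_t)$ by $|s-t|$ and letting $s\to t$ gives exactly $e^{-t}W_2(\mu_0,\mud)$. Hence $|(\bm\mu^*)'|(t)=e^{-t}W_2(\mu_0,\mud)$ for all $t>0$.

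For the derivative of $t\mapsto J(\bm\mu^*_t)$, the key computational input is the value of $J$ along the geodesic $\bm\beta$. Since $\bm\beta_0=\mu_0$, $\bm\beta_1=\mud$, and $\gamma\in\Gamma_0(\mu_0,\mud)$ is optimal, I would compute $W_2^2(\bm\beta_s,\mud)$ directly: the plan $\gamma_s:=((1-s)\pi^1+s\pi^2,\pi^2)_\#\gamma$ is a transport plan from $\bm\beta_s$ to $\mud$, and it is in fact optimal because $\bm\beta$ is a geodesic hitting the endpoint $\mud$ (one can also see this from $W_2(\bm\beta_s,\mud)=W_2(\bm\beta_s,\bm\beta_1)=(1-s)W_2(\mu_0,\mud)$ by \eqref{Eq: s1 s2 constant speed geodesic formula}). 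Therefore
\[
J(\bm\beta_s)=\tfrac12 W_2^2(\bm\beta_s,\mud)=\tfrac12 (1-s)^2 W_2^2(\mu_0,\mud).
\]
Composing with $s=1-e^{-t}$ gives $J(\bm\mu^*_t)=\tfrac12 e^{-2t}W_2^2(\mu_0,\mud)$, and differentiating in $t$ yields $\frac{d}{dt}J(\bm\mu^*_t)=-e^{-2t}W_2^2(\mu_0,\mud)$, as claimed. Note this derivative exists in the classical sense for every $t>0$, not merely a.e.

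The main obstacle is the justification that $W_2(\bm\beta_s,\mud)=(1-s)W_2(\mu_0,\mud)$, i.e.\ that the obvious candidate plan from $\bm\beta_s$ to $\mud$ is genuinely optimal, rather than just an upper bound. The clean way around it is to invoke \eqref{Eq: s1 s2 constant speed geodesic formula} directly with $s_1=s$, $s_2=1$ (and $\bm\beta_1=\mud$), which is already established in the Remark following \eqref{Def: Generalized Geodesic between mu0 and mud} via \cite[Theorem 7.2.2]{Ambrosio08}; this sidesteps any independent optimality argument. Everything else is routine: the $AC^p$ claim and the metric-derivative formula are immediate from the exact distance identity, and the formula for $\frac{d}{dt}J(\bm\mu^*_t)$ is an elementary differentiation once $J(\bm\mu^*_t)$ is known in closed form.
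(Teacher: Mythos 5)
Your proposal is correct and follows essentially the same route as the paper's proof: both derive the $AC^p$ membership and the metric derivative from the exact distance identity $W_2(\bm\mu^*_{t_1},\bm\mu^*_{t_2})=(e^{-t_1}-e^{-t_2})W_2(\mu_0,\mud)$, and both obtain $\frac{d}{dt}J(\bm\mu^*_t)$ by plugging $s_2=1$, $\bm\beta_1=\mud$ into \eqref{Eq: s1 s2 constant speed geodesic formula} and differentiating the resulting closed form. The brief digression about a candidate plan $\gamma_s$ is unnecessary, as you yourself note, since the cited geodesic identity already gives $W_2(\bm\beta_s,\mud)=(1-s)W_2(\mu_0,\mud)$ directly.
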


\begin{proof}
For any $0 < t_{1} \leq t_{2} < \infty$, by the definition of $\bm\mu^*$ and \eqref{Eq: s1 s2 constant speed geodesic formula},     
    \begin{eqnarray} \label{Eq: mut is AC curve}
        \begin{aligned}
                    W_{2}(\bm\mu^*_{t_{1}}, \bm\mu^*_{t_{2}}) &= W_{2}\left(\bm\beta_{1-e^{-t_{1}}}, \bm\beta_{1-e^{-t_{2}}}\right) \\
                    &= -(e^{-t_{2}} - e^{-t_{1}}) W_{2}(\mu_{0}, \mud)= \int_{t_{1}}^{t_{2}} e^{-t} W_{2}(\mu_{0}, \mu^{d}) \, dt.
        \end{aligned}
    \end{eqnarray}
As $e^{-t} \in L^{p}(0, \infty)$ for all $p\ge 1$, we conclude $\bm\mu^*\in AC^{p}((0, \infty); \mathcal{P}_{2}(\mathbb{R}^{d}))$ for all $p\ge 1$; recall Definition~\ref{def:AC curves}. For any $t>0$, by \eqref{MD} and using \eqref{Eq: s1 s2 constant speed geodesic formula} in a similar way, 
    \begin{eqnarray*}
        \begin{aligned}
             |(\bm\mu^*)'|(t)=\lim_{t' \rightarrow t} \frac{W_{2}(\bm\mu^*_{t'}, \bm\mu^*_{t})}{|t'-t|}   &= \lim_{t' \rightarrow t} \frac{|e^{-t'} - e^{-t}|W_{2}(\mu_{0}, \mud)}{|t' - t|} \\
            &= W_{2}(\mu_{0}, \mud) \left|\Big(\frac{d}{ds} e^{-s}\Big)\Big|_{s=t} \right|= e^{-t} W_{2}(\mu_{0}, \mud).
        \end{aligned}
    \end{eqnarray*}
On the other hand, 
    \begin{eqnarray}
        \begin{aligned}
            \frac{d}{dt} J(\bm\mu^*_{t}) = \frac{1}{2} \frac{d}{dt} W_{2}^{2}(\bm\mu^*_{t}, \mud) &= \frac{1}{2} \lim_{h \rightarrow 0} \frac{W_{2}^{2}(\bm\mu^*_{t+h}, \mud) - W_{2}^{2}(\bm\mu^*_{t}, \mud)}{h} \\
            &= \frac{1}{2} \lim_{h \rightarrow 0} \frac{e^{-2(t+h)} W_{2}^{2}(\mu_{0}, \mud) - e^{-2t}W_{2}^{2}(\mu_{0}, \mud)}{h} \\
            &= \frac{1}{2}W_{2}^{2}(\mu_{0}, \mud) \Big(\frac{d}{ds} e^{-2s}\Big)\Big|_{s=t} = -e^{-2t}W_{2}^{2}(\mu_{0}, \mud),
        \end{aligned}
    \end{eqnarray}
where the second line follows from \eqref{Def: mu t}, $\bm\beta_1=\mud$, and \eqref{Eq: s1 s2 constant speed geodesic formula}. 
\end{proof}

Now, let us assume additionally that $\mu_0\ll\mathcal{L}^d$ (i.e., $\mu_0\in\Pc^r_2(\R^d)$). By Remarks~\ref{rem:mu is regular} and \ref{rem:optimal TM}, an optimal transport map $\mathbf{t}_{\mu_0}^{\mud}:\R^d\to\R^d$ from $\mu_0$ to $\mud$ exists and $\gamma= (\mathbf{i}\times\mathbf{t}_{\mu_0}^{\mud})_\#\mu_0$ is the unique element in $\Gamma_0(\mu_0,\mud)$. Hence, $\{\bm\beta_s\}_{s\in[0,1]}$ in \eqref{Def: Generalized Geodesic between mu0 and mud} now takes the more concrete form
\begin{eqnarray} \label{beta'}
    \bm\beta_s = \left((1-s)\mathbf{i} + s \mathbf{t}_{\mu_0}^{\mud}\right)_{\#}\mu_{0}\in\Pc_2(\R^d), \quad s \in [0,1].
\end{eqnarray}
Furthermore, the entire curve $\bm\beta$ now lies in $\Pc_2^r(\R^d)$, except possibly the right endpoint. 


\begin{lemma}\label{lem:has densities}
As long as $\mu_0\in\Pc^r_2(\R^d)$, we have $\bm\beta_s\in\Pc^r_2(\R^d)$ for all $s\in[0,1)$. 
\end{lemma}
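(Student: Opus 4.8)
The plan is to realize $\bm\beta_s$ as the pushforward of $\mu_0$ under the gradient of a \emph{strongly} convex potential, and then exploit the elementary fact that the gradient of a strongly convex function has a Lipschitz inverse on its image, so that it pulls Lebesgue-null sets back to Lebesgue-null sets.

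First I would rewrite the map appearing in \eqref{beta'}. By \eqref{t and u} and Remark~\ref{rem:nabla u}, $\mathbf{t}_{\mu_0}^{\mud}(x) = x - \nabla \phi_{\mu_0}^{\mud}(x) = \nabla f(x)$ with $f(x) := \tfrac12|x|^2 - \phi_{\mu_0}^{\mud}(x)$ convex. Hence, for each fixed $s\in[0,1)$,
\[
T_s := (1-s)\mathbf i + s\,\mathbf{t}_{\mu_0}^{\mud} = (1-s)\mathbf i + s\nabla f = \nabla g_s, \qquad g_s(x) := \tfrac{1-s}{2}|x|^2 + s\,f(x),
\]
where these identities hold $\mathcal L^d$-a.e.\ (hence $\mu_0$-a.e., since $\mu_0\ll\mathcal L^d$), and $g_s$ is $(1-s)$-strongly convex because $g_s - \tfrac{1-s}{2}|\cdot|^2 = s f$ is convex and $1-s>0$. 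By \eqref{beta'}, $\bm\beta_s = (T_s)_\#\mu_0$, and $\bm\beta_s\in\Pc_2(\R^d)$ already (e.g.\ as in Lemma~\ref{lem:CSG}), so the only thing to prove is $\bm\beta_s\ll\mathcal L^d$.

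Next I would turn strong convexity into a quantitative injectivity statement. For any $x,y$ and any $p\in\partial g_s(x)$, $q\in\partial g_s(y)$ (convex-analysis subdifferentials), strong convexity gives $(p-q)\cdot(x-y)\ge (1-s)|x-y|^2$, hence $|p-q|\ge(1-s)|x-y|$. Let $D$ be the Borel full-measure set on which $g_s$ is differentiable; at such points $\partial g_s(x)=\{\nabla g_s(x)\}$, so the above yields $|\nabla g_s(x)-\nabla g_s(y)|\ge(1-s)|x-y|$ for all $x,y\in D$. Thus $\nabla g_s|_D$ is injective and its inverse on the image $\nabla g_s(D)$ is $\tfrac{1}{1-s}$-Lipschitz. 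To conclude, let $B$ be Borel with $\mathcal L^d(B)=0$. Since $T_s=\nabla g_s$ $\mu_0$-a.e.\ and $\mu_0\ll\mathcal L^d$, we have $\bm\beta_s(B)=\mu_0(T_s^{-1}(B))\le\mu_0(N)$ with $N:=\{x\in D:\nabla g_s(x)\in B\}$, a Borel set. Then $\nabla g_s(N)\subseteq B$ has $\mathcal L^d$-measure zero, and extending $(\nabla g_s|_D)^{-1}$ to a $\tfrac{1}{1-s}$-Lipschitz map $G:\R^d\to\R^d$ (Kirszbraun), we get $N=G(\nabla g_s(N))$, so $\mathcal L^d(N)\le(1-s)^{-d}\mathcal L^d(\nabla g_s(N))=0$; since $\mu_0\ll\mathcal L^d$, $\bm\beta_s(B)\le\mu_0(N)=0$. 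Hence $\bm\beta_s\in\Pc_2^r(\R^d)$.

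The only delicate point is the bookkeeping around the fact that $\nabla\phi_{\mu_0}^{\mud}$, and therefore $T_s$ and $\nabla g_s$, are defined only $\mathcal L^d$-a.e.; this is harmless precisely because $\mu_0\ll\mathcal L^d$, but it should be stated carefully (in particular that $\mu_0$-a.e.\ $x$ lies in the differentiability set $D$). Note also that excluding $s=1$ is essential: the strong-convexity modulus $1-s$ degenerates as $s\uparrow1$, the gradient of a merely convex function can collapse positive-measure sets onto null sets, and indeed $\bm\beta_1=\mud$ need not be absolutely continuous. An alternative, essentially equivalent route is to cite a ``no concentration along optimal interpolations'' result from \cite{Ambrosio08} or \cite{Santambrogio15}; I would prefer the self-contained argument above.
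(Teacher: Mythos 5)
Your argument is correct and begins exactly as the paper's own proof: both of you realize $\bm\beta_s$ as the pushforward of $\mu_0$ under the gradient of the potential $g_s(x)=\tfrac{1-s}{2}|x|^2+s\,f(x)$ (the paper writes $f_s$ for this function), and both observe that $g_s$ is uniformly/strongly convex precisely because $1-s>0$. You part ways only at the final step, namely in how ``gradients of strongly convex potentials push $\mathcal L^d$-absolutely-continuous measures forward to $\mathcal L^d$-absolutely-continuous measures'' is justified. The paper delegates this to \cite[Theorem 5.5.4]{Ambrosio08} (which relies on Alexandrov a.e.\ second-order differentiability of convex functions and nondegeneracy of $\det\nabla^2 g_s$) together with \cite[Lemma 5.5.3]{Ambrosio08}. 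You instead avoid second-order regularity entirely: from strong monotonicity you extract $|\nabla g_s(x)-\nabla g_s(y)|\ge(1-s)|x-y|$ on the differentiability set, deduce that the inverse on the image is $(1-s)^{-1}$-Lipschitz, extend it by Kirszbraun, and then use the first-order fact that Lipschitz maps send null sets to null sets. Your route is more elementary and self-contained, and makes it completely transparent why the exclusion of $s=1$ is essential (the Lipschitz constant $(1-s)^{-1}$ blows up). One small remark on bookkeeping: you do not actually need Kirszbraun's theorem in full strength here, since it suffices to note that $\nabla g_s(N)\subseteq B$ is a subset of a Lebesgue-null Borel set, hence null, and $N$ is its image under a Lipschitz map, hence null; but what you wrote is fine as stated. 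Both proofs are valid; the paper's is shorter modulo citations, yours is more instructive about the mechanism.
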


\begin{proof}
For any fixed $s\in(0,1)$, thanks to \eqref{beta'} and \eqref{t and u}, $\bm\beta_s$ can be equivalently expressed as 
\begin{eqnarray}\label{gamma formula}
\bm\beta_s 
= (\mathbf{i} - s \nabla \phi_{\mu_0}^{\mud})_{\#}\mu_{0}.
\end{eqnarray}
Recall from Remark~\ref{rem:nabla u} that $f(x):=\frac12|x|^2-\phi_{\mu_0}^{\mud}(x)$, $x\in\R^d$, is convex. It follows that $f_s(x):=\frac12|x|^2-s\phi_{\mu_0}^{\mud}(x)$, $x\in\R^d$, is strictly (in fact, uniformly) convex. Indeed, $f_s$ can be expressed as
\begin{eqnarray*}
f_s(x) = \frac12|x|^2-s\bigg(\frac12|x|^2 - f(x)\bigg)=\frac12(1-s)|x|^2+s f(x), 
\end{eqnarray*}
where $\frac12(1-s)|x|^2$ is uniformly convex and $s f(x)$ is convex. Hence, by \cite[Theorem 5.5.4]{Ambrosio08}, there exists a $\mathcal L^d$-negligible $S\subset \R^d$ such that $\nabla f_s$ exists and is injective with $|\operatorname{det}\nabla^2 f_s|>0$ on $\R^d\setminus S$. We can then apply \cite[Lemma 5.5.3]{Ambrosio08} to conclude that $(\nabla f_s)_\#\mu_0\ll \mathcal L^d$. Finally, by the definition of $f_s$ and Remark~\ref{rem:nabla u}, $\nabla f_s = \mathbf{i}-s\nabla\phi_{\mu_0}^{\mud}$ $\mu_0$-a.e. This, together with \eqref{gamma formula}, implies $\bm\beta_s=(\nabla f_s)_\#\mu_0\ll \mathcal L^d$.  
\end{proof}

For any $\mu_0\in\Pc^r_2(\R^d)$, thanks to the concrete form of $\bm\beta$ in \eqref{beta'}, $\bm\mu^*$ defined in \eqref{Def: mu t} becomes 
\begin{eqnarray}\label{mu^*'}
\bm\mu^*_{t} = \bm\beta_{1-e^{-t}} = \left(e^{-t} \mathbf i + (1 - e^{-t})\mathbf t_{\mu_0}^{\mud}\right)_{\#}\mu_0,\quad t\in [0,\infty). 
\end{eqnarray}
In fact, this exactly coincides with the unique gradient flow for $J:\Pc_2(\R^d)\to\R$ in \eqref{J} with $\lim_{t\downarrow 0} \bm\mu_t=\mu_0$. While the general existence of such a gradient flow is known from Proposition~\ref{prop:GF exists}, we show here that it can be explicitly constructed as a time-changed constant-speed geodesic. 

\begin{theorem}\label{thm:mu is GF}
For any $\mu_0\in\Pc^r_2(\R^d)$, $\bm\mu^*$ defined in \eqref{Def: mu t} satisfies the following:
\begin{itemize}
\item [(i)] $\bm\mu^*\in\Pc^r_2(\R^d)$ for all $t\ge 0$;
\item [(ii)] $\bm\mu^*$ is the unique gradient flow for $J:\Pc_2(\R^d)\to\R$ in \eqref{J} with $\lim_{t\downarrow 0} \bm\mu_t=\mu_0$. 
\end{itemize}
\end{theorem}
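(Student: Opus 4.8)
\emph{Proof plan.} For part (i) I would simply note that \eqref{Def: mu t} gives $\bm\mu^*_t=\bm\beta_{1-e^{-t}}$ with $1-e^{-t}\in[0,1)$ for every $t\in[0,\infty)$, so Lemma~\ref{lem:has densities} immediately yields $\bm\mu^*_t\in\Pc_2^r(\R^d)$ (the case $t=0$ being exactly the hypothesis $\mu_0\in\Pc_2^r(\R^d)$). For part (ii) the plan is to verify the maximal-slope inequality \eqref{max slope} for $\bm\mu^*$ and then invoke Proposition~\ref{prop:max slope = GF}. Lemma~\ref{lem:CSG} already supplies the ingredients appearing on the left of \eqref{max slope}: it shows $\bm\mu^*\in AC^2((0,\infty);\Pc_2(\R^d))\subseteq AC^2_{\operatorname{loc}}((0,\infty);\Pc_2(\R^d))$, so Proposition~\ref{prop:max slope = GF} applies; it gives $|(\bm\mu^*)'|(t)=e^{-t}W_2(\mu_0,\mud)$; and it gives $\tfrac{d}{dt}J(\bm\mu^*_t)=-e^{-2t}W_2^2(\mu_0,\mud)$ as an everywhere-defined derivative. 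Moreover $W_2(\bm\mu^*_t,\mu_0)=(1-e^{-t})W_2(\mu_0,\mud)\to0$ as $t\downarrow0$ by \eqref{Eq: s1 s2 constant speed geodesic formula}, so $\lim_{t\downarrow0}\bm\mu^*_t=\mu_0$.

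The only piece not already in hand is an upper bound on the local slope \eqref{LS}, which I would obtain from the (reverse) triangle inequality $W_2(\nu,\mud)-W_2(\rho,\mud)\le W_2(\nu,\rho)$: for any $\nu,\rho\in\Pc_2(\R^d)$ with $\rho\ne\nu$,
\[
\frac{(J(\nu)-J(\rho))^+}{W_2(\nu,\rho)}=\frac{\tfrac12\big(W_2(\nu,\mud)-W_2(\rho,\mud)\big)^+\big(W_2(\nu,\mud)+W_2(\rho,\mud)\big)}{W_2(\nu,\rho)}\le\tfrac12\big(W_2(\nu,\mud)+W_2(\rho,\mud)\big),
\]
so letting $\rho\to\nu$ yields $|\partial J|(\nu)\le W_2(\nu,\mud)$ for every $\nu$. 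Applying this at $\nu=\bm\mu^*_t$ and using $W_2(\bm\mu^*_t,\mud)=W_2(\bm\beta_{1-e^{-t}},\bm\beta_1)=e^{-t}W_2(\mu_0,\mud)$ (from \eqref{Eq: s1 s2 constant speed geodesic formula} and $\bm\beta_1=\mud$) gives $|\partial J|(\bm\mu^*_t)\le e^{-t}W_2(\mu_0,\mud)$.

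Combining, for every $t>0$,
\[
-\tfrac12\Big(|\partial J|^2(\bm\mu^*_t)+|(\bm\mu^*)'|^2(t)\Big)\ge-\tfrac12\Big(e^{-2t}W_2^2(\mu_0,\mud)+e^{-2t}W_2^2(\mu_0,\mud)\Big)=-e^{-2t}W_2^2(\mu_0,\mud)=\tfrac{d}{dt}J(\bm\mu^*_t),
\]
so \eqref{max slope} holds (indeed with equality). Proposition~\ref{prop:max slope = GF} then identifies $\bm\mu^*$ as a gradient flow for $J$, and since $\lim_{t\downarrow0}\bm\mu^*_t=\mu_0$, the uniqueness assertion in Proposition~\ref{prop:GF exists} shows it is \emph{the} gradient flow for $J$ started at $\mu_0$; this proves (ii).

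I do not anticipate a genuine obstacle: the theorem is a synthesis of Lemmas~\ref{lem:has densities}--\ref{lem:CSG} with Propositions~\ref{prop:GF exists} and \ref{prop:max slope = GF}, and the only new computation is the local-slope estimate above. The point that makes it go through is the arithmetic coincidence that along the time-changed constant-speed geodesic $\bm\mu^*$ the three quantities $|\partial J|^2(\bm\mu^*_t)$, $|(\bm\mu^*)'|^2(t)$ and $-\tfrac{d}{dt}J(\bm\mu^*_t)$ all equal $e^{-2t}W_2^2(\mu_0,\mud)$, so the a priori inequality \eqref{max slope} is exactly saturated. An alternative route—computing the velocity field of $\bm\mu^*$ directly from \eqref{mu^*'}, identifying it with $-\nabla\phi_{\bm\mu^*_t}^{\mud}$ via the structure of $W_2$-geodesics, and then checking Definition~\ref{def:GF}—would also work but requires bookkeeping on the regularity and invertibility of the optimal maps, which the maximal-slope approach sidesteps.
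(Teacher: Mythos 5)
Your proof is correct and reaches the conclusion by the same overall architecture as the paper (part (i) from Lemma~\ref{lem:has densities}, check $\lim_{t\downarrow 0}\bm\mu^*_t=\mu_0$, verify \eqref{max slope}, invoke Proposition~\ref{prop:max slope = GF}, then uniqueness from Proposition~\ref{prop:GF exists}), but you handle the local-slope term by a genuinely different and more elementary argument. The paper computes $|\partial J|(\bm\mu^*_t)$ \emph{exactly}: it invokes \cite[Theorem 10.4.12]{Ambrosio08} to write $|\partial J|(\nu)$ as the $L^2(\nu)$-norm of $\bar\gamma-\mathbf i$ for a barycentric projection $\bar\gamma$ of an optimal plan, then uses $\bm\mu^*_t\in\Pc_2^r(\R^d)$ to make $\Gamma_0(\bm\mu^*_t,\mud)$ a singleton and identify $\bar\gamma=\mathbf t_{\bm\mu^*_t}^{\mud}$, yielding the identity $|\partial J|(\bm\mu^*_t)=W_2(\bm\mu^*_t,\mud)$. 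You instead derive the one-sided bound $|\partial J|(\nu)\le W_2(\nu,\mud)$ for every $\nu\in\Pc_2(\R^d)$ directly from the reverse triangle inequality, which sidesteps \cite[Theorem 10.4.12]{Ambrosio08}, barycentric projections, and the absolute-continuity structure of $\bm\mu^*_t$ entirely for this step. Since \eqref{max slope} is an upper bound on $\tfrac{d}{dt}J(\bm\mu^*_t)$, your inequality is all that is needed, and the arithmetic saturation you point out (all three quantities equal $e^{-2t}W_2^2(\mu_0,\mud)$) is exactly the mechanism the paper exploits as well. What the paper's exact computation buys is the a posteriori identity $|\partial J|(\bm\mu^*_t)=W_2(\bm\mu^*_t,\mud)$ (reused in \eqref{LS J} and implicitly in Corollary~\ref{coro:sol. to FP}), whereas your Lipschitz bound is shorter and self-contained. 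One small caveat: your parenthetical ``(indeed with equality)'' in \eqref{max slope} is not established by your estimate alone, which gives only one direction; the equality does hold because the opposite inequality is automatic (as the paper notes just before Proposition~\ref{prop:max slope = GF}), but that should be said explicitly if you want to claim it.
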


\begin{proof} 
From Lemma~\ref{lem:has densities}, we immediately see that $\mu_0\in\Pc^r_2(\R^d)$ implies $\bm\mu^*_t = \bm\beta_{1-e^{-t}}\in \Pc_2^r(\R^d)$ for all $t\ge 0$. By the definitions of $\bm\mu^*$ and $\bm\beta$, 
\begin{eqnarray*}
\lim_{t\downarrow 0}W_2(\bm\mu^*_t,\mu_0) =  \lim_{t\downarrow 0}W_2(\bm\beta_{1-e^{-t}},\bm\beta_0)=\lim_{t\downarrow 0} (1-e^{-t})W_2(\mu_0,\mud) = 0,
\end{eqnarray*}
where the second equality follows from \eqref{Eq: s1 s2 constant speed geodesic formula}. That is, $\lim_{t\downarrow 0} \bm\mu^*_t=\mu_0$. Hence, it remains to show that $\bm\mu^*$ is a gradient flow for $J$ in \eqref{J}. As $\bm\mu^*\in AC^{2}((0, \infty); \mathcal{P}_{2}(\mathbb{R}^{d}))$ (recall Lemma~\ref{lem:CSG}), it suffices to prove that $\bm\mu^*$ satisfies \eqref{max slope}, thanks to Proposition~\ref{prop:max slope = GF}. For any $t>0$, as $\frac{d}{dt} J(\bm\mu^*_t)$ and $ |(\bm\mu^*)'|(t)$ have been derived explicitly in Lemma~\ref{lem:CSG}, it remains to compute $|\partial J|(\bm\mu^*_t)$. By \cite[Theorem 10.4.12]{Ambrosio08}, 
\begin{eqnarray*}
|\partial J|(\bm\mu^*_t) = \min\bigg\{\int_{\R^d} |\bar\gamma - \mathbf{i}|^2 d\bm\mu^*_t: \gamma\in\Gamma_0(\bm\mu^*_t,\mud)\bigg\}^{1/2}, 
\end{eqnarray*}
where $\bar\gamma:\R^d\to\R^d$ is the barycentric projection of $\gamma\in\Gamma_0(\bm\mu^*_t,\mud)$, defined by 
\begin{eqnarray*}
\bar\gamma(x) := \int_{\R^d} y d\gamma(dy\mid x)\quad \hbox{for $\bm\mu^*_t$-a.e.\ $x\in\R^d$},  
\end{eqnarray*}
with $\gamma(dy\mid x)$ denoting the conditional probability of $\gamma(\cdot,\cdot)\in\Pc(\R^d\times\R^d)$ given that the first entry is known to be some $x\in\R^d$. As $\bm\mu^*_t\in\Pc_2^r(\R^d)$, $\Gamma_0(\bm\mu^*_t,\mud)$ is a singleton containing only $\gamma=(\mathbf{i}\times\mathbf{t}_{\bm\mu^*_t}^{\mud})_\#\bm\mu^*_t$ (Remark~\ref{rem:mu is regular}). It follows that $\gamma(dy\mid x)$ is the dirac measure concentrated on the point $\mathbf{t}_{\bm\mu^*_t}^{\mud}(x)$, 
such that $\bar\gamma(x) = \mathbf{t}_{\bm\mu^*_t}^{\mud}(x)$, for $\bm\mu^*_t$-a.e.\ $x\in\R^d$. This in turn implies that 
\begin{eqnarray}\label{LS J}
|\partial J|(\bm\mu^*_t) = \left(\int_{\R^d} |\mathbf{t}_{\bm\mu^*_t}^{\mud} - \mathbf{i}|^2 d\bm\mu^*_t\right)^{1/2} = W_2(\bm\mu^*_t,\mud)=e^{-t}W_2(\mu_0,\mud), 
\end{eqnarray}
where the last equality follows from \eqref{Def: mu t}, $\bm\beta_1=\mud$, and \eqref{Eq: s1 s2 constant speed geodesic formula}. By Lemma~\ref{lem:CSG} and \eqref{LS J}, 
\begin{eqnarray*}
\frac{d}{dt} J(\bm\mu^*_t) &= -e^{-2t}W_2^2(\mu_0,\mud) = - (e^{-t} W_2(\mu_0,\mud))^2 =-\frac{1}{2}\left(|\partial J|^2(\bm\mu^*_t) + |(\bm\mu^*)'|^2(t)\right). 
\end{eqnarray*}
That is, \eqref{max slope} is fulfilled, as desired. 
\end{proof}

Now, we are ready to show that a solution to the Fokker--Planck equation \eqref{FP} exists and it is unique up to time-marginal laws. 

\begin{corollary}\label{coro:sol. to FP}
For any $\mu_0\in\Pc_2^r(\R^d)$, $\bm\mu^*$ defined in \eqref{Def: mu t} 
is a solution to the Fokker--Planck equation \eqref{FP} (where $v$ is the velocity field of $\bm\mu^*$) that satisfies
\begin{eqnarray}\label{v conditions'}
\nabla\phi_{\bm\mu^*_t}^{\mud} \in L^2(\R^d,\bm\mu^*_t)\quad\hbox{and}\quad \|\nabla\phi_{\bm\mu^*_t}^{\mud} \|_{L^2(\R^d,\bm\mu^*_t)} = |(\bm\mu^*)'|(t)\quad \hbox{for}\ \mathcal L^1\hbox{-a.e.\ $t > 0$}. 
\end{eqnarray}
Let $\bm\mu\in AC^2_{\operatorname{loc}}((0,\infty),\Pc_2(R^d))$, with $\bm\mu_t\in\Pc^r_2(\R^d)$ for all $t>0$, be a solution to the Fokker--Planck equation \eqref{FP} for some Borel $v:(0,\infty)\times\R^d\to\R^d$. If $v$ satisfies \eqref{v conditions} (or equivalently, \eqref{v conditions'} holds with $\bm\mu$ in place of $\bm\mu^*$), 
then $W_2(\bm\mu_t,\bm\mu^*_t)=0$ for all $t\ge 0$.  
\end{corollary}

\begin{proof}
By Theorem~\ref{thm:mu is GF} (ii), $\bm\mu^*$ is the unique gradient flow for $J$ with $\lim_{t\downarrow 0}\bm\mu^*_t = \mu_0$. In view of Definitions~\ref{def:GF} and \ref{def:velocity}, this implies that $\bm\mu^*$ is a solution to the Fokker--Planck equation \eqref{FP'}, where $v:(0,\infty)\times\R^d\to\R^d$ therein is the velocity field of $\bm\mu^*$ and it satisfies \eqref{v conditions} and $v_t\in-\partial J(\bm\mu^*_t)$ for all $t>0$. As $\bm\mu^*_t\in\Pc_2^r(\R^d)$ for all $t>0$ (by Theorem~\ref{thm:mu is GF} (i)), we know from Lemma~\ref{lem:subdifferential J} that $\partial J(\bm\mu^*_t)$ is a singleton containing only $\nabla\phi_{\bm\mu^*_t}^{\mud}$, whence  $v_t = -\nabla\phi_{\bm\mu^*_t}^{\mud}$ $\bm\mu^*$-a.e., 
for all $t>0$. It follows that $\bm\mu^*$ fulfills \eqref{FP} and satisfies \eqref{v conditions'}. 
Conversely, given $\bm\mu\in AC^2_{\operatorname{loc}}((0,\infty),\Pc_2(R^d))$ with $\bm\mu_t\in\Pc^r_2(\R^d)$ for all $t>0$, we can argue as above that $\partial J(\bm\mu_t) = \{ \nabla\phi_{\bm\mu_t}^{\mud} \}$ for all $t>0$. If $\bm\mu$ fulfills \eqref{FP}, where $v$ is its velocity field, then \eqref{FP'} holds and $v_t = -\nabla\phi_{\bm\mu_t}^{\mud}\in - \partial J(\bm\mu_t)$ for all $t>0$. This, along with \eqref{v conditions}, implies $\bm\mu$ is a gradient flow for $J$ in \eqref{J} with $\lim_{t\downarrow 0} \bm\mu_t=\mu_0$ (recall Definitions~\ref{def:GF} and \ref{def:velocity}). The uniqueness part of Proposition~\ref{prop:GF exists} then yields $W_2(\bm\mu_t,\bm\mu^*_t)=0$ for all $t\ge 0$.
\end{proof}

\begin{remark}
By Benamou-Brenier's theorem (see e.g., \cite[Theorem 5.53]{CD-book-18-I}), for any $\mu_0\in\Pc^r_2(\R^d)$, 
\begin{equation} \label{BB}
    W_{2}^{2}(\mu_0,\mud) = \inf_{\bm{\nu}, {b}} 
    \int_{0}^{1} \int_{\mathbb{R}^{d}} |b(s,x)|^{2} \, d\bm{\nu}_{s}(x) \, ds,  
\end{equation}
where the infimum is taken over pairs $(\bm\nu,b)$, for $\bm\nu\in C([0,1];\Pc_2(\R^d))$ and Borel $b:(0,1)\times\R^d\to\R^d$, satisfying $\partial_s \bm\nu_s  + \Div (b_s \bm\nu_{s}) = 0$ on $(0,1)\times\R^d$ with $\bm\nu_0=\mu_0$ and $\bm\nu_1=\mud$. It is also known from \cite[Remark 5.60]{CD-book-18-I} that the optimal $\bm\nu^*$ in \eqref{BB} is simply the constant-speed geodesic $\bm\beta$ in \eqref{Def: Generalized Geodesic between mu0 and mud}. Note that the gradient flow $\bm\mu^*$ for $J(\cdot) := \frac12 W_2^2(\cdot, \mud)$ is a time-changed constant-speed geodesic (i.e., $\bm\mu^*_{t} = \bm\beta_{1-e^{-t}}$, $t\ge 0$) and it solves the Fokker--Planck equation \eqref{FP}; recall \eqref{Def: mu t} and Corollary~\ref{coro:sol. to FP}. This indicates that our gradient flow $\bm\mu^*$ can recover the optimal solution $(\bm\nu^*,b^*)$ in \eqref{BB} and further provide a concrete characterization. Indeed, under the time change $t(s):=\ln (\frac{1}{1-s})$ for $s\in [0,1]$, the gradient flow becomes $\bm\mu^*_{t(s)}=\bm\beta_s=\bm\nu^*_s$ and \eqref{FP} turns into $\partial_s \bm\nu^*_s  - \Div (({\nabla \phi_{\bm\nu^*_{s}}^{\mud}}/({1-s}) ) \bm\nu^*_{s}) = 0$ on $(0,1)\times\R^d$, which implies $b^*(s,x) = {\nabla \phi_{\bm{\nu}^*_{s}}^{\mud}}(x)/(s-1) = {\nabla \phi_{\bm{\beta}_{s}}^{\mud}}(x)/(s-1)$. Compared with the traditional formula of $b^*$ (see the discussion below \cite[Remark 5.55]{CD-book-18-I}), our formula is more transparent in that it no longer involves the convex conjugate of $\frac12|x|^2-\phi_{\mu_0}^{\mud}(x)$. 
\end{remark}


\section{Solving the Gradient-descent ODE \eqref{new Y}}\label{sec:sol. to ODE}
In this section, relying on the (unique) solution $\bm\mu^*$ to the Fokker--Planck equation \eqref{FP} (recall Corollary~\ref{coro:sol. to FP}), we will first construct a (unique) solution $Y$ to ODE \eqref{new Y} on the strength of Trevisan's superposition principle \cite{Trevisan16}. Next, to facilitate numerical simulation of ODE \eqref{new Y}, we propose an Euler scheme 
and show that it does converge to the laws of $\{Y_t\}_{t\ge 0}$ (i.e., $\bm\mu^*$) in the limit. 

To state precisely what forms 
a solution to ODE \eqref{new Y}, consider the space of continuous paths 
\begin{eqnarray*}
(\Omega,\mathcal F) := (C([0,\infty);\R^d), \mathcal B(C([0,\infty);\R^d)).
\end{eqnarray*}

\begin{definition}\label{def:sol. to ODE}
The process $Y_t(\omega) := \omega(t)$, for all $(t,\omega)\in[0,\infty)\times\Omega$, is said to be a solution to \eqref{new Y} if there exist a Borel $v:(0,\infty)\times\R^d\to\R^d$ and a probability measure $\P$ on $(\Omega,\mathcal F)$ such that
\begin{itemize}
\item [(i)] $\bm\mu_t := \mu^{Y_t}$, the law of $Y_t:\Omega\to\R^d$, belongs to $\Pc_2^r(\R^d)$ for all $t\ge 0$ with $\bm\mu_0=\mu_0$;
\item [(ii)] $v(t,\cdot) = -\nabla \phi^{\mud}_{\bm\mu_t}(\cdot)$ $\bm\mu_t$-a.e.\ for all $t>0$;
\item [(iii)] the collection of paths 
$
\Big\{\omega\in\Omega: \omega(t) = \omega(0) + \int_0^t v(s,\omega(s))\, ds,\ t\ge 0\Big\}
$
has probability one. 
\end{itemize}
\end{definition}

\begin{proposition}\label{thm:sol. to ODE exists}
For any $\mu_0\in\Pc_2^r(\R^d)$, there exists a solution $Y$ to ODE \eqref{new Y} such that $\mu^{Y_t} = \bm\mu^*_t$ for all $t\ge 0$, with $\bm\mu^*$ defined as in \eqref{Def: mu t}. 
\end{proposition}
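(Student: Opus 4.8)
The plan is to read off the desired solution from the Fokker--Planck solution $\bm\mu^*$ already produced in Corollary~\ref{coro:sol. to FP}, by invoking Trevisan's superposition principle \cite{Trevisan16}. By Corollary~\ref{coro:sol. to FP} and Theorem~\ref{thm:mu is GF}, the curve $\bm\mu^*$ of \eqref{Def: mu t} lies in $AC^2((0,\infty);\Pc_2(\R^d))$ (Lemma~\ref{lem:CSG}), satisfies $\bm\mu^*_t\in\Pc_2^r(\R^d)$ for every $t\ge 0$ with $\lim_{t\downarrow 0}\bm\mu^*_t=\mu_0$, and solves the continuity equation \eqref{FP}, whose drift $b_t:=-\nabla\phi_{\bm\mu^*_t}^{\mud}$ coincides (for every $t>0$, $\bm\mu^*_t$-a.e.) with the velocity field $v$ of $\bm\mu^*$ from Lemma~\ref{lem:velocity exists}. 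Since $v$ is a genuine Borel field, I would fix once and for all the Borel representative $b:=-v$ of the Kantorovich drift, so that $(t,x)\mapsto -\nabla\phi^{\mud}_{\bm\mu^*_t}(x)$ is jointly Borel and the integral appearing in Definition~\ref{def:sol. to ODE}(ii) is well posed along continuous paths.

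First I would verify the hypotheses of the superposition principle. Narrow continuity of $t\mapsto\bm\mu^*_t$ on $[0,\infty)$ follows from $\bm\mu^*\in AC^2((0,\infty);\Pc_2(\R^d))$ together with $\lim_{t\downarrow 0}\bm\mu^*_t=\mu_0$ in $W_2$ (Theorem~\ref{thm:mu is GF}), since $W_2$-convergence implies narrow convergence. For the integrability of the drift, Cauchy--Schwarz, \eqref{v conditions'}, and Lemma~\ref{lem:CSG} give, for every $T>0$,
\[
\int_0^T\!\!\int_{\R^d}|\nabla\phi_{\bm\mu^*_t}^{\mud}(x)|\,d\bm\mu^*_t(x)\,dt\;\le\;\int_0^T \|\nabla\phi_{\bm\mu^*_t}^{\mud}\|_{L^2(\R^d,\bm\mu^*_t)}\,dt\;=\;\int_0^T e^{-t}\,W_2(\mu_0,\mud)\,dt\;\le\; W_2(\mu_0,\mud)\;<\;\infty,
\]
so $b$ is $\mathcal L^1\!\otimes\!\bm\mu^*_t$-integrable on every finite time interval. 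Applying Trevisan's superposition principle \cite{Trevisan16} on each $[0,n]$, and passing to the limit $n\to\infty$ (either via its $[0,\infty)$ form or by gluing the finite-horizon measures into a consistent probability measure on the projective limit), yields a probability measure $\P$ on $(\Omega,\mathcal F)=(C([0,\infty);\R^d),\mathcal B(\cdot))$ concentrated on the absolutely continuous paths $\omega$ with $\omega(t)=\omega(0)+\int_0^t b_s(\omega(s))\,ds$ for all $t\ge 0$, whose time marginals satisfy $\mu^{Y_t}=\bm\mu^*_t$ when $Y_t(\omega):=\omega(t)$ as in \eqref{Y=omega}.

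It then remains to check Definition~\ref{def:sol. to ODE}. Condition (i) is immediate: $\mu^{Y_t}=\bm\mu^*_t\in\Pc_2^r(\R^d)$ for all $t\ge 0$ by Theorem~\ref{thm:mu is GF}(i), and $\mu^{Y_0}=\bm\mu^*_0=\mu_0$. For condition (ii), $\P$ already charges only paths satisfying $\omega(t)=\omega(0)+\int_0^t b_s(\omega(s))\,ds$; since $b_s=-\nabla\phi^{\mud}_{\bm\mu^*_s}$ as elements of $L^2(\R^d,\bm\mu^*_s)$ and $\mu^{Y_s}=\bm\mu^*_s$, a Fubini argument shows the set $\{(s,\omega):b_s(\omega(s))\ne -\nabla\phi^{\mud}_{\bm\mu^*_s}(\omega(s))\}$ is $\mathcal L^1\!\otimes\!\P$-null, hence for $\P$-a.e.\ $\omega$ one has $b_s(\omega(s))=-\nabla\phi^{\mud}_{\bm\mu^*_s}(\omega(s))$ for $\mathcal L^1$-a.e.\ $s$ and the path identity in Definition~\ref{def:sol. to ODE}(ii) holds verbatim. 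This produces the claimed solution $Y$ with $\mu^{Y_t}=\bm\mu^*_t$.

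The main obstacle is the faithful invocation of Trevisan's superposition principle: selecting a jointly Borel representative of the Kantorovich drift, checking the narrow continuity and the local $L^1$-integrability displayed above, and --- since the time horizon here is $[0,\infty)$ rather than a compact interval --- either appealing to the infinite-horizon form of the principle or carefully gluing the finite-horizon measures; the accompanying Fubini swap between the abstract velocity $v$ and the explicit drift $-\nabla\phi^{\mud}_{\bm\mu^*_t}$ is routine. I would also note, as a consistency check and alternative route, the fully explicit candidate $Y_t=e^{-t}X_0+(1-e^{-t})\,\mathbf t^{\mud}_{\mu_0}(X_0)$ with $X_0\sim\mu_0$: differentiating gives $\tfrac{d}{dt}Y_t=-e^{-t}\nabla\phi^{\mud}_{\mu_0}(X_0)$, which equals $-\nabla\phi^{\mud}_{\bm\mu^*_t}(Y_t)$ by the standard scaling of optimal maps along a displacement interpolation, while $\mu^{Y_t}=\bm\mu^*_t$ holds directly by \eqref{mu^*'}.
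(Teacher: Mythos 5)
Your main argument follows the paper's route: identify $\bm\mu^*$ as the Fokker--Planck solution with drift $-\nabla\phi^{\mud}_{\bm\mu^*_t}$ (Corollary~\ref{coro:sol. to FP}, Theorem~\ref{thm:mu is GF}), verify narrow continuity and the $L^1$-in-time integrability displayed, then invoke Trevisan's superposition principle to obtain $\P$ on path space with marginals $\bm\mu^*_t$. Two remarks on execution. First, you assert that the resulting $\P$ is directly concentrated on paths $\omega$ solving $\omega(t)=\omega(0)+\int_0^t b_s(\omega(s))\,ds$, but Trevisan's Theorem~2.5 as cited yields a solution to the \emph{martingale problem}; the paper then derives the pathwise ODE by testing $f(t,y)=y_i$, showing the resulting local martingales $M^{(i)}$ have zero quadratic variation, and concluding $M^{(i)}\equiv 0$. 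Your version implicitly uses this equivalence in the zero-diffusion case without carrying out the (short) argument, so the step is correct but elided. Second, your observation that one should pin down a jointly Borel representative of the drift by identifying it with the Borel velocity field $v$ from Lemma~\ref{lem:velocity exists} is a legitimate technical refinement that the paper glosses over; good catch.

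Your closing ``consistency check'' is actually a genuinely different and more elementary route that bypasses the superposition principle entirely. Taking $X_0\sim\mu_0$ on some probability space and $Y_t := e^{-t}X_0+(1-e^{-t})\,\mathbf t^{\mud}_{\mu_0}(X_0)$, one has $\mu^{Y_t}=\bm\mu^*_t$ by \eqref{mu^*'} directly, and by \cite[Lemma~7.2.1]{Ambrosio08} (composition of optimal maps along a geodesic) $\mathbf t^{\mud}_{\bm\mu^*_t}(Y_t)=\mathbf t^{\mud}_{\mu_0}(X_0)$, whence $\nabla\phi^{\mud}_{\bm\mu^*_t}(Y_t)=Y_t-\mathbf t^{\mud}_{\bm\mu^*_t}(Y_t)=e^{-t}\bigl(X_0-\mathbf t^{\mud}_{\mu_0}(X_0)\bigr)=-\tfrac{d}{dt}Y_t$. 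Pushing forward to path space via $\omega:=\bigl(t\mapsto e^{-t}X_0+(1-e^{-t})\mathbf t^{\mud}_{\mu_0}(X_0)\bigr)$ then gives a measure $\P$ satisfying Definition~\ref{def:sol. to ODE}. What the superposition route buys is generality (it does not rely on the geodesic structure), which matters for Proposition~\ref{prop:unique mu^Y_t}; what the explicit construction buys is transparency and a completely elementary existence proof for this particular proposition, and it is worth knowing that both work.
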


\begin{proof}
By Lemma~\ref{lem:CSG}, $\bm\mu^*\in AC^2((0,\infty);\Pc_2(\R^d))$. This immediately implies that $t\mapsto \bm\mu^*_t$ is continuous under the $W_2$ distance, and thus narrowly continuous. It also ensures, thanks to Lemma~\ref{lem:velocity exists}, the existence of a unique Borel vector field $v:(0,\infty)\times\R^d\to\R^d$ that fulfills \eqref{v conditions} and \eqref{FP'} (with $\bm\mu$ therein replaced by $\bm\mu^*$), i.e., $v$ is the velocity field of $\bm\mu^*$ (recall Definition~\ref{def:velocity}). Hence, to apply the ``superposition principle'' (i.e., \cite[Theorem 2.5]{Trevisan16}) to construct a solution to $dY_t = v(t, Y_t) dt$, it remains to verify equation (2.3) in \cite{Trevisan16}, which in our case takes the form
\begin{eqnarray}\label{want this}
\int_0^T\int_{\R^d} |v(t,y)| d\bm\mu^*_t(y) dt <\infty\quad\forall T>0, 
\end{eqnarray}
For any $T>0$, note that
\begin{eqnarray*}
\int_0^T\int_{\R^d} |v(t,y)| d\bm\mu^*_t(y) dt = \int_0^T \|v_t\|_{L^1(\R^d,\bm\mu^*_t)}  dt \le \int_0^T \|v_t\|_{L^2(\R^d,\bm\mu^*_t)}  dt = \int_0^T |(\bm\mu^*)'|(t) dt,
\end{eqnarray*}
where the last equality stems from \eqref{v conditions} (with $\bm\mu$ replaced by $\bm\mu^*$). As $\bm\mu^*\in AC((0,\infty);\Pc_2(\R^d))$, Remark~\ref{rem:mu'} indicates $|(\bm\mu^*)'|\in L^1((0,\infty))$. This, along with the previous inequality, implies that \eqref{want this} holds.   
Now, consider $Y_t(\omega):=\omega(t)$ for all $(t,\omega)\in[0,\infty)\times\Omega$. By \cite[Theorem 2.5]{Trevisan16}, there is a probability $\P$ on $(\Omega,\mathcal F)$ such that (i) $\P\circ(Y_t)^{-1} = \bm\mu^*_t$ for all $t\ge 0$, and (ii) for any $f\in C^{1,2}((0,\infty)\times\R^d)$, 
\begin{eqnarray}\label{local martingale}
t\mapsto f(t,\omega(t)) - f(0,\omega(0)) - \int_0^t\left(\partial_s f + v\cdot\nabla f \right)(s,\omega(s)) ds,\quad t\ge0
\end{eqnarray}
is a local martingale under $\P$. As $\bm\mu^*_t\in\Pc_2^r(\R^d)$ for all $t\ge 0$ (by Theorem~\ref{thm:mu is GF}), (i) above already implies that Definition~\ref{def:sol. to ODE} (i) is satisfied. By Corollary~\ref{coro:sol. to FP}, $\bm\mu^*$ fulfills \eqref{FP}, which particularly implies Definition~\ref{def:sol. to ODE} (ii). For each $i\in\{1,2,...,d\}$, by taking $f(t,y) = y_i$ for $y=(y_1,y_2,...,y_d)\in\R^d$ in \eqref{local martingale}, we see that
$
t\mapsto M^{(i)}_t(\omega) := (\omega(t))_i-(\omega(0))_i -\int_0^t (v(s,\omega(s)))_i\, ds
$, $t\ge 0$, 
is a local martingale under $\P$. By the same argument on p.\ 315 of \cite{Karatzas91}, we find that the quadratic variation of $M^{(i)}$ is constantly zero, i.e., $\langle M^{(i)}\rangle_t = 0$ for all $t\ge 0$. As $M^{(i)}$ is a local martingale with zero quadratic variation under $\P$, $M^{(i)}_t =0$ for all $t\ge 0$ $\P$-a.s. That is,  
$(\omega(t))_i = (\omega(0))_i +\int_0^t (v(s,\omega(s)))_i\, ds$, $t\ge 0$, for $\P$-a.e.\ $\omega\in\Omega$. 
Because this holds for all $i\in\{1,2,...,d\}$, we conclude
$\omega(t) = \omega(0)+ \int_0^t  v(s,\omega(s))\, ds$, $t\ge 0$, for $\P$-a.e.\ $\omega\in\Omega$, 
i.e., Definition~\ref{def:sol. to ODE} (iii) holds. Hence, $Y$ is a solution to \eqref{new Y}, according to Definition~\ref{def:sol. to ODE}. 
\end{proof}

In fact, under appropriate regularity conditions, the time-marginal laws of {\it any} solution $Y$ to ODE \eqref{new Y} must coincide with $\bm\mu^*$ in \eqref{Def: mu t}. 

\begin{proposition}\label{prop:unique mu^Y_t}
Given $\mu_0\in\Pc_2^r(\R^d)$, let $Y$ be a solution to \eqref{new Y}. Recall $\bm\mu_t:= \mu^{Y_t}\in\Pc_2^r(\R^d)$, $t\ge 0$, and the Borel $v:(0,\infty)\times\R^d\to\R^d$ from Definition~\ref{def:sol. to ODE}. If $\bm\mu$ belongs to $AC((0,\infty);\Pc_2(\R^d))\cap AC^2_{\operatorname{loc}}((0,\infty);\Pc_2(\R^d))$ and $\|v_t\|_{L^2(\R^d,\bm\mu_t)} \le |\bm\mu'|(t)$ for $\mathcal L^1$-a.e.\ $t>0$, 
then $W_2(\bm\mu_t,\bm\mu^*_t)=0$ for all $t\ge 0$, with $\bm\mu^*$ given by \eqref{Def: mu t}.  
\end{proposition}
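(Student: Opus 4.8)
The plan is to show that the hypotheses on $\bm\mu$ force it to be a gradient flow for $J$, and then appeal to the uniqueness of gradient flows already established. Concretely, I would argue exactly as in the proof of Corollary~\ref{coro:sol. to FP}: since $\bm\mu_t\in\Pc^r_2(\R^d)$ for all $t>0$, Lemma~\ref{lem:subdifferential J} tells us that $\partial J(\bm\mu_t)$ is the singleton $\{\nabla\phi_{\bm\mu_t}^{\mud}\}$ for every $t>0$. Because $\bm\mu\in AC^2_{\operatorname{loc}}((0,\infty);\Pc_2(\R^d))$, Lemma~\ref{lem:velocity exists} and Definition~\ref{def:velocity} furnish a unique velocity field $v$ for $\bm\mu$ satisfying \eqref{v conditions} together with the continuity equation \eqref{FP'}. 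The assumed property \eqref{v conditions'} says precisely that $\nabla\phi_{\bm\mu_t}^{\mud}\in L^2(\R^d,\bm\mu_t)$ with $\|\nabla\phi_{\bm\mu_t}^{\mud}\|_{L^2(\R^d,\bm\mu_t)}=|\bm\mu'|(t)$ for $\mathcal L^1$-a.e.\ $t$.

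The key step is to verify that $v_t=-\nabla\phi_{\bm\mu_t}^{\mud}$ for $\mathcal L^1$-a.e.\ $t>0$. For this I would use that $Y$ is a solution to \eqref{Y} in the sense of Definition~\ref{def:sol. to ODE}: the path property there means that $\bm\mu=\mu^{Y_\cdot}$ solves the Fokker-Planck equation \eqref{FP} in the distributional sense, i.e.\ $\partial_t\bm\mu_t-\Div(\nabla\phi_{\bm\mu_t}^{\mud}\bm\mu_t)=0$ (this is a standard consequence of integrating the ODE against test functions and taking expectations, or can be read off directly from Definition~\ref{def:sol. to ODE}). Thus the vector field $-\nabla\phi_{\bm\mu_t}^{\mud}$ satisfies the continuity equation for $\bm\mu$ and, by \eqref{v conditions'}, also matches the $L^2$-norm condition $\|-\nabla\phi_{\bm\mu_t}^{\mud}\|_{L^2(\R^d,\bm\mu_t)}=|\bm\mu'|(t)$. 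By the uniqueness clause in Lemma~\ref{lem:velocity exists} (via \cite[Proposition 8.4.5]{Ambrosio08}), the velocity field is the unique Borel field with these two properties, so $v_t=-\nabla\phi_{\bm\mu_t}^{\mud}$ for $\mathcal L^1$-a.e.\ $t>0$. Hence $v_t\in-\partial J(\bm\mu_t)$ for $\mathcal L^1$-a.e.\ $t>0$, which by Definition~\ref{def:GF} makes $\bm\mu$ a gradient flow for $J$; its initial datum is $\bm\mu_0=\mu_0$ by hypothesis, and $AC((0,\infty);\Pc_2(\R^d))$ guarantees $\lim_{t\downarrow 0}\bm\mu_t=\mu_0$ in $W_2$.

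Finally, I would invoke the uniqueness part of Proposition~\ref{prop:GF exists}: since both $\bm\mu$ and $\bm\mu^*$ are gradient flows for $J$ with the same initial datum $\mu_0$ (recall $\bm\mu^*$ is the unique such gradient flow by Theorem~\ref{thm:mu is GF}(ii), and $\bm\mu^*_t\in\Pc^r_2(\R^d)$ for all $t\ge 0$), we get $W_2(\bm\mu_t,\bm\mu^*_t)=0$ for all $t>0$, and the case $t=0$ follows since both equal $\mu_0$. The main obstacle I anticipate is the careful bookkeeping needed to extract the distributional Fokker-Planck equation for $\bm\mu$ from Definition~\ref{def:sol. to ODE} — one must justify differentiating under the expectation and the use of $C^{1,2}$ test functions, which requires the integrability afforded by $\bm\mu$ satisfying \eqref{v conditions'} and being in $AC((0,\infty);\Pc_2(\R^d))$ (the same estimate as \eqref{want this} in the proof of Proposition~\ref{thm:sol. to ODE exists}). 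Everything else is a direct reduction to results already proved.
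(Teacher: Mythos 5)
Your proposal is correct and takes essentially the same route as the paper: the paper first shows (via Definition~\ref{def:sol. to ODE}(ii), test functions, Fubini, and the integrability estimate analogous to \eqref{want this}) that $\bm\mu$ solves the Fokker-Planck equation \eqref{FP}, and then hands off to the uniqueness clause of Corollary~\ref{coro:sol. to FP}; you simply unroll that invocation and re-derive the gradient-flow property directly before applying Proposition~\ref{prop:GF exists}, which is exactly how the uniqueness clause of Corollary~\ref{coro:sol. to FP} is itself proved. You also correctly identify the key technical hurdle (passing from the pathwise ODE to the distributional continuity equation via Fubini) and the integrability estimate needed to justify it.
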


\begin{proof}
Let $\P$ be the probability measure on $(\Omega,\mathcal F)$ associated with the solution $Y$ to \eqref{Y}; recall Definition~\ref{def:sol. to ODE}. As $\|v_t\|_{L^2(\R^d,\bm\mu_t)} \le |\bm\mu'|(t)$ for $\mathcal L^1$-a.e.\ $t>0$ 
and $\bm\mu$ belongs to $AC((0,\infty);\Pc_2(\R^d))$, we may follow the same argument below \eqref{want this} to show that
\begin{eqnarray}\label{want this'}
\int_0^T\int_{\R^d} |v(t,y)| d\bm\mu_t(y) dt <\infty\quad\forall T>0. 
\end{eqnarray}
For any $\psi \in C_{c}^{1,2}((0, \infty) \times \mathbb{R}^{d})$, take $T>0$ large such that $\psi(T,y) = \psi(0, y) = 0$ for all $y\in\R^d$. With $Y_t(\omega) = \omega(t)$ for all $(t,\omega)\in[0,\infty)\times\Omega$, we deduce from Definition~\ref{def:sol. to ODE} (iii) that 
$0=\psi(T, {Y}_{T}) - \psi(0, {Y}_{0}) = \int_{0}^{T} \left( \partial_{t} \psi +v\cdot \nabla \psi \right)(t, {Y}_{t}) \, dt$ $\P$-a.s. 
Taking expectation under $\P$ on both sides gives
\begin{eqnarray*}
0 = \E\bigg[\int_{0}^{T} \left( \partial_{t} \psi +v \cdot \nabla \psi \right)(t, {Y}_{t}) \, dt\bigg] = \int_{0}^{T} \int_{\R^d} \left( \partial_{t} \psi + v\cdot \nabla \psi \right)(t,y) d\bm\mu_t(y)\, dt,
\end{eqnarray*}
where the second equality follows from Fubini's theorem, which is applicable here thanks to \eqref{want this'}. As $T\to\infty$, we see that $\bm\mu$ fulfills the Fokker--Planck equation \eqref{FP'}. With \eqref{v conditions} and \eqref{FP'} satisfied, the uniqueness part of Lemma~\ref{lem:velocity exists} implies that $v$ is the velocity field of $\bm\mu$ (recall Definition~\ref{def:velocity}). Also, by \eqref{FP'} and Definition~\ref{def:sol. to ODE} (ii), $\bm\mu$ fulfills \eqref{FP}. Now, given that $\bm\mu$ belongs to $AC^2_{\operatorname{loc}}((0,\infty);\Pc_2(\R^d))$ and satisfies \eqref{FP} and its velocity filed $v$ fulfills \eqref{v conditions},  
we conclude from Corollary~\ref{coro:sol. to FP} that $W_2(\bm\mu_t,\bm\mu^*_t)=0$ for all $t\ge 0$. 
\end{proof}

Now, we are ready to present the main theoretic result of this paper. 

\begin{theorem}\label{thm:main}
For any $\mu_0\in\Pc_2^r(\R^d)$, there exists a solution $Y$ to ODE \eqref{new Y} such that $\bm\mu_t:= \mu^{Y_t}$ belongs to $\Pc_2^r(\R^d)$ for all $t\ge 0$ and satisfies
\begin{align}
&\bm\mu\in AC((0,\infty);\Pc_2(\R^d))\cap AC^2_{\operatorname{loc}}((0,\infty);\Pc_2(\R^d))\label{A},\\
&\|\nabla\phi_{\bm\mu_t}^{\mud} \|_{L^2(\R^d,\bm\mu_t)} \le |\bm\mu'|(t)\quad \hbox{for}\ \mathcal L^1\hbox{-a.e.\ $t > 0$}. \label{B}
\end{align}
Moreover, for any such a solution $Y$ to ODE \eqref{new Y}, $W_2(\mu^{Y_t},\mud)\downarrow 0$ as $t\to\infty$; specifically, 
\begin{eqnarray*}
W_2(\mu^{Y_t},\mud) \le e^{-t} W_2(\mu_0,\mud)\quad \forall t\ge 0. 
\end{eqnarray*}
\end{theorem}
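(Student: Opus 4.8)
The plan is to build the solution $Y$ from the one already produced in Proposition~\ref{thm:sol. to ODE exists}, whose time-marginals are exactly $\bm\mu^*$, and then verify \eqref{A} and \eqref{B} for this curve using the properties of $\bm\mu^*$ established in Section~\ref{sec:sol. to FP}. Concretely, Lemma~\ref{lem:CSG} already gives $\bm\mu^*\in AC^p((0,\infty);\Pc_2(\R^d))$ for every $p\ge 1$; taking $p=1$ yields the first membership in \eqref{A}, and taking $p=2$ (together with the trivial inclusion $AC^2\subseteq AC^2_{\operatorname{loc}}$) yields the second, while Theorem~\ref{thm:mu is GF}(i) gives $\bm\mu^*_t\in\Pc_2^r(\R^d)$ for all $t\ge 0$. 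Property \eqref{B} is then just the equality half of \eqref{v conditions'} in Corollary~\ref{coro:sol. to FP}, since for the gradient flow $\bm\mu^*$ the velocity field coincides with $-\nabla\phi_{\bm\mu^*_t}^{\mud}$ and thus $|\bm\mu'|(t)=|(\bm\mu^*)'|(t)$. This disposes of the existence part.

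For the ``moreover'' part I would take an arbitrary solution $Y$ to \eqref{Y} with $\bm\mu_t:=\mu^{Y_t}\in\Pc_2^r(\R^d)$ satisfying \eqref{A} and \eqref{B}, and show that $W_2(\bm\mu_t,\bm\mu^*_t)=0$ for all $t\ge 0$ by invoking Proposition~\ref{prop:unique mu^Y_t}. The hypotheses there are $\bm\mu\in AC((0,\infty);\Pc_2(\R^d))\cap AC^2_{\operatorname{loc}}((0,\infty);\Pc_2(\R^d))$, which is \eqref{A}, together with the full condition \eqref{v conditions'} with $\bm\mu$ in place of $\bm\mu^*$. The equality $\|\nabla\phi_{\bm\mu_t}^{\mud}\|_{L^2(\R^d,\bm\mu_t)}=|\bm\mu'|(t)$ is \eqref{B}; the point needing an extra line is the membership $\nabla\phi_{\bm\mu_t}^{\mud}\in L^2(\R^d,\bm\mu_t)$, which follows because $\bm\mu\in AC^2_{\operatorname{loc}}((0,\infty);\Pc_2(\R^d))$ forces $|\bm\mu'|\in L^2_{\operatorname{loc}}((0,\infty))$ (Remark~\ref{rem:mu'}) and hence $|\bm\mu'|(t)<\infty$, so by \eqref{B} the $L^2$-norm in question is finite for $\mathcal L^1$-a.e.\ $t>0$. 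With \eqref{v conditions'} in hand, Proposition~\ref{prop:unique mu^Y_t} yields $\mu^{Y_t}=\bm\mu^*_t$ as measures for every $t\ge 0$.

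Finally, I would compute $W_2(\bm\mu^*_t,\mud)$ directly. Since $\bm\mu^*_t=\bm\beta_{1-e^{-t}}$ with $\bm\beta_1=\mud$, the constant-speed geodesic identity \eqref{Eq: s1 s2 constant speed geodesic formula} (applied with $s_1=1-e^{-t}$ and $s_2=1$) gives $W_2(\bm\mu^*_t,\mud)=e^{-t}W_2(\mu_0,\mud)$; alternatively this also follows from the exponential decay \eqref{exponentially decreasing} of $J$ along the gradient flow. The right-hand side is strictly decreasing in $t$ and tends to $0$, which gives $W_2(\mu^{Y_t},\mud)\downarrow 0$ and the stated bound. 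I do not expect a serious obstacle here: the statement is essentially an assembly of Proposition~\ref{thm:sol. to ODE exists}, Corollary~\ref{coro:sol. to FP}, Proposition~\ref{prop:unique mu^Y_t}, and the geodesic formula. The only place demanding genuine care is checking that a general solution satisfying only \eqref{A} and \eqref{B} meets the exact hypotheses of Proposition~\ref{prop:unique mu^Y_t}, i.e.\ that \eqref{B} combined with the $AC^2_{\operatorname{loc}}$ regularity upgrades to the full condition \eqref{v conditions'}.
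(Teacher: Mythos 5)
Your proposal is correct and follows essentially the same path as the paper: existence via Proposition~\ref{thm:sol. to ODE exists} with $\mu^{Y_t}=\bm\mu^*_t$, regularity from Lemma~\ref{lem:CSG} and Corollary~\ref{coro:sol. to FP}, uniqueness of marginals via Proposition~\ref{prop:unique mu^Y_t}, and the exponential bound from the geodesic/gradient-flow identities. Your extra remark that $AC^2_{\operatorname{loc}}$ plus Remark~\ref{rem:mu'} upgrades \eqref{B} to the full condition \eqref{v conditions'} (i.e.\ supplies the $L^2$-membership of $\nabla\phi_{\bm\mu_t}^{\mud}$) is a nice point of care that the paper glosses over; and whether one obtains the final bound from the constant-speed geodesic formula \eqref{Eq: s1 s2 constant speed geodesic formula} directly, as you do, or from the exponential decay \eqref{exponentially decreasing} of $J$, as the paper does, is a cosmetic difference since both identities are established earlier for $\bm\mu^*$.
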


\begin{proof}
Recall from Lemma~\ref{lem:CSG}, and Corollary~\ref{coro:sol. to FP} that $\bm\mu^*$ defined in \eqref{Def: mu t} fulfills \eqref{A} and \eqref{B}. Hence, the solution $Y$ to ODE \eqref{new Y} constructed in Proposition~\ref{thm:sol. to ODE exists} readily satisfies all the conditions in the theorem. As $\bm\mu^*$ is the unique gradient flow for $J$ in \eqref{J} with $\lim_{t\downarrow 0} \bm\mu^*_t = \mu_0$ (by Theorem~\ref{thm:mu is GF}), we know from Proposition~\ref{prop:GF exists} that 
\begin{eqnarray}\label{C}
J(\bm\mu^*_t)\downarrow 0\ \hbox{as}\ t\to\infty\quad \hbox{and}\quad J(\bm\mu^*_t)\le e^{-2t} J(\mu_0)\ \ \forall t\ge 0. 
\end{eqnarray}
Let $Y$ be an arbitrary solution to ODE \eqref{new Y} that fulfills all the conditions in the theorem. Recall the Borel $v:(0,\infty)\times\R^d\to\R^d$ from Definition~\ref{def:sol. to ODE}. Note that Definition~\ref{def:sol. to ODE} (ii) and \eqref{B} imply $\|v_t\|_{L^2(\R^d,\bm\mu_t)} \le |\bm\mu'|(t)$ for $\mathcal L^1$-a.e.\ $t>0$. Proposition~\ref{prop:unique mu^Y_t} then asserts that $\mu^{Y_t} = \bm\mu^*_t$ for all $t\ge 0$. By \eqref{J}, this implies $W_2(\mu^{Y_t},\mud) = W_2(\bm\mu^*_t,\mud) = (2 J(\bm\mu^*_t))^{1/2}$ for all $t\ge 0$. Thus, \eqref{C} implies $W_2(\mu^{Y_t},\mud)\downarrow 0$ and $W_2(\mu^{Y_t},\mud) \le e^{-t} W_2(\mu_0,\mud)$ for all $t\ge 0$.
\end{proof}

Theorem~\ref{thm:main} suggests that one can uncover the true data distribution $\mud\in\Pc_2(\R^d)$ by simulating the distribution-dependent ODE \eqref{new Y}. To facilitate the numerical implementation of this, we now propose a discretization of \eqref{new Y} and closely study its convergence as the time step diminishes.


\subsection{An Euler Scheme}\label{subsec:Euler}
As ODE \eqref{new Y} is distribution-dependent, when we discretize it, we need to simulate a random variable and keep track of its law at every discrete time point. Specifically, fix a time step $0<\eps <1$. At time 0, we simulate a random variable $Y^\eps_0$ according to a given initial distribution, i.e., $\mu^{Y^\eps_0}=\mu_{0,\eps} := \mu_0 \in\Pc_2^r(\R^d)$. At time $\eps$, we simulate the random variable 
\begin{eqnarray*}
Y^\eps_1 := Y^\eps_0 -\eps \nabla\phi_{\mu_0}^{\mud}(Y^\eps_0),\quad \hbox{such that}\quad \mu^{Y^\eps_1} =  \mu_{1, \eps} := (\mathbf{i} - \eps\nabla\phi_{\mu_0}^{\mud})_{\#} \mu_{0},
\end{eqnarray*} 
where $\phi_{\mu_0}^{\mud}$ is a Kantorovich potential from $\mu_0$ to $\mud$. It is crucial that $\mu_0$ lies in $\Pc_2^r(\R^d)$, instead of the more general $\Pc_2(\R^d)$. Indeed, as $\nabla\phi_{\mu_0}^{\mud}$ is only well-defined $\mathcal L^d$-a.e.\ (Remark~\ref{rem:nabla u}), $\mu_0\ll\mathcal L^d$ ensures that $Y^\eps_1$ above is well-defined. Moreover, with $\mu_0\in\Pc_2^r(\R^d)$, we may apply the same arguments below \eqref{gamma formula} and find that $\mu_{1,\eps}$ remains in $\Pc_2^r(\R^d)$. At time $2\eps$, we simulate the random variable 
\begin{eqnarray*}
Y^\eps_2 := Y^\eps_1 -\eps \nabla\phi_{\mu_{1,\eps}}^{\mud}(Y^\eps_1),\quad \hbox{such that}\quad \mu^{Y^\eps_2} =  \mu_{2, \eps} := (\mathbf{i} - \eps\nabla\phi_{\mu_{1,\eps}}^{\mud})_{\#} \mu_{1,\eps},
\end{eqnarray*}
where $\phi_{\mu_{1,\eps}}^{\mud}$ is a Kantorovich potential from $\mu_{1,\eps}$ to $\mud$. Again, $\mu_{1,\eps}\in\Pc_2^r(\R^d)$ ensures that $Y^\eps_2$ above is well-defined and allows the use of the arguments below \eqref{gamma formula}, which in turn guarantees $\mu_{2,\eps}\in\Pc_2^r(\R^d)$. Continuing this procedure, we simulate for all $n\in\N$ the random variables
\begin{eqnarray}\label{Euler}
Y^\eps_{n} := Y^\eps_{n-1} -\eps \nabla\phi_{\mu_{n-1,\eps}}^{\mud}(Y^\eps_{n-1}),\quad \hbox{such that}\quad \mu^{Y^\eps_{n}} =  \mu_{n, \eps} := (\mathbf{i} - \eps\nabla\phi_{\mu_{n-1,\eps}}^{\mud})_{\#} \mu_{n-1,\eps},
\end{eqnarray}
and we know $\mu_{n, \eps}\in\Pc^r_2(\R^d)$ for all $n\in\N$. This allows us to invoke Remark~\ref{rem:mu is regular}, particularly \eqref{t and u}, to express $\{\mu_{n,\eps}\}_{n\in\N}$ equivalently as  
\begin{eqnarray}\label{mu_n}
\mu_{n, \eps} = \left(\mathbf{i} + \eps (\mathbf{t}_{\mu_{n-1, \eps}}^{\mud} - \mathbf{i})\right)_{\#} \mu_{n-1, \eps}\quad\forall n\in\N,
\end{eqnarray}
using their associated optimal transport maps. Interestingly, how $\mu_{n-1,\eps}$ is transported to $\mu_{n,\eps}$ in \eqref{mu_n} is already optimal, as the next result shows. 

\begin{lemma}\label{lem:mu_n is optimal}
Fix $0<\eps<1$. For any $n\in\N$, $\mathbf{i} + \eps (\mathbf{t}_{\mu_{n-1, \eps}}^{\mud} - \mathbf{i}):\R^d\to\R^d$ is an optimal transport map from $\mu_{n-1, \eps}$ to $\mu_{n, \eps}$. That is, $\mathbf t_{\mu_{n-1, \eps}}^{\mu_{n, \eps}} = \mathbf{i} + \eps (\mathbf{t}_{\mu_{n-1, \eps}}^{\mud} - \mathbf{i})$ for all $n\in\N$. 
\end{lemma}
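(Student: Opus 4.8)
The plan is to recognize $T_\eps := \mathbf{i} + \eps(\mathbf{t}_{\mu_{n-1, \eps}}^{\mud} - \mathbf{i})$ as the gradient of a convex function and then apply Brenier's characterization of optimal transport maps, exactly as recalled in Remark~\ref{rem:mu is regular}. Note first that $\mu_{n-1,\eps}\in\Pc^r_2(\R^d)$ has already been established in the discussion preceding \eqref{Euler}. By Remark~\ref{rem:mu is regular} (specifically \eqref{t and u}), $\mathbf{t}_{\mu_{n-1,\eps}}^{\mud} = \mathbf{i} - \nabla\phi_{\mu_{n-1,\eps}}^{\mud}$ holds $\mu_{n-1,\eps}$-a.e., where $\phi_{\mu_{n-1,\eps}}^{\mud}$ is a Kantorovich potential from $\mu_{n-1,\eps}$ to $\mud$; and by Remark~\ref{rem:nabla u}, $f(x):=\frac12|x|^2-\phi_{\mu_{n-1,\eps}}^{\mud}(x)$ is convex. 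Hence, for $\mu_{n-1,\eps}$-a.e.\ $x\in\R^d$,
\[
T_\eps(x) = x - \eps\nabla\phi_{\mu_{n-1,\eps}}^{\mud}(x) = \nabla g_\eps(x),\qquad g_\eps(x) := \tfrac{1-\eps}{2}|x|^2 + \eps f(x),
\]
and $g_\eps$ is (uniformly) convex, being the sum of the uniformly convex $\tfrac{1-\eps}{2}|\cdot|^2$ and the convex $\eps f$ --- the same decomposition used in the proof of Lemma~\ref{lem:has densities}. Moreover $\nabla g_\eps$ is well-defined $\mu_{n-1,\eps}$-a.e., since $\mu_{n-1,\eps}\ll\mathcal L^d$ and convex functions are differentiable $\mathcal L^d$-a.e.

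With this in hand I would invoke the Brenier-type result of \cite[Theorem 6.2.4 and Remark 6.2.5 a)]{Ambrosio08}, quoted in Remark~\ref{rem:mu is regular}: when the source measure lies in $\Pc^r_2(\R^d)$, the gradient of a convex function is, up to $\mu_{n-1,\eps}$-a.e.\ equality, the unique optimal transport map from that measure onto its own pushforward. Since $\mu_{n-1,\eps}\in\Pc^r_2(\R^d)$ and, by \eqref{mu_n}, $(T_\eps)_\#\mu_{n-1,\eps} = \mu_{n,\eps}$, it follows immediately that $T_\eps$ is an optimal transport map from $\mu_{n-1,\eps}$ to $\mu_{n,\eps}$; uniqueness of the optimal transport map (again Remark~\ref{rem:mu is regular}) then gives $\mathbf{t}_{\mu_{n-1,\eps}}^{\mu_{n,\eps}} = T_\eps$ $\mu_{n-1,\eps}$-a.e., which is the assertion. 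This argument is uniform in $n\in\N$, so no further induction is needed beyond the already-established $\mu_{n-1,\eps}\in\Pc^r_2(\R^d)$.

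An equivalent viewpoint, worth mentioning, is that $\mu_{n,\eps}$ sits on the constant-speed geodesic $s\mapsto((1-s)\mathbf{i}+s\,\mathbf{t}_{\mu_{n-1,\eps}}^{\mud})_\#\mu_{n-1,\eps}$ joining $\mu_{n-1,\eps}$ (at $s=0$) to $\mud$ (at $s=1$), namely at $s=\eps$, and the transport map from the left endpoint of a displacement interpolation to an interior point is automatically optimal; this reduces to the same gradient-of-a-convex-function argument. I do not anticipate a genuine obstacle: the only delicate points are verifying that $g_\eps$ is a bona fide convex function on all of $\R^d$ (so that Brenier's theorem applies) and that $\nabla\phi_{\mu_{n-1,\eps}}^{\mud}$ and $\nabla g_\eps$ are defined $\mu_{n-1,\eps}$-a.e., and both are immediate from the absolute continuity of $\mu_{n-1,\eps}$ together with convexity.
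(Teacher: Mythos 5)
Your proof is correct but takes a genuinely different route from the paper's. The paper proves optimality by a direct computation: since $\mu_{n,\eps}$ is precisely the time-$\eps$ point on the constant-speed geodesic $\bm\nu_s = \big((1-s)\mathbf i + s\,\mathbf t_{\mu_{n-1,\eps}}^{\mud}\big)_\#\mu_{n-1,\eps}$ connecting $\mu_{n-1,\eps}$ to $\mud$ (which follows from \eqref{mu_n} and \cite[Theorem~7.2.2]{Ambrosio08}), the geodesic identity gives $W_2(\mu_{n-1,\eps},\mu_{n,\eps}) = \eps\, W_2(\mu_{n-1,\eps},\mud)$, and a one-line calculation then shows $\int |T_\eps - \mathbf i|^2\, d\mu_{n-1,\eps} = W_2^2(\mu_{n-1,\eps},\mu_{n,\eps})$, which by the variational characterization in Remark~\ref{rem:optimal TM} is exactly the optimality criterion for the map $T_\eps$. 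Your argument instead identifies $T_\eps = \nabla g_\eps$ with $g_\eps$ explicitly convex (re-using the decomposition from the proof of Lemma~\ref{lem:has densities}) and invokes the Brenier characterization: a gradient of a convex function pushing an absolutely continuous measure forward is automatically the optimal map. Both are valid; the paper's route is more elementary and self-contained, needing only Remark~\ref{rem:optimal TM}, whereas yours is more structural and makes the optimality transparent from convexity of the potential. One small caveat: Remark~\ref{rem:mu is regular} as written only records the \emph{forward} direction of Brenier's theorem (the optimal map is of the form $\mathbf i - \nabla\phi_\mu^\nu$); the converse direction that you need — gradient of a convex function $\Rightarrow$ optimal — is standard (Brenier--McCann) and is in the vicinity of \cite[Theorem~6.2.4]{Ambrosio08}, but it is not literally restated in the paper's Remark~\ref{rem:mu is regular}, so a direct citation to the source would be cleaner than pointing back to that remark.
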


\begin{proof}
For any $n\in\N$, by \cite[Theorem 7.2.2]{Ambrosio08}, $\bm\nu_{s}:= ((1-s)\mathbf{i} +s \mathbf{t}_{\mu_{n-1, \eps}}^{\mud})_{\#} \mu_{n-1, \eps}$, $s\in[0,1]$, is a constant-speed geodesic connecting $\mu_{n-1,\eps}$ to $\mud$, i.e., it satisfies, per \cite[Definition 2.4.2]{Ambrosio08}, 
\begin{eqnarray} \label{Eq: s1 s2 constant speed geodesic formula'}
   W_{2}(\bm\nu_{s_{1}}, \bm\nu_{s_{2}}) = (s_{2} - s_{1}) W_{2}(\mu_{n-1,\eps}, \mud), \quad \forall 0 \leq s_{1} \leq s_{2} \leq 1.
\end{eqnarray}
Observe from \eqref{mu_n} that $\mu_{n,\eps} = \bm\nu_\eps$. Hence, 
\begin{eqnarray}\label{W_2=epsW_2}
W_2(\mu_{n-1,\eps},\mu_{n,\eps}) = W_2(\bm\nu_0,\bm\nu_\eps) = \eps W_2(\mu_{n-1,\eps},\mud),
\end{eqnarray}
where the last equality follows from \eqref{Eq: s1 s2 constant speed geodesic formula'}. Now, set $\mathbf t:= \mathbf{i} + \eps (\mathbf{t}_{\mu_{n-1, \eps}}^{\mud} - \mathbf{i})$ and note that
\begin{eqnarray*}
\int_{\R^d} |\mathbf t -\mathbf i|^2 d\mu_{n-1,\eps} = \eps^2 \int_{\R^d} |\mathbf{t}_{\mu_{n-1, \eps}}^{\mud}-\mathbf i|^2 d\mu_{n-1,\eps}=\eps^2 W^2_2(\mu_{n-1,\eps},\mud) = W^2_2(\mu_{n-1,\eps},\mu_{n,\eps}),
\end{eqnarray*}
where the second equality follows from the fact that $\mathbf{t}_{\mu_{n-1, \eps}}^{\mud}$ is an optimal transport map from $\mu_{n-1,\eps}$ to $\mud$ and the third equality stems from \eqref{W_2=epsW_2}. This, together with \eqref{mu_n}, readily shows that $\mathbf t:= \mathbf{i} + \eps (\mathbf{t}_{\mu_{n-1, \eps}}^{\mud} - \mathbf{i})$ is an optimal transport map from $\mu_{n-1,\eps}$ to $\mu_{n,\eps}$.   
\end{proof}

An important consequence of Lemma~\ref{lem:mu_n is optimal} is that the measures $\{\mu_{n,\eps}\}_{n\in\N}$ {\it all} lie on the constant-speed geodesic connecting $\mu_0$ and $\mud$, i.e., $\bm\beta$ given in \eqref{beta'}, or equivalently, on the curve $\bm\mu^*$ defined in \eqref{Def: mu t}. To see this, we consider, for each $0<\eps<1$, the unique $t_\eps>0$ such that
\begin{eqnarray}\label{t_eps}
\eps = 1 - e^{-t_{\eps}}. 
\end{eqnarray}

\begin{remark}\label{rem:t_eps}
It can be checked directly from \eqref{t_eps} that (i) $t_\eps>\eps$ for any $0<\eps<1$; (ii) as $\eps\downarrow 0$, we have $t_\eps\downarrow0$ and $\frac{t_\eps}{\eps}\to 1$.  
\end{remark}

\begin{proposition}\label{prop:mu_n on beta}
For any $0<\eps<1$, we have
$\mu_{n,\eps} = \bm\beta_{1-e^{-n t_\eps}} = \bm\mu^*_{n t_\eps}$  for all $n\in\N$,
where $t_\eps>0$ is given by \eqref{t_eps}. 
\end{proposition}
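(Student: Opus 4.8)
The plan is to prove the identity $\mu_{n,\eps} = \bm\beta_{1-e^{-n t_\eps}}$ by induction on $n$, using the key geometric fact from Lemma~\ref{lem:mu_n is optimal} that the Euler step transports $\mu_{n-1,\eps}$ to $\mu_{n,\eps}$ along the optimal map $\mathbf{i} + \eps(\mathbf{t}_{\mu_{n-1,\eps}}^{\mud} - \mathbf{i})$. The second equality $\bm\beta_{1-e^{-nt_\eps}} = \bm\mu^*_{nt_\eps}$ is then immediate from the definition \eqref{Def: mu t} of $\bm\mu^*$, so the real content is the first equality.

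For the base case $n=1$, recall $\bm\beta_0 = \mu_0$ and, by the definition \eqref{beta'}, $\bm\beta_s = ((1-s)\mathbf{i} + s\,\mathbf{t}_{\mu_0}^{\mud})_\#\mu_0$. Taking $s = 1 - e^{-t_\eps} = \eps$ (using \eqref{t_eps}) gives exactly $\bm\beta_\eps = (\mathbf{i} + \eps(\mathbf{t}_{\mu_0}^{\mud} - \mathbf{i}))_\#\mu_0 = \mu_{1,\eps}$ by \eqref{mu_n}. For the inductive step, assume $\mu_{n-1,\eps} = \bm\beta_{1 - e^{-(n-1)t_\eps}}$, and write $s_{n-1} := 1 - e^{-(n-1)t_\eps}$. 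The crucial observation is that since $\bm\beta$ is a constant-speed geodesic from $\mu_0$ to $\mud$, the map $\mathbf{t}_{\bm\beta_{s_{n-1}}}^{\mud}$ can be expressed in terms of $\mathbf{t}_{\mu_0}^{\mud}$: one expects $\mathbf{t}_{\bm\beta_{s}}^{\mud} \circ \big((1-s)\mathbf{i} + s\,\mathbf{t}_{\mu_0}^{\mud}\big) = \mathbf{t}_{\mu_0}^{\mud}$ $\mu_0$-a.e., i.e., following the geodesic the rest of the way to $\mud$ from the intermediate point is a rescaling of the displacement $\mathbf{t}_{\mu_0}^{\mud} - \mathbf{i}$. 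Concretely, for a point $x$ with image $x_{s} = (1-s)x + s\,\mathbf{t}_{\mu_0}^{\mud}(x)$ on the geodesic, the optimal map from $\bm\beta_s$ to $\mud$ sends $x_s \mapsto \mathbf{t}_{\mu_0}^{\mud}(x)$, so $\mathbf{t}_{\bm\beta_s}^{\mud}(x_s) - x_s = (1-s)(\mathbf{t}_{\mu_0}^{\mud}(x) - x)$. Then the Euler update \eqref{mu_n} applied to $\mu_{n-1,\eps} = \bm\beta_{s_{n-1}}$ becomes, in terms of $\mu_0$,
\[
\mu_{n,\eps} = \Big(\mathbf{i} + \big(s_{n-1} + \eps(1 - s_{n-1})\big)(\mathbf{t}_{\mu_0}^{\mud} - \mathbf{i})\Big)_\# \mu_0 = \bm\beta_{s_{n-1} + \eps(1-s_{n-1})},
\]
and a direct computation using \eqref{t_eps} shows $s_{n-1} + \eps(1 - s_{n-1}) = 1 - e^{-(n-1)t_\eps} + (1 - e^{-t_\eps})e^{-(n-1)t_\eps} = 1 - e^{-nt_\eps}$, completing the induction.

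The main obstacle is establishing the composition identity for $\mathbf{t}_{\bm\beta_s}^{\mud}$ rigorously—i.e., that the optimal transport map from an interior point of the geodesic to $\mud$ is the appropriately rescaled displacement map. This should follow from the structure theory of constant-speed geodesics in $\Pc_2(\R^d)$ (e.g., \cite[Theorem 7.2.2]{Ambrosio08} and the displacement-interpolation results in \cite[Chapter 7]{Ambrosio08}), together with the fact that $\bm\beta_s \in \Pc_2^r(\R^d)$ for $s \in [0,1)$ (Lemma~\ref{lem:has densities}), which guarantees uniqueness of the optimal map from $\bm\beta_s$ to $\mud$. An alternative route that sidesteps this identity is to invoke Lemma~\ref{lem:mu_n is optimal} to conclude $W_2(\mu_{n-1,\eps}, \mu_{n,\eps}) = \eps\, W_2(\mu_{n-1,\eps}, \mud)$ and combine it with the contraction estimate $W_2(\mu_{n,\eps}, \mud) = (1-\eps) W_2(\mu_{n-1,\eps},\mud)$ (itself a consequence of \eqref{W_2=epsW_2} and the geodesic property) to pin down $\mu_{n,\eps}$ on the geodesic $\bm\beta$ by a concatenation argument, then match the parameter via the telescoping computation above.
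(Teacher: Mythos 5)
Your proof takes essentially the same route as the paper: induction on $n$, with the key step being the composition identity for optimal maps along the geodesic $\bm\beta$, followed by the same telescoping computation $s_{n-1} + \eps(1-s_{n-1}) = 1 - e^{-nt_\eps}$. The composition identity you flag as the ``main obstacle'' is exactly \cite[Lemma 7.2.1]{Ambrosio08}, which the paper invokes directly (giving $\mathbf{t}_{\mu_0}^{\mud} = \mathbf{t}_{\mu_{k-1,\eps}}^{\mud}\circ\mathbf{t}_{\mu_0}^{\mu_{k-1,\eps}}$ whenever $\mu_{k-1,\eps}$ sits on the geodesic from $\mu_0$ to $\mud$); the paper also carries the formula $\mathbf{t}_{\mu_0}^{\mu_{n,\eps}} = \mathbf{i}+(1-e^{-nt_\eps})(\mathbf{t}_{\mu_0}^{\mud}-\mathbf{i})$ as part of the inductive hypothesis, which makes the composition step a matter of direct substitution rather than something to re-derive each time, but this is presentational rather than conceptual. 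Your alternative route via the distance identities $W_2(\mu_{n-1,\eps},\mu_{n,\eps}) = \eps\,W_2(\mu_{n-1,\eps},\mud)$ and $W_2(\mu_{n,\eps},\mud)=(1-\eps)W_2(\mu_{n-1,\eps},\mud)$ can also be made to work, but to ``pin down'' $\mu_{n,\eps}$ you still need uniqueness of the constant-speed geodesic from $\mu_{n-1,\eps}$ to $\mud$, which again comes from $\mu_{n-1,\eps}\in\Pc_2^r(\R^d)$ and the same chapter of \cite{Ambrosio08}---so it is not a genuine shortcut.
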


\begin{proof}
We will prove by induction that 
\begin{eqnarray}\label{we claim}
\mu_{n,\eps} = \bm\beta_{1-e^{-n t_\eps}} = \bm\mu^*_{n t_\eps}\quad\hbox{and}\quad \mathbf{t}_{\mu_{0}}^{\mu_{n, \eps}} = \mathbf{i} + (1-e^{-nt_{\eps}})(\mathbf{t}_{\mu_{0}}^{\mud} - \mathbf{i})\qquad \forall n\in\N,
\end{eqnarray}
By \eqref{mu_n} with $n=1$, $\mu_{0,\eps}=\mu_0$, and the definitions of $\bm\beta$ and $\bm\mu^*$, we see that $\mu_{1, \eps} = \bm\beta_{\eps} = \bm\beta_{1-e^{-t_{\eps}}} = \bm\mu^*_{t_\eps}$. Also, by Lemma~\ref{lem:mu_n is optimal} and $\mu_{0,\eps}=\mu_0$, we have $\mathbf t_{\mu_{0}}^{\mu_{1, \eps}} = \mathbf{i} + \eps (\mathbf{t}_{\mu_{0}}^{\mud} - \mathbf{i})=\mathbf{i} + (1-e^{-t_{\eps}})(\mathbf{t}_{\mu_{0}}^{\mud} - \mathbf{i})$. That is, \eqref{we claim} holds for $n=1$. Now, suppose that \eqref{we claim} holds for some $n=k-1\in \N$, i.e., 
\begin{eqnarray}\label{induction}
\mu_{k-1,\eps} = \bm\beta_{1-e^{-(k-1)t_\eps}} = \bm\mu^*_{(k-1) t_\eps}\quad\hbox{and}\quad \mathbf{t}_{\mu_{0}}^{\mu_{k-1, \eps}} = \mathbf{i} + (1-e^{-(k-1)t_{\eps}})(\mathbf{t}_{\mu_{0}}^{\mud} - \mathbf{i}), 
\end{eqnarray}
and we aim to show that \eqref{we claim} remains true for $n=k$. By \eqref{mu_n}, $\mu_{k-1,\eps}={\mathbf t_{\mu_0}^{\mu_{k-1,\eps}}}_\#\mu_0$, and the composition rule for pushforward measures (see e.g., \cite[(5.2.4)]{Ambrosio08}), 
\begin{eqnarray*}
        \begin{aligned}
            \mu_{k, \eps} &= \big(\mathbf{i} + \eps (\mathbf{t}_{\mu_{k-1,\eps}}^{\mud} - \mathbf{i})\big)_{\#}\mu_{k-1, \eps} = \big(\mathbf{i} + \eps (\mathbf{t}_{\mu_{k-1,\eps}}^{\mud} - \mathbf{i})\big)_{\#} ({\mathbf{t}_{0}^{\mu_{k-1, \eps}}}_{\#} \mu_{0}) \\
            &= \Big(\mathbf{t}_{\mu_0}^{\mu_{k-1, \eps}} + \eps \big(\mathbf{t}_{\mu_{k-1,\eps}}^{\mud} \circ \mathbf{t}_{\mu_0}^{\mu_{k-1, \eps}} - \mathbf{t}_{\mu_0}^{\mu_{k-1, \eps}}\big)\Big)_{\#} \mu_{0}.         
            \end{aligned}
\end{eqnarray*}
As $\mu_{k-1,\eps}$ lies on the constant-speed geodesic $\bm\beta$ from $\mu_0$ to $\mud$ (by the first relation in \eqref{induction}), \cite[Lemma 7.2.1]{Ambrosio08} implies that $\mathbf t_{\mu_0}^{\mud}=\mathbf{t}_{\mu_{k-1,\eps}}^{\mud} \circ \mathbf{t}_{\mu_0}^{\mu_{k-1, \eps}}$. Plugging the second relation in \eqref{induction}, $\eps = 1 - e^{-t_{\eps}}$, and $\mathbf t_{\mu_0}^{\mud}=\mathbf{t}_{\mu_{k-1,\eps}}^{\mud} \circ \mathbf{t}_{\mu_0}^{\mu_{k-1, \eps}}$ into the previous equality, we get
\begin{eqnarray}
\begin{aligned}
 \mu_{k, \eps} &= \bigg(\mathbf{i} + (1-e^{-(k-1)t_{\eps}})(\mathbf{t}_{\mu_0}^{\mud} - \mathbf{i}) \notag\\
 &\hspace{0.6in}+ (1 - e^{-t_{\eps}})\left(\mathbf{t}_{\mu_0}^{\mud} - \Big(\mathbf{i} + (1-e^{-(k-1)t_{\eps}})(\mathbf{t}_{\mu_0}^{\mud} - \mathbf{i})\Big)\right)\bigg)_{\#} \mu_{0} \notag\\
                        &= \Big(e^{-kt_{\eps}} \mathbf{i} + (1-e^{-kt_{\eps}})\mathbf{t}_{\mu_0}^{\mud}\Big)_{\#}\mu_{0}= \bm\mu^*_{kt_{\eps}}=\bm\beta_{1-e^{-k t_\eps}},\label{we claim 1}
\end{aligned}
\end{eqnarray}    
where the second last equality is due to \eqref{mu^*'}. Now, we see that $\mu_0$, $\mu_{k-1,\eps}$, and $\mu_{k,\eps}$ are all on the the constant-speed geodesic $\bm\beta$ connecting $\mu_0$ and $\mud$, simply at the different time points 0, $1-e^{-(k-1) t_\eps}$, and $1-e^{-k t_\eps}$, respectively. Hence, by a proper change of time, there exists a constant-speed geodesic that connects $\mu_0$ and $\mu_{k,\eps}$ and passes through $\mu_{k-1,\eps}$. By applying \cite[Lemma 7.2.1]{Ambrosio08} to this constant-speed geodesic, we have 
\begin{eqnarray}
\begin{aligned}
\mathbf t_{\mu_0}^{\mu_{k,\eps}} &= \mathbf t_{\mu_{k-1,\eps}}^{\mu_{k,\eps}}\circ \mathbf t_{\mu_0}^{\mu_{k-1,\eps}} =\mathbf t_{\mu_0}^{\mu_{k-1,\eps}}+ \eps (\mathbf{t}_{\mu_{k-1, \eps}}^{\mud}\circ \mathbf t_{\mu_0}^{\mu_{k-1,\eps}} - \mathbf t_{\mu_0}^{\mu_{k-1,\eps}})\notag\\
&=  \mathbf{i} + (1-e^{-(k-1)t_{\eps}})(\mathbf{t}_{\mu_{0}}^{\mud} - \mathbf{i}) +(1-e^{-t_\eps}) (\mathbf t_{\mu_0}^{\mud} -  \mathbf{i} - (1-e^{-(k-1)t_{\eps}})(\mathbf{t}_{\mu_{0}}^{\mud} - \mathbf{i}))\notag\\
&=e^{-kt_{\eps}} \mathbf{i} + (1-e^{-kt_{\eps}})\mathbf{t}_{\mu_0}^{\mud} = \mathbf{i} + (1-e^{-k t_{\eps}})(\mathbf{t}_{\mu_{0}}^{\mud} - \mathbf{i}),\label{we claim 2}
\end{aligned}
\end{eqnarray}
where the second equality is due to $\mathbf t_{\mu_{k-1, \eps}}^{\mu_{k, \eps}} = \mathbf{i} + \eps (\mathbf{t}_{\mu_{k-1, \eps}}^{\mud} - \mathbf{i})$ (Lemma~\ref{lem:mu_n is optimal}) and the third equality follows from plugging the second relation in \eqref{induction}, $\eps = 1 - e^{-t_{\eps}}$, and $\mathbf t_{\mu_0}^{\mud}=\mathbf{t}_{\mu_{k-1,\eps}}^{\mud} \circ \mathbf{t}_{\mu_0}^{\mu_{k-1, \eps}}$ into the first line. By \eqref{we claim 1} and \eqref{we claim 2}, we see that \eqref{we claim} holds for $n=k$, as desired. 
\end{proof}

Given $0<\eps<1$, we now define a piecewise-constant flow of measures $\bm\mu^\eps:[0,\infty)\to \Pc_2(\R^d)$ by 
\begin{eqnarray} \label{Def: W2 Euler peicewise step}
\bm\mu^{\eps}_t := \mu_{n, \eps}\quad \hbox{if}\quad t\in [n \eps, (n+1) \eps),\qquad \forall n\in\N\cup\{0\}.
\end{eqnarray}
That is, $\bm\mu^\eps$ is the flow of measures obtained under our Euler scheme \eqref{Euler}. As $\eps\downarrow 0$, the next result shows that $\bm\mu^\eps$ converges to $\bm\mu^*$ defined in \eqref{mu^*'}. 

\begin{theorem} \label{Th: Convergence of Euler discretization}
For any $\mu_0\in\Pc_2^r(\R^d)$, 
       $ \lim_{\eps \downarrow 0} W_{2}(\bm\mu^{\eps}_t, \bm\mu^*_{t}) = 0$  for all $t\in[0,\infty)$.
\end{theorem}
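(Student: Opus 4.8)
The plan is to reduce the statement entirely to Proposition~\ref{prop:mu_n on beta}, which has already done the heavy lifting by identifying the Euler iterates with points on the curve $\bm\mu^*$. Fix $t\ge 0$. The case $t=0$ is immediate, since $\bm\mu^\eps_0=\mu_{0,\eps}=\mu_0=\bm\mu^*_0$, so assume $t>0$. For each $\eps\in(0,1)$, set $n=n(\eps):=\lfloor t/\eps\rfloor$, so that $t\in[n\eps,(n+1)\eps)$ and hence $\bm\mu^\eps_t=\mu_{n,\eps}$ by \eqref{Def: W2 Euler peicewise step}. By Proposition~\ref{prop:mu_n on beta}, $\mu_{n,\eps}=\bm\mu^*_{n t_\eps}$ with $t_\eps>0$ determined by $\eps=1-e^{-t_\eps}$. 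Thus the claim reduces to showing $W_2(\bm\mu^*_{n t_\eps},\bm\mu^*_t)\to 0$ as $\eps\downarrow 0$.

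Next I would show $n t_\eps\to t$ as $\eps\downarrow 0$. Write $n t_\eps=(n\eps)\cdot(t_\eps/\eps)$. Since $n=\lfloor t/\eps\rfloor$, we have $t-\eps< n\eps\le t$, so $n\eps\to t$; and by Remark~\ref{rem:t_eps}(ii), $t_\eps/\eps\to 1$ as $\eps\downarrow 0$. Therefore $n t_\eps\to t$.

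Finally I would invoke the closed-form distance along $\bm\mu^*$. From \eqref{Eq: mut is AC curve} (equivalently, combining \eqref{Def: mu t} with \eqref{Eq: s1 s2 constant speed geodesic formula}), one has, for all $s_1,s_2\ge 0$,
\[
W_2(\bm\mu^*_{s_1},\bm\mu^*_{s_2}) = \bigl|e^{-s_1}-e^{-s_2}\bigr|\,W_2(\mu_0,\mud),
\]
so $s\mapsto\bm\mu^*_s$ is $W_2$-continuous on $[0,\infty)$. Applying this with $s_1=n t_\eps$ and $s_2=t$ and using $n t_\eps\to t$ yields
\[
W_2(\bm\mu^\eps_t,\bm\mu^*_t) = \bigl|e^{-n t_\eps}-e^{-t}\bigr|\,W_2(\mu_0,\mud)\longrightarrow 0\quad\text{as }\eps\downarrow 0,
\]
which is the assertion.

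As for the main obstacle: all the substantive content is already contained in Lemma~\ref{lem:mu_n is optimal} and Proposition~\ref{prop:mu_n on beta}, so the remaining work is essentially bookkeeping. The only point requiring a little care is the passage to the limit in the product $n t_\eps$ — namely, observing that $n=n(\eps)$ is sandwiched so that $n\eps\to t$, and combining this with the elementary asymptotics $t_\eps/\eps\to 1$ from Remark~\ref{rem:t_eps}. Everything else is a direct substitution into the explicit $W_2$-distance formula for the (time-changed constant-speed geodesic) curve $\bm\mu^*$.
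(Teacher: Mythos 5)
Your proof is correct and takes essentially the same approach as the paper: both identify $\bm\mu^\eps_t=\mu_{n,\eps}=\bm\mu^*_{n t_\eps}$ via Proposition~\ref{prop:mu_n on beta} and then pass to the limit using the explicit distance formula along $\bm\mu^*$ and the asymptotics of $t_\eps$ from Remark~\ref{rem:t_eps}. The only (cosmetic) difference is that you first establish $n t_\eps \to t$ and invoke $W_2$-continuity of $s\mapsto\bm\mu^*_s$, whereas the paper bounds $W_2(\bm\mu^\eps_t,\bm\mu^*_t)$ directly by an explicit vanishing quantity; your version is slightly cleaner but not a different route.
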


\begin{proof}
For any $t\in [0,\infty)$, there exists some $n\in\N\cup\{0\}$ such that $t \in [n\eps, (n+1)\eps)$. Then, 
\begin{eqnarray*}
W_{2}(\bm\mu^{\eps}_t, \bm\mu^*_{t}) = W_{2}(\mu_{n, \eps}, \bm\mu^*_{t}) = W_{2}(\bm\mu^*_{nt_{\eps}}, \bm\mu^*_{t}) =   \bigg(\int_{\min\{nt_{\eps},t\}}^{\max\{nt_{\eps},t\}} e^{-s}ds\bigg)  W_{2}(\mu_{0}, \mud), 
\end{eqnarray*}
where the second equality follows from Proposition~\ref{prop:mu_n on beta} (with $t_\eps>0$ defined as in \eqref{t_eps}) and the last equality results from a calculation similar to \eqref{Eq: mut is AC curve}. As $t_\eps>\eps$ (by Remark~\ref{rem:t_eps}) and $t \in [n\eps, (n+1)\eps)$, we have $n\eps \le \min\{nt_{\eps},t\} \le \max\{nt_{\eps},t\}\le (n+1)t_\eps$. Hence, the previous equality implies
\begin{eqnarray*}
\begin{aligned}
W_{2}(\bm\mu^{\eps}_t, \bm\mu^*_{t}) &\le \bigg(\int_{n \eps}^{(n+1)t_{\eps}} e^{-s}ds\bigg)  W_{2}(\mu_{0}, \mud) = \left(e^{-n \eps} - e^{-(n+1)t_{\eps}}\right) W_{2}(\mu_{0}, \mud)\\
& = e^{-(n+1) t_{\eps}} \left(e^{n(t_{\eps}-\eps)+t_\eps}-1\right) W_{2}(\mu_{0}, \mud) \\
&\le \left(e^{n\eps\big(\frac{t_{\eps}}{\eps}-1\big)+t_\eps}-1\right) W_{2}(\mu_{0}, \mud) \le \left(e^{t\big(\frac{t_{\eps}}{\eps}-1\big)+t_\eps}-1\right) W_{2}(\mu_{0}, \mud),
\end{aligned}
\end{eqnarray*}
where the last inequality follows from $t\ge n\eps$ and $t_\eps>\eps$. As $\eps\downarrow 0$, since $t_\eps\downarrow 0$ and $\frac{t_\eps}{\eps}\to 1$ (by Remark~\ref{rem:t_eps}), the right-hand side above vanishes, which entails $W_{2}(\bm\mu^{\eps}_t, \bm\mu^*_{t})\to 0$.   
\end{proof}

Recall from Proposition~\ref{prop:unique mu^Y_t} that $\bm\mu^*$ is the unique flow of probability measures induced by any solution $Y$ to ODE \eqref{new Y} under suitable regularity. Hence, Theorem~\ref{Th: Convergence of Euler discretization} stipulates that our Euler scheme \eqref{Euler} does converge to the right limit, recovering the time-marginal laws of ODE \eqref{new Y}.



\section{Algorithms}\label{sec:W2-FE}
To simulate the distribution-dependent ODE \eqref{new Y}, we follow the Euler scheme \eqref{Euler} in designing Algorithm~\ref{Alg: Simulate W2 GF ODE} (called {W2-FE}, where ``{FE}'' means ``forward Euler''). The algorithm uses three deep neural networks. One network trains a ``generator'' $G_{\theta} : \mathbb{R}^{d} \rightarrow \mathbb{R}^{d}$ to properly represent the evolving time-marginal laws of ODE \eqref{new Y}. The other two networks, by following \cite[Algorithm 1]{Seguy18}, estimate the Kantorovich potential from the distribution induced by $G_\theta$ to the data distribution $\mud$ (denoted by $\phi_{w}$) as well as its $c$-transform (denoted by $\psi_{v}$). 

  
 
How $G_\theta$ is updated demands a closer examination. Based on the estimated Kantorovich potential $\phi_w$, $G_\theta$ is trained via the Euler scheme \eqref{Euler}. Given a sample $z_i$ from a prior distribution, the point $y_i = G_\theta(z_i)$ represents a sample from $\mu^{Y^\eps_{n}}$. A forward Euler step yields a new point $\zeta_i$, which represents a sample from $\mu^{Y^\eps_{n+1}}$. The generator's task is to produce samples indistinguishable from the new points $\{\zeta_i\}$---or more precisely, to learn the distribution $\mu^{Y^\eps_{n+1}}$, represented by the points $\{\zeta_i\}$. To this end, we fix the points $\{\zeta_i\}$ and update $G_\theta$ by descending the mean square error (MSE) between $\{G_\theta(z_i)\}$ and $\{\zeta_i\}$ up to $K\in \N$ times. Let us stress that throughout the $K$ updates of $G_\theta$, the points $\{\zeta_i\}$ are kept unchanged. This sets us apart from the standard implementation of stochastic gradient descent (SGD), but for a good reason: as our goal is to learn the distribution represented by $\{\zeta_i\}$, it is important to keep $\{\zeta_i\}$ unchanged for the eventual $G_\theta$ to more accurately represent $\mu^{Y^\eps_{n+1}}$, such that the (discretized) ODE \eqref{Euler} is more closely followed.


How we update $G_\theta$ is reminiscent of {\it persistent training} in Fischetti et al.\ \cite{Fischetti18}, a technique consisting of reusing the same minibatch for $K$ consecutive SGD iterations. As shown in \cite[Figure 1]{Fischetti18}, a persistency level of five (i.e., $K = 5$) allows much faster convergence on the CIFAR-10 dataset. Our numerical examples in Section~\ref{sec:examples} also indicate a similar effect of an enlarged persistency level. 

\begin{algorithm}
\caption{W2-FE, our proposed algorithm}
\label{Alg: Simulate W2 GF ODE}
\begin{algorithmic}
\Require Input measures $\mu^{Z}, \mud$, batch sizes $m$, learning rates $\gamma_{g}, \gamma_{d}$, regularizer $\lambda$, time step $\Delta t$, persistency level $K$, Kantorovich potential pair $(\psi_{v}, \phi_{w})$ parameterized as deep neural networks, and generator $G_{\theta}$ parameterized as a deep neural network. 
\For{Number of training epochs}
    \For{Number of updates} \Comment{Kantorovich potential update}
        \State Sample a batch $\{ x_{i} \}$ from $\mud$ and a batch $\{ z_{i} \}$ from $\mu^{Z}$ 
        \State $L_{D} \gets \frac{1}{m} \sum_{i=1}^{m} \left[ \phi_{w}(G_{\theta}(z_{i})) + \psi_{v}(x_{i}) - \lambda \left( \phi_{w}(G_{\theta}(z_{i})) + \psi_{v}(x_{i}) - \frac{1}{2} ||G_{\theta}(z_{i}) - x_{i}||_{2}^{2} \right)_{+}  \right]$ 
        \State $w \gets w + \gamma_{d} \nabla_{w} {L}_{D}$ 
        \State $v \gets v + \gamma_{d} \nabla_{v} {L}_{D}$
    \EndFor
    \State Sample a batch $\{ z_{i} \}$ from $\mu^{Z}$ 
    \State Generate $y_{i} \gets G_{\theta}(z_{i})$
    \State Generate $\zeta_{i} \gets y_{i} - \Delta t \nabla_{} \phi_w(y_{i}) $
    \For{$K$ generator updates} \Comment{Generator update by persistent training}
        \State $\theta \gets \theta - \frac{\gamma_{g}}{m} \nabla_{\theta} \sum_{i=1}^m |\zeta_i - G_{\theta}(z_{i}) |^2$
    \EndFor
\EndFor
\end{algorithmic}
\end{algorithm}


\subsection{Connections to Generative Adversarial Networks (GANs)}
GANs of Goodfellow et al.\ \cite{Goodfellow14} also approach \eqref{to solve}---yet with the $W_2$ distance replaced by the Jensen-Shannon divergence (JSD). The key idea of \cite{Goodfellow14} is to minimize the JSD loss by a min-max game between a generator and a discriminator. While the GAN algorithm had been impactful on artificial intelligence, it was well-known for its instability; see e.g., \cite{Goodfellow16} and \cite{Metz17}. A popular solution is to replace the distance function JSD by the first-order Wasserstein distance ($W_1$). This yields the Wasserstein GAN (WGAN) algorithm \cite{Arjovsky17}, which minimizes the $W_1$ loss by the same min-max game idea in \cite{Goodfellow14}. As shown in \cite[Figure 2]{Arjovsky17}, WGAN performs well where early versions of GANs fare poorly. 

If one further replaces the distance function $W_1$ by $W_2$, the problem becomes exactly \eqref{to solve}. Note that \eqref{to solve} has received less attention in the literature (relative to the $W_1$ counterpart), with only a few exceptions (e.g., \cite{Leygonie19} and \cite{Huang-ISIT-23}). In particular, to solve \eqref{to solve}, \cite{Leygonie19} develops an algorithm (called W2-GAN) by following the same min-max game idea in \cite{Goodfellow14}. While we solve \eqref{to solve} by a different approach (i.e., the gradient-flow perspective), it is interesting to note that our algorithm W2-FE covers W2-GAN as a special case. For $K = 1$ in W2-FE, the generator update reduces to standard SGD without persistent training, which in turn equates W2-FE with  W2-GAN.

\begin{proposition} \label{prop:equivalence}
W2-GAN (i.e., \cite[Algorithm 1]{Leygonie19}) is equivalent to W2-FE (i.e., Algorithm~\ref{Alg: Simulate W2 GF ODE} above) with $K=1$, up to an adjustment of learning rates. 
\end{proposition}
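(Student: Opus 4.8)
The plan is to compare the two procedures component by component. Both W2-GAN of \cite{Leygonie19} and W2-FE with $K=1$ alternate a Kantorovich-potential (``discriminator'') update with a generator update, using the same three networks. First I would observe that the potential update is literally identical in the two algorithms: W2-GAN estimates the Kantorovich pair $(\phi,\phi^c)$ of \eqref{Eq: W2 Duality Formula} by the penalized dual scheme of \cite[Algorithm 1]{Seguy18}, which is exactly the ascent on the loss $L_D$ carried out in the inner loop of Algorithm~\ref{Alg: Simulate W2 GF ODE} (the learning rate $\gamma_d$ and regularizer $\lambda$ playing the same roles). Hence only the generator updates need to be matched, and the proposition reduces to showing they coincide up to a rescaling of $\gamma_g$; an induction on the number of updates, started from a common initialization and fed the same minibatches, then yields identical iterate sequences.

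The key step is a one-line gradient computation. In W2-FE with $K=1$, one samples $\{z_i\}$ from $\mu^{Z}$, sets $y_i=G_\theta(z_i)$, freezes the targets $\zeta_i=y_i-\Delta t\,\nabla\phi_w(y_i)$, and performs a \emph{single} descent step on $\theta\mapsto\frac1m\sum_i|\zeta_i-G_\theta(z_i)|^2$. Since the $\zeta_i$ are detached from the computational graph and, for $K=1$, $\theta$ has not yet changed so that $G_\theta(z_i)=y_i$ at the moment of the update, the chain rule gives
\[
\nabla_\theta\,\frac1m\sum_{i=1}^m|\zeta_i-G_\theta(z_i)|^2=\frac{2}{m}\sum_{i=1}^m\big(\partial_\theta G_\theta(z_i)\big)^{\!\top}\big(G_\theta(z_i)-\zeta_i\big)=\frac{2\Delta t}{m}\sum_{i=1}^m\big(\partial_\theta G_\theta(z_i)\big)^{\!\top}\nabla\phi_w(y_i).
\]
On the other hand, W2-GAN obtains its generator step by descending the $\mu$-marginal term of the dual objective in \eqref{Eq: W2 Duality Formula}, i.e.\ $\theta\mapsto\frac1m\sum_i\phi_w(G_\theta(z_i))$, whose gradient equals $\frac1m\sum_i\big(\partial_\theta G_\theta(z_i)\big)^{\!\top}\nabla\phi_w(y_i)$. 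Thus the W2-FE generator gradient is exactly $2\Delta t$ times the W2-GAN generator gradient, evaluated at the same $\theta$ and the same batch.

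Combining the two observations, one iteration of W2-FE with $K=1$ and generator learning rate $\gamma_g$ performs precisely the parameter update of one iteration of W2-GAN with generator learning rate $2\Delta t\,\gamma_g$ (the potential learning rates being left unchanged, or rescaled by a common constant if \cite{Leygonie19} normalizes the quadratic cost by $\tfrac12$), and the discriminator updates already coincide; the claimed equivalence ``up to an adjustment of learning rates'' follows by induction. I expect the main obstacle to be purely expository rather than technical: one must pin down exactly what the generator and discriminator steps of \cite[Algorithm 1]{Leygonie19} are, and verify that its discriminator objective is the penalized dual encoded in $L_D$ and its generator objective is $\frac1m\sum_i\phi_w(G_\theta(z_i))$ up to an overall constant. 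Once this identification is in place, the only substantive point is that $\zeta_i$ must be treated as a stop-gradient in the MSE loss, since differentiating through it would add spurious terms and break the equivalence.
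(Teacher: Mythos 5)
Your proposal is correct and follows essentially the same route as the paper: you match the discriminator updates by inspection and then show via the chain rule that the $K=1$ generator step on the frozen-target MSE is exactly $2\Delta t$ times the W2-GAN generator step on $\frac1m\sum_i\phi_w(G_\theta(z_i))$, absorbed into the learning rate. Your remark that $\zeta_i$ must be detached (stop-gradient) is a point the paper leaves implicit, and the closing induction on iterates is a natural formalization of ``equivalent up to learning rates,'' but neither changes the substance of the argument.
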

\begin{proof}
As W2-GAN and W2-FE update the Kantorovich potential pair $(\phi_{w}, \psi_{v})$ in the same way, it suffices to show that their generator updates are equivalent under $K=1$. Observe that with $K=1$, 
\begin{eqnarray}\label{equi. calc.}
\begin{split}
    \nabla_{\theta} \frac{1}{m} \sum_{i=1}^{m} |\zeta_{i} - G_{\theta}(z_{i})|^{2} &= -\frac{2}{m} \sum_{i = 1}^{m} (\zeta_{i} - G_{\theta}(z_{i})) \nabla_{\theta}G_{\theta}(z_{i})\\
    & = \frac{2}{m} \sum_{i = 1}^{m} \Delta t \nabla\phi_{w}(G_{\theta}(z_{i})) \nabla_{\theta}G_{\theta}(z_{i}) =  2  \Delta t \nabla_\theta \bigg(\frac{1}{m}  \sum_{i = 1}^{m} \phi_{w}(G_{\theta}(z_{i}))\bigg),
\end{split}
\end{eqnarray}
where the second equality stems from $\zeta_{i} = G_{\theta}(z_{i}) - \Delta t \nabla \phi_{w}(G_{\theta}(z_{i}))$, due to the two lines above the generator update in W2-FE. Thus, the generator update in W2-FE takes the form
    $\theta \gets \theta - \gamma_{g} 2 \Delta t \nabla_{\theta} \frac{1}{m}\sum_{i=1}^{m} \phi_w(G_{\theta}(z_{i}))$,
the same as that in W2-GAN with learning rate $\gamma_{g} 2 \Delta t$. 
\end{proof}

It is tempting to think that if we carry out the generator update $K>1$ consecutive times in W2-GAN, W2-GAN will be equivalent to W2-FE with the same $K>1$. In fact, when $G_\theta$ is updated for the second time in W2-FE, the second equality in \eqref{equi. calc.} no longer holds, as ``$\zeta_{i} = G_{\theta}(z_{i}) - \Delta t \nabla \phi_{w}(G_{\theta}(z_{i}))$'' is true only when $G_\theta$ has not been updated. 
That is, the generator updates in W2-FE and W2-GAN coincide only when $K=1$, and can be quite different for $K>1$.





\section{Numerical Experiments}\label{sec:examples}
First, as in \cite[Section 3.1]{Metz17}, we consider learning a two-dimensional mixture of Gaussians arranged on a circle from an initially given Gaussian distribution. 
We carry out this task using Algorithm~\ref{Alg: Simulate W2 GF ODE} (W2-FE) with varying $K$ values and compare its performance with that of the refined WGAN algorithm in \cite{Petzka18} (called W1-LP), which is arguably one of the most well-performing and stable WGAN algorithms. Figure~\ref{fig: 2D Qualitative Results} shows the qualitative evolution of the models through time. 

Figure \ref{fig: 2D Quantitative Results}, on the other hand, presents the actual $W_1$ and $W_2$ losses through time. We see that W1-LP converges to the same loss level as W2-FE and achieves this faster than W2-FE with $K=1$ (i.e., without persistent training). However, W2-FE with a higher $K$ value converges much faster than W1-LP in both number of epochs and wall-clock time. With $K=10$, W2-FE achieves convergence within $100$ training epochs, while W1-LP needs more than $600$ training epochs. 



\begin{figure}
\centering
\includegraphics[width=15cm]{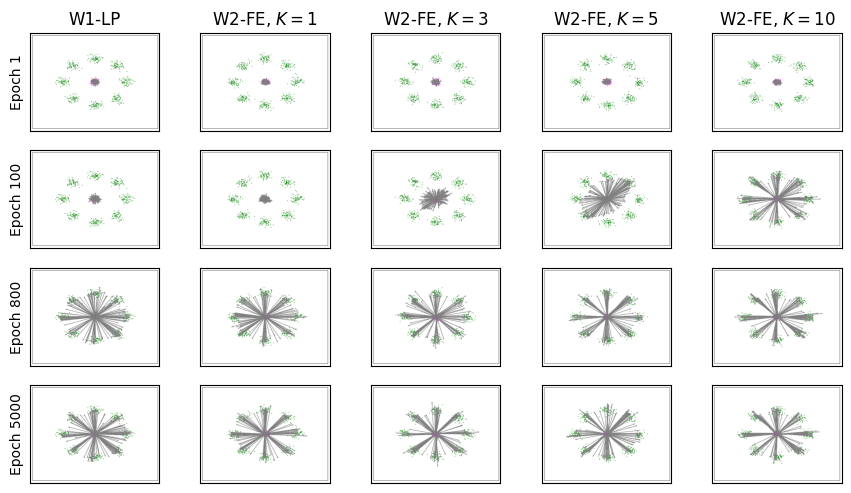}
\caption{\label{fig: 2D Qualitative Results} Qualitative evolution of learning a mixture of Gaussians on a circle (in green) from an initial Gaussian at the center (in magenta). Each grey arrow indicates how a sample from the initial Gaussian is transported toward the target distribution.}
\end{figure}

\begin{figure}
\centering
\includegraphics[width=15.5cm]{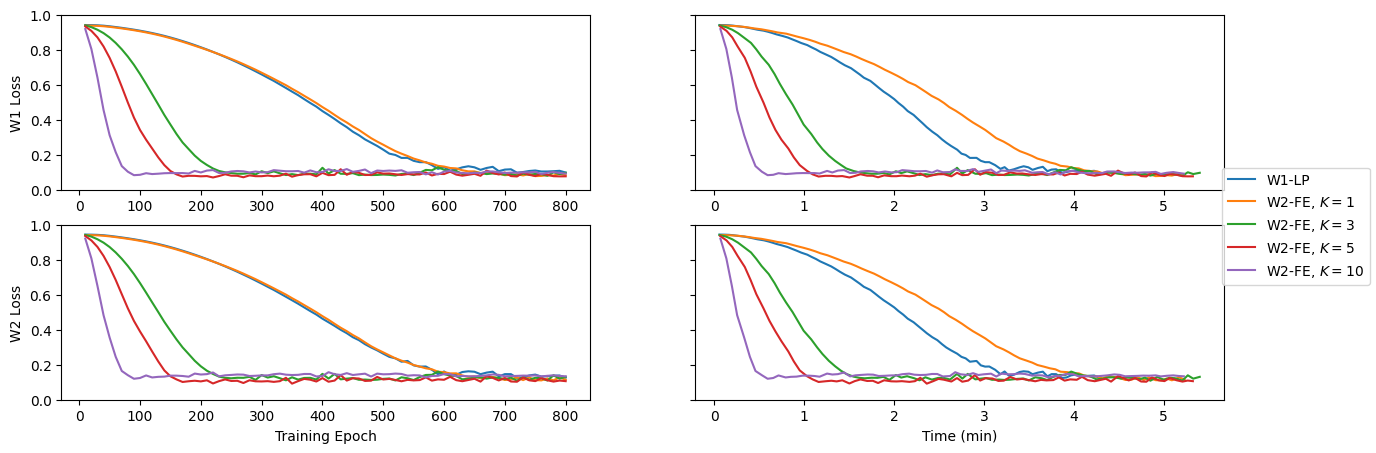}
\caption{\label{fig: 2D Quantitative Results} The first (resp.\ second) row plots the Wasserstein-$1$ (resp.\ Wasserstein-2) loss against training epoch and wall-clock time, respectively, for learning a mixture of Gaussians on a circle from an initial Gaussian at the center.}
\end{figure}

Next, we consider domain adaptation from the MNIST dataset to the USPS dataset. We carry out this task using Algorithm~\ref{Alg: Simulate W2 GF ODE} (W2-FE) with varying $K$ values and W1-LP in \cite{Petzka18}, respectively, and evaluate their performance every $100$ training epochs using a $1$-nearest neighbor ($1$-NN) classifier.\footnote{This is the same performance metric as in \cite{Seguy18}, but \cite{Seguy18} evaluate their models only once at the end of training.} 
Figure~\ref{fig: High-D USPS-MNIST 1-NN accuracy} presents the results. 
The performance of W2-FE with $K=1$ is on par with, if not better than, that of W1-LP: while their performance are very similar after epoch 4000, W2-FE with $K=1$ achieves a much higher accuracy rate early on (before epoch 4000). When we raise the persistency level to $K=3$, W2-FE converges significantly faster and consistently achieves a markedly higher accuracy rate. Indeed, it takes W2-FE with $K=1$ 8000 epochs to attain its ultimate accuracy rate, which is achieved by W2-FE with $K=3$ by epoch 3000; W2-FE with $K=3$ continues to improve after epoch 3000, yielding the best accuracy rate among all the models. Raising persistency level further to $K=5$ worsens the training quality, which might result from overfitting.


Note that the same domain adaptation experiment was carried out under various optimal-transport based methods; see \cite{Courty2017}, \cite{Seguy18}, \cite{Hamri2022}, among others. These methods directly learn the optimal transport map from the initial distribution $\mu_0$ to the target distribution $\mud$. Their goal is to change $\mu_0$ into $\mud$ {\it in one shot}, using the single optimal transport map learned. This is distinct from our gradient-flow approach that changes $\mu_0$ into $\mud$ {\it iteratively}, using numerous intermediate optimal transport maps estimated recursively. As shown in Table~\ref{tab: DA Comparison}, our algorithm (W2-FE with $K=3$) achieves the best accuracy rate 0.86, while the accuracy rate of the ``one-shot'' methods in \cite{Courty2017}, \cite{Seguy18}, and \cite{Hamri2022} ranges from 0.69 to 0.78. This suggests potential practical advantage of our iterative approach over ``one-shot'' methods and it is of interest as future research to examine this potential advantage in detail through extensive numerical experiments.

\begin{table}[h!]
    \centering
    \begin{tabular}{|c|c|}
        \hline
        Model & 1-NN classifier accuracy \\ \hline
        W2-FE (with $K=3$) & \textbf{0.8605} \\ \hline
        OT-GL \cite{Courty2017} & 0.6996 \\ \hline
        Barycentric-OT \cite{Seguy18} & 0.7792 \\ \hline
        HOT-DA \cite{Hamri2022} & 0.7639 \\ \hline    
    \end{tabular}
    \vspace{0.1in}
    \caption{1-NN accuracy for domain adaptation from MNIST to USPS dataset under different models. Values in the last three rows are quoted from \cite{Courty2017}, \cite{Seguy18}, and \cite{Hamri2022} directly.}
    \label{tab: DA Comparison}
\end{table}


\begin{figure}
    \centering
    \includegraphics[width=15cm]{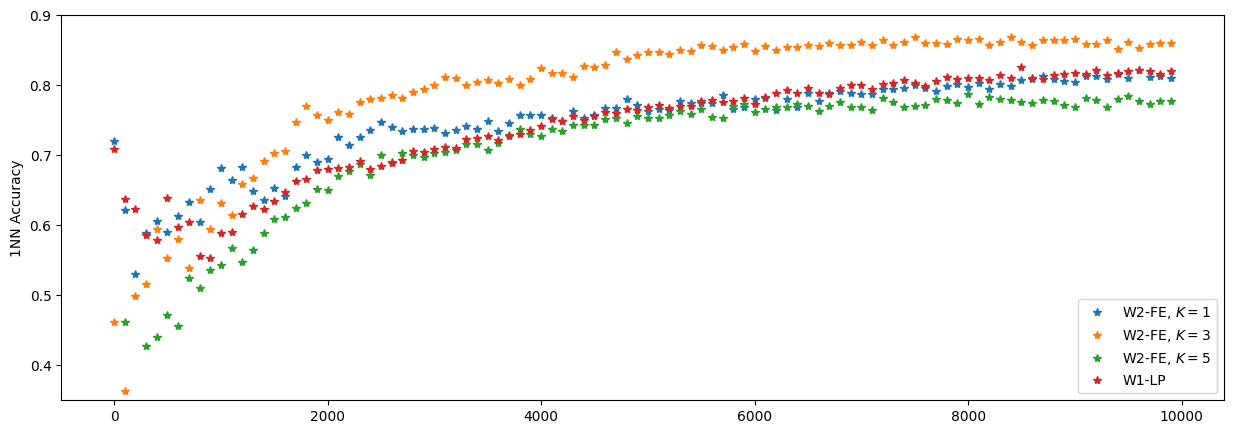}
    \caption{$1$-NN classifier accuracy against training epoch for domain adaptation from the MNIST dataset to the USPS dataset.}
    \label{fig: High-D USPS-MNIST 1-NN accuracy}
\end{figure}

\subsection{Discussion: Can We Add Persistent Training to WGAN?}
Given that persistent training can significantly improve the performance of W2-FE, it is tempting to believe that our benchmark W1-LP can be improved in the same way, i.e., by updating the generator $K\in\N$ times with the same minibatch. In fact, persistent training may {\it hurt} the performance of a WGAN algorithm, like W1-LP. To see this, recall the min-max formulation of WGAN, i.e., 
\begin{align*} 
    &\min_{\theta} W_{1}\big((G_{\theta})_{\#}\mu^{\bm{z}}, \mud\big)\notag =\min_{\theta} \max_{||\phi||_{\operatorname{Lip}}\leq1} \left\{ \mathbb{E}_{\bm{z}\sim \mu^{\bm{z}}} [\phi(G_{\theta}(\bm{z}))] - \mathbb{E}_{\bm{x}\sim\mud}[\phi(\bm{x})]  \right\},
\end{align*}
where $(G_{\theta})_{\#}\mu^{\bm z}$ is the probability measure on $\R^d$ induced by $G_\theta(\bm{z})$ with $\bm{z}\sim\mu^{\bm z}$, and the equality follows from the duality of $W_1$; see \cite[Section 2.2]{Gulrajani17} for an equivalent min-max setup. 
For any fixed $\theta$, by \cite[Particular Case 5.4 and Theorem 5.10(iii)]{Villani09}, the inside maximization over $||\phi||_{\operatorname{Lip}}\leq1$ admits a maximizer $\phi_\theta$ (i.e., an optimal discriminator in response to the generator $G_\theta$), whence we obtain
\begin{align} \label{min phi_theta}
    &\min_{\theta} W_{1}\big((G_{\theta})_{\#}\mu^{\bm z}, \mud\big)=\min_{\theta}  \left\{ \mathbb{E}_{\bm{z}\sim \mu^{\bm z}} [\phi_\theta(G_{\theta}(\bm{z}))] - \mathbb{E}_{\bm{x}\sim\mud}[\phi_\theta(\bm{x})]  \right\}.
\end{align}
Ideally, when updating $G_\theta$, one should follow the minimization \eqref{min phi_theta}, which takes into account the dependence on $\theta$ of the optimal discriminator $\phi_\theta$. The generator update in WGAN, however, assumes that $\phi$ is fixed (i.e., neglecting its dependence on $\theta$) and performs 
\begin{align} \label{min phi}
\min_{\theta}  \left\{ \mathbb{E}_{\bm{z}\sim \mu^{Z}} [\phi(G_{\theta}(\bm{z}))] -  \mathbb{E}_{\bm{x}\sim\mu_d}[\phi(\bm{x})]\right\} =\min_\theta \mathbb{E}_{\bm{z}\sim \mu^{Z}} [\phi(G_{\theta}(\bm{z}))],
\end{align}
which results in the generator update rule
\begin{equation} \label{Eq: WGAN Update}
    \theta \leftarrow \theta - \frac{\gamma_{g}}{m} \nabla_{\theta} \sum_{i=1}^{m} \phi(G_{\theta}(z_{i})). 
\end{equation}
Following \eqref{min phi}, but not \eqref{min phi_theta}, prevents WGAN from accurately minimizing $W_{1}\big((G_{\theta})_{\#}\mu^{\bm{z}}, \mu_d\big)$---after all, it is \eqref{min phi_theta} 
that equals $\min_{\theta} W_{1}\big((G_{\theta})_{\#}\mu^{\bm z}, \mud\big)$. 
This issue may be exacerbated by performing \eqref{Eq: WGAN Update} $K>1$ times with the same minibatch $\{ z_{i} \}$ (i.e., enforcing persistent training in WGAN). Indeed, in so doing, one may better approximate the value of \eqref{min phi}; but since the values of \eqref{min phi} and \eqref{min phi_theta} are in general distinct, getting closer to \eqref{min phi} may amount to moving further away from \eqref{min phi_theta}, the desired $\min_{\theta} W_{1}\big((G_{\theta})_{\#}\mu^{\bm z}, \mud\big)$. 
As a quick check, we redo the two-dimensional experiment using W1-LP with varying $K\in\N$ values. Figure~\ref{Fig: Persistent training for W1-LP} (the counterpart of Figure \ref{fig: 2D Quantitative Results} under W1-LP) shows that a larger $K$ not only fails to improve the eventual training quality (the smallest loss is achieved under $K=1$, i.e., no persistent training) but destabilizes the entire training process. 

\begin{figure}[h!]
    \centering
    \includegraphics[width=1\linewidth]{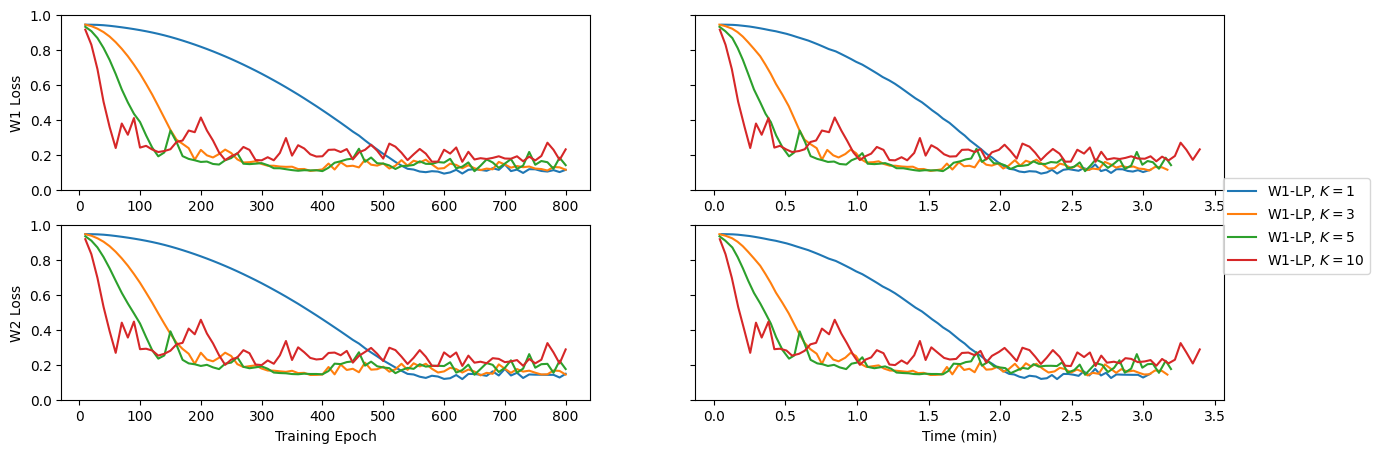}
    \caption{The first (resp.\ second) row plots the Wasserstein-$1$ (resp.\ Wasserstein-2) loss against training epoch and wall-clock time, respectively, for learning a mixture of Gaussians on a circle from an initial Gaussian at the center.}
    \label{Fig: Persistent training for W1-LP}
\end{figure}

Because of the above theoretical and numerical findings, we advise caution in the use of persistent training in any WGAN algorithm and choose to keep W1-LP as it is (without persistent training) when comparing it with W2-FE in our numerical experiments presented before.




\bibliographystyle{siam}
\bibliography{refs}

\end{document}